\newcommand{\norm}[1]{\lVert#1\rVert}
\newcommand{\R}{\mathbb{R}}
\DeclareMathOperator*{\argsort}{\arg\!\text{sort}}
\newcommand{\ip}[1]{\langle#1\rangle}
\newcommand{\infnorm}[1]{\norm{#1}_{\infty}}
\def\wh{\ensuremath\widehat}
\newcommand{\param}{\theta}
\newcommand{\parammat}{\Omega}
\newcommand{\fixm}{d}
\newcommand{\lrnm}{\rho}
\newcommand{\distreg}{D}
\newcommand{\reg}{\psi}
\newcommand{\bit}{\wh{\param}^{(i)}_t}
\newcommand{\pop}{\wh{\param}^{\textup{pop}}}
\newcommand{\bary}{\overline{\param}}
\newtheorem{theorem}{Theorem}
\newtheorem{lemma}[theorem]{Lemma}
\theoremstyle{definition}
\title{Learning Sample-Specific Models with\\Low-Rank Personalized Regression}
\author[]{Benjamin Lengerich$^\dag$}
\author[]{Bryon Aragam$^\ddag$}
\author[]{Eric P. Xing$^\dag$}
\affil[]{$^\dag$\emph{Carnegie Mellon University}, $^\ddag$\emph{University of Chicago}}
\begin{document}
\maketitle

{\let\thefootnote\relax\footnote{Contact: $^\dag$\texttt{\{blengeri,epxing\}@cs.cmu.edu}, $^\ddag$\texttt{bryon@chicagobooth.edu}}}

\begin{abstract}
Modern applications of machine learning (ML) deal with increasingly heterogeneous datasets comprised of data collected from overlapping latent subpopulations. 
As a result, traditional models trained over large datasets may fail to recognize highly predictive localized effects in favour of weakly predictive global patterns. 
This is a problem because localized effects are critical to developing individualized policies and treatment plans in applications ranging from precision medicine to advertising. 
To address this challenge, we propose to estimate sample-specific models that tailor inference and prediction at the individual level. 
In contrast to classical ML models that estimate a single, complex model (or only a few complex models), our approach produces a model personalized to each sample. 
These sample-specific models can be studied to understand subgroup dynamics that go beyond coarse-grained class labels. 
Crucially, our approach does not assume that relationships between samples (e.g. a similarity network) are known \emph{a priori}. 
Instead, we use unmodeled covariates to learn a latent distance metric over the samples. 
We apply this approach to financial, biomedical, and electoral data as well as simulated data and show that sample-specific models provide fine-grained interpretations of complicated phenomena without sacrificing predictive accuracy compared to state-of-the-art models such as deep neural networks.
\end{abstract}

\section{Introduction}
\label{sec:intro}

\begin{figure}[t]
\centering
\begin{subfigure}[b]{0.23\columnwidth}
\includegraphics[width=\textwidth]{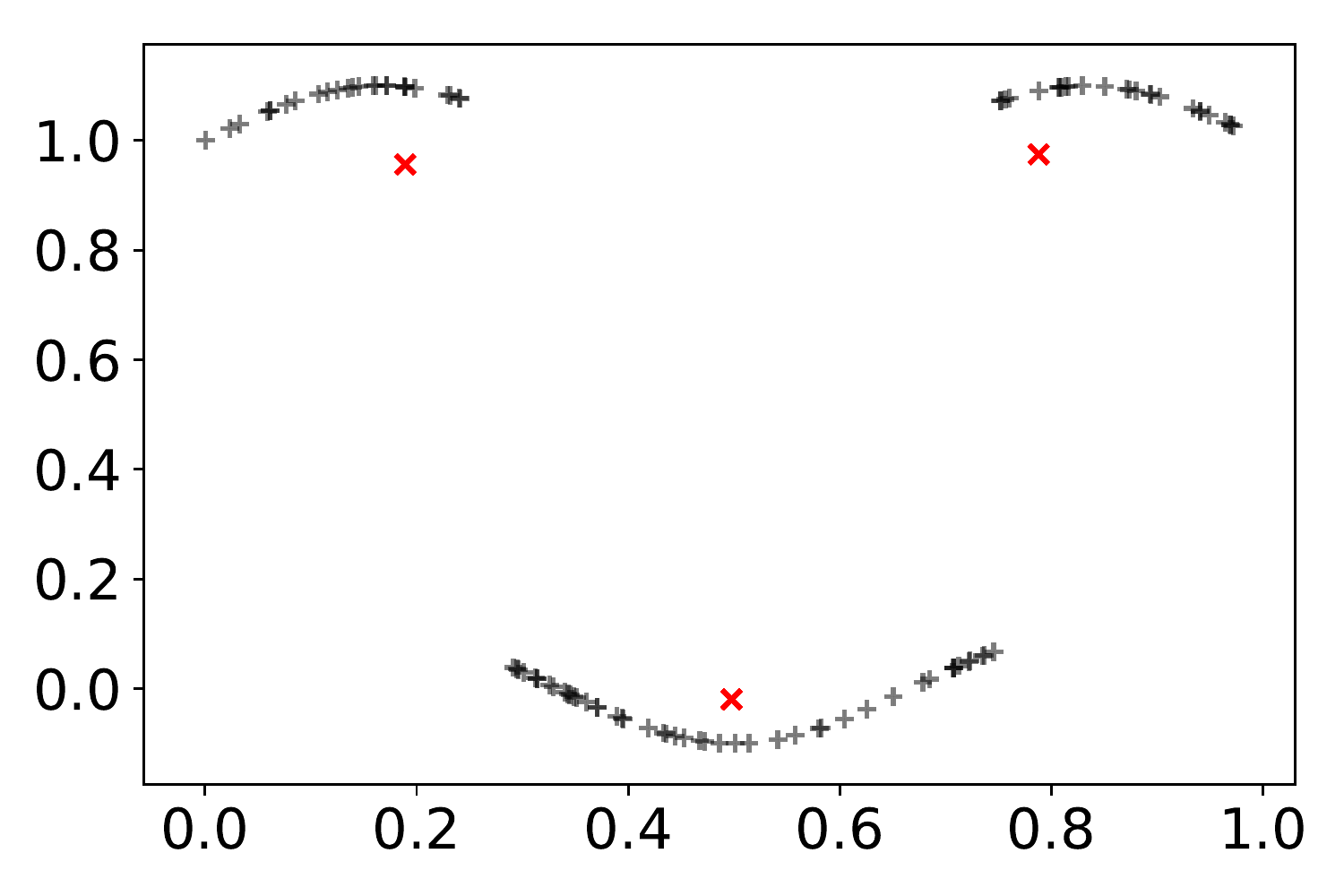}
\caption{Mixture model \label{fig:sim_mixture}}
\end{subfigure}
~
\begin{subfigure}[b]{0.23\columnwidth}
\includegraphics[width=\textwidth]{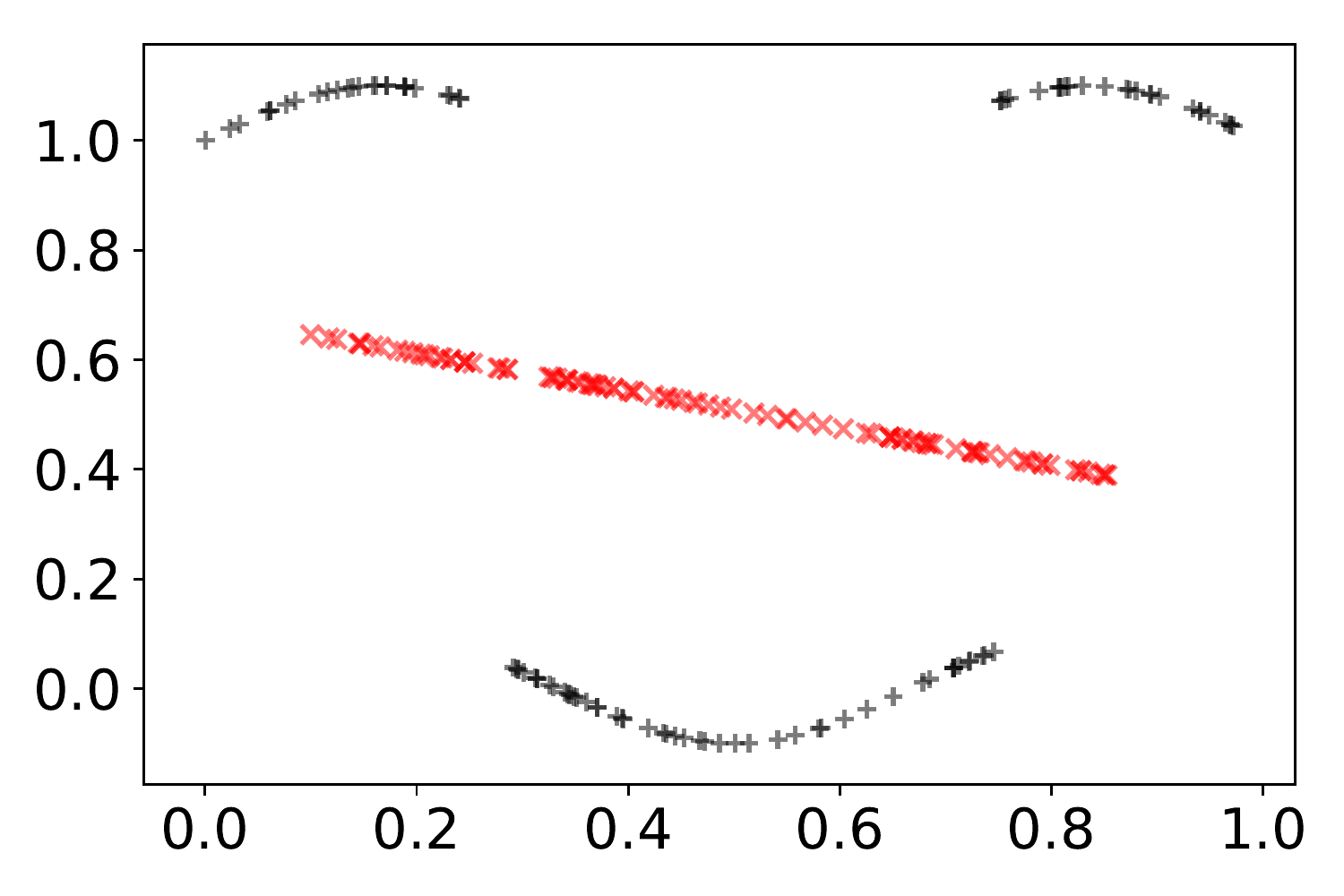}
\caption{Varying-Coefficient \label{fig:sim_vc}}
\end{subfigure}
~
\begin{subfigure}[b]{0.23\columnwidth}
\includegraphics[width=\textwidth]{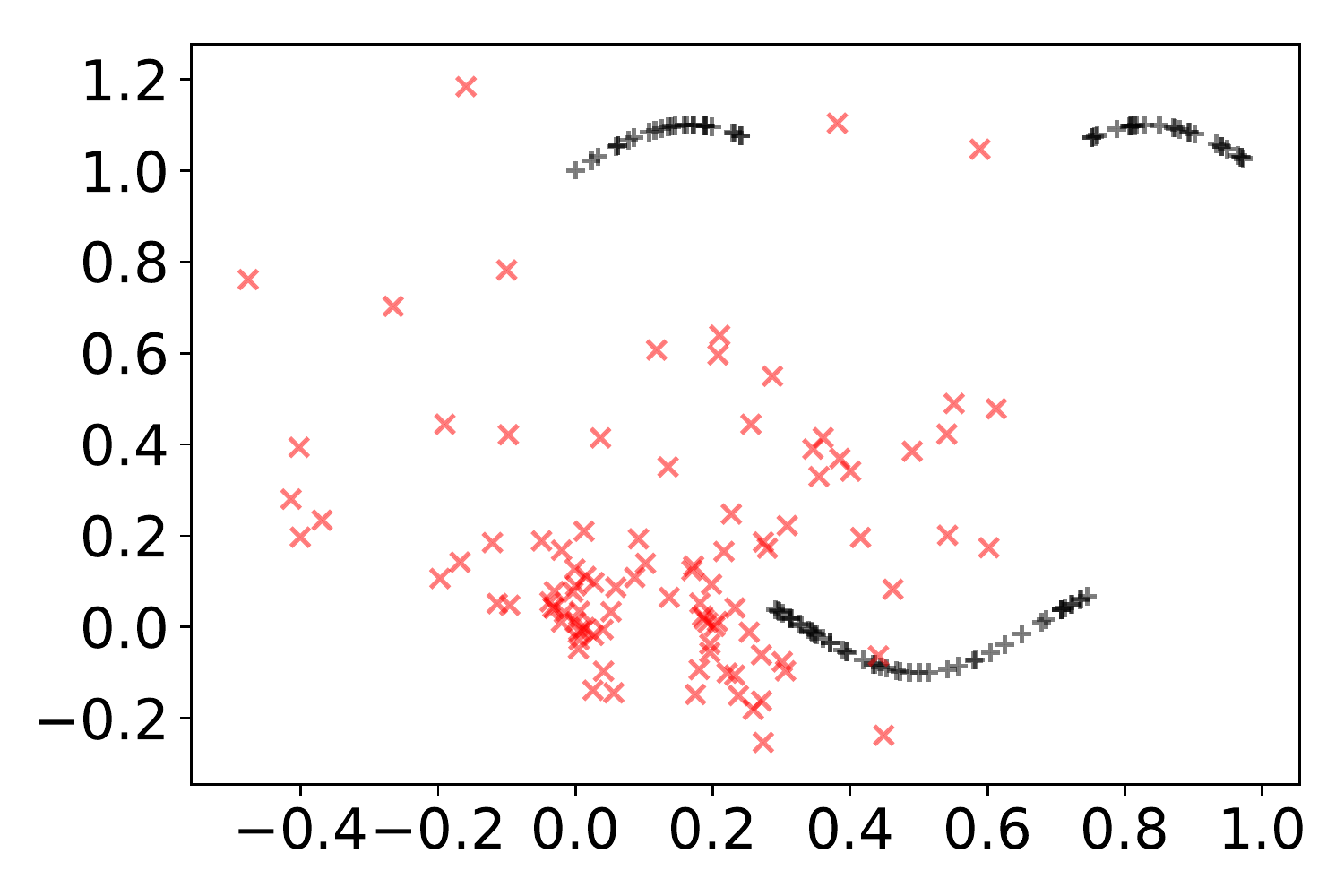}
\caption{Deep Neural Net \label{fig:sim_deep}}
\end{subfigure}
~
\begin{subfigure}[b]{0.23\columnwidth}
\includegraphics[width=\textwidth]{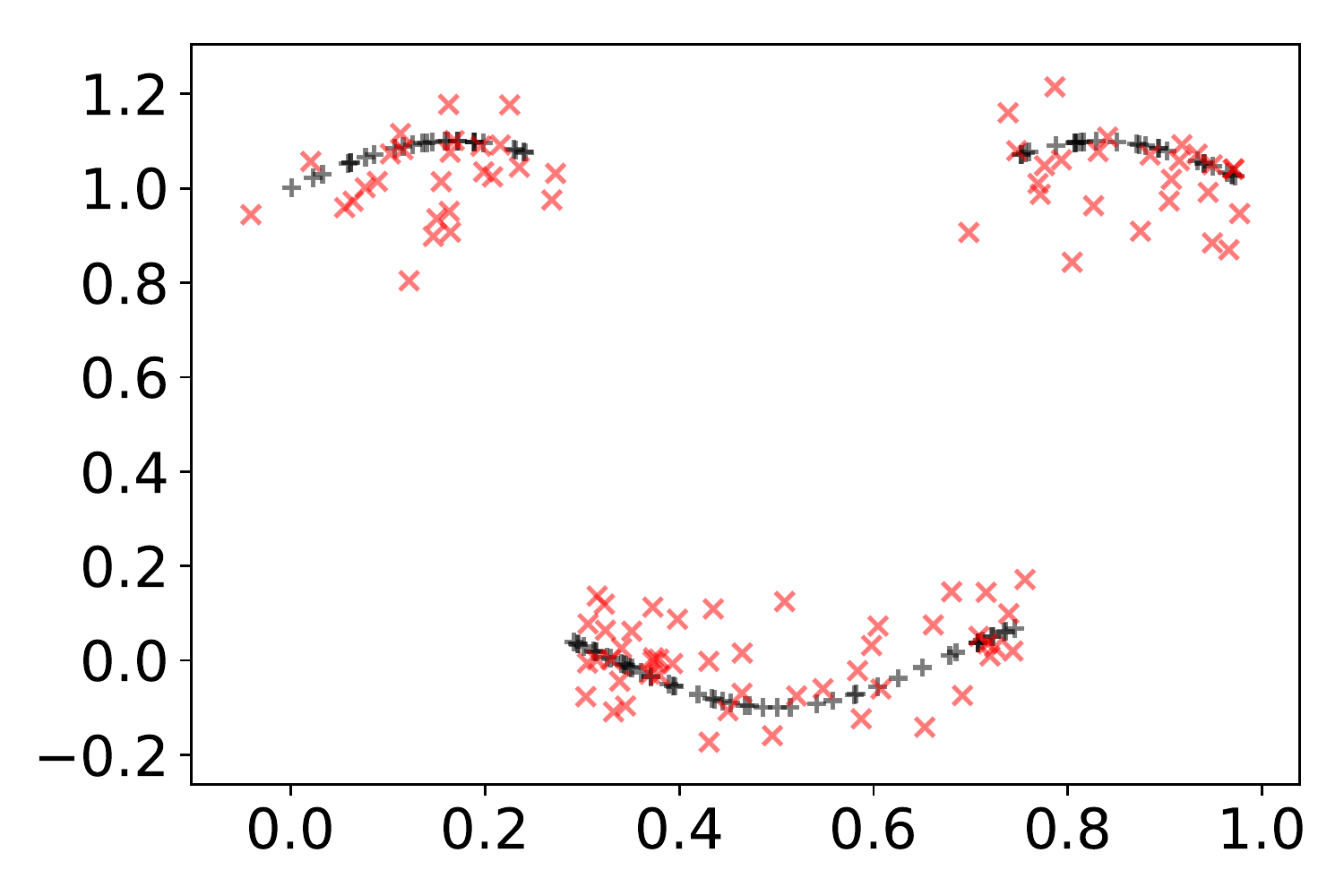}
\caption{Personalized \label{fig:sim_personal}}
\end{subfigure}
\caption{Illustration of the benefits of personalized models. 
Each point represents the regression parameters for a sample. 
Black points indicate true effect sizes, while the red points are estimates. 
Mixture models (a) estimate a limited number of models. 
The varying-coefficients model (b) estimates sample-specific models but the non-linear structure of the true parameters violates the model assumptions, leading to a poor fit. 
The locally-linear models induced by a deep learning model (c) do not accurately recover the underlying effect sizes.
In contrast, personalized regression (d) accurately recovers effect sizes. \label{fig:sim}}
\end{figure}

The scale of modern datasets allows an unprecedented opportunity to infer individual-level effects by borrowing power across large cohorts; however, principled statistical methods for accomplishing this goal are lacking. 
Standard approaches for adapting to heterogeneity in complex data include random effects models, mixture models, varying coefficients, and hierarchical models. 
Recent work includes the network lasso~\citep{hallac2015network}, the pliable lasso~\citep{tibshirani2017pliable}, personalized multi-task learning \cite{xu2015formula}, and the localized lasso \cite{yamada2016}. 
Despite this long history, these methods either fail to estimate \emph{individual-level} (i.e. sample-specific) effects, or require prior knowledge regarding the relation between samples (e.g. a network). 
At the same time, as datasets continue to increase in size and complexity, the possibility of inferring sample-specific phenomena by exploiting patterns in these large datasets has driven interest in important scientific problems such as precision medicine \citep{buettner2015computational,ng2015}. The relevance and potential impact of sample-specific inference has also been widely acknowledged in applications including psychology \citep{FisherE6106}, education \citep{doi:10.1111/mbe.12109}, and finance \citep{ageenko2010personalized}.

In this paper, we explore a solution to this dilemma through the framework of ``personalized'' models. 
Personalized modeling seeks to estimate a \emph{large} collection of \emph{simple} models in which each model is tailored---or ``personalized''---to a single sample. 
This is in contrast to models that seek to estimate a \emph{single}, \emph{complex} model. To make this more precise, suppose we have $n$ samples $(X^{(i)}, Y^{(i)})$, where $Y^{(i)}$ denotes the response and $X^{(i)}\in\R^{p}$ are predictors. A traditional ML model would model the relationship between $Y^{(i)}$ and $X^{(i)}$ with a single function $f(X^{(i)};\theta)$ parametrized by a complex parameter $\theta$ (e.g. a deep neural network). In a personalized model, we model each sample with its own function, allowing $\theta$ to be simple while varying with each sample. Thus, the model becomes $Y^{(i)} = f(X^{(i)}; \theta^{(i)})$. 
These models are estimated jointly with a single objective function, enabling statistical power to be shared between sub-populations. 

The flexibility of using different parameter values for different samples enables us to use a simple model class (e.g. logistic regression) to produce models which are simultaneously interpretable and predictive for each individual sample. 
By treating each sample separately, it is also possible to capture heterogeneous effects \emph{within} similar subgroups (an example of this will be discussed in Section~\ref{sec:exp:cancer}). 
Finally, the parameters learned through our framework accurately capture underlying effect sizes, giving users confidence that sample-specific interpretations correspond to real phenomena (Fig~\ref{fig:sim}).
Whereas previous work on personalized models either seeks only the population's \emph{distribution} of parameters \citep{vinayak2019maximum} or requires prior knowledge of the sample relationships~\citep{xu2015formula,yamada2016,hallac2015network}, we develop a novel framework which estimates sample-specific parameters by adaptively learning relationships from the data. 
A Python implementation is available at \url{http://www.github.com/blengerich/personalized_regression}.

\paragraph{Motivating Example.}
Consider the problem of understanding election outcomes at the local level. 
For example, given data on a particular candidate's views and policy proposals, we wish to predict the probability that a particular locality (e.g. county, township, district, etc.) will vote for this candidate. 
In this example we focus on counties for concreteness. 
More importantly, in addition to making accurate predictions, we are interested in understanding and explaining how different counties react to different platforms. 
The latter information---in addition to simple predictive measures---is especially important to candidates and political consultants seeking advantages in major elections such as a presidential election. 
This information is also important to social and political scientists seeking to understand the characteristics of an electorate and how it is evolving. An application of this motivating example using personalized regression can be found in in Section~\ref{sec:exp:election}.

One approach would be to build individual models for each county, using historical data from previous elections. 
Immediately we encounter several practical challenges: 1) By building independent models for each county, we fail to share information between related counties, resulting in a loss of statistical power, 2) Since elections are relatively infrequent, the amount of data on each county is limited, resulting in a further loss of power, and 3) To ensure that the models are able to explain the preferences of an electorate, we will be forced to use simple models (e.g. logistic regression or decision trees), which will likely have limited predictive power compared to more complex models. 
This simultaneous loss of power and predictive accuracy is characteristic of modeling large, heterogeneous datasets arising from aggregating multiple subpopulations. 
Crucially, in this example the \emph{total} number of samples may be quite large (e.g. there are more than 3,000 US counties and there have been 58 US presidential elections), but the number of samples per subpopulaton is small. 
Furthermore, these challenges are in no way unique to this example: similar problems arise for examples in financial, biological, and marketing applications.

One way to alleviate these challenges is to model the $i$th county using a regression model $f(X;\theta^{(i)})$, where the $\theta^{(i)}$ are parameters that vary with each sample and are trained jointly using \emph{all of the data}. 
This idea of \emph{personalized modeling} allows us to train accurate models using only a single sample from each county---this is useful in settings where collecting more data may be expensive (e.g. biology and medicine) or impossible (e.g. elections and marketing). 
By allowing the parameter $\theta^{(i)}$ to be sample-specific, there is no longer any need for $f$ to be complex, and simple linear and logistic regression models will suffice, providing useful and interpretable models for each sample.

\paragraph{Alternative approaches and related work.}
One natural approach to adapt to heterogeneity is to use mixture models, e.g. a mixture of regression \citep{puig2014bayesian} or mixture of experts model \citep{gormley2008mixture}. 
While mixture models present an intriguing way to increase power and borrow strength across the entire cohort, they are notoriously difficult to train and are best at capturing coarse-grained heterogeneity in data. 
Importantly, mixture models do not capture individual, sample-specific effects and thus cannot model heterogeneity \emph{within} subgroups. 

Furthermore, previous approaches to personalized inference \citep{liu2016personalized,xu2015formula,yamada2016,hallac2015network} assume that there is a \emph{known} network or similarity matrix that encodes how samples in a cohort are related to each other. 
A crucial distinction between our approach and these approaches is that no such knowledge is assumed. 
Recent work has also focused on estimating sample-specific parameters for structured models~\citep{visweswaran2010learning,pmlr-v72-jabbari18a, kuijjer2019estimating, liu2016personalized, li2018learning}; in these cases, prior knowledge of the graph structure enables efficient testing of sample-specific deviations. 

More classical approaches include varying-coefficient (VC) models \citep{hastie1993varying,shyu1991local,fan1999statistical}, where the parameter $\param^{(i)}=\param(U^{(i)})$ is allowed to depend on additional covariates $U$ in some smooth way, and random effects models \citep{jiang2007}, where $\param$ is modeled as a random variable.
More recently, the spirit of the VC model has been adapted to use deep neural networks as encoders for complex covariates like images \citep{al2017contextual,al2018personalized} or domain adaptation \citep{rozantsev2018beyond,platanios2018contextual}. 
In contrast to our approach, which does not impose any regularity or structural assumptions on the model, these approaches typically require strong smoothness (in the case of VC) or distributional (in the case of random effects) assumptions. 

Finally, locally-linear models estimated by recent work in model explanations \citep{ribeiro2016should} can be interpreted as sample-specific models. 
We make explicit comparisons to this approach in our experiments (Section~\ref{sec:exp}), but we point out here that local explanations serve to interpret a black-box model---which may be incorrect---and not the true mechanisms underlying the data. 
This is clearly illustrated in Fig~\ref{fig:sim_deep}, where local linear approximations do a good job of explaining the behaviour of the underlying neural network, but nonetheless fail to capture the true regression coefficients. 
This tradeoff between inference and prediction is well-established in the literature.

\section{Learning sample-specific models}
\label{sec:methods}
For clarity, we describe the main idea using a linear model for each personalized model; extension to arbitrary generalized linear models including logistic regression is straightforward. 
In Section~\ref{sec:exp}, we include experiments using both linear and logistic regression. 
A traditional linear model would dictate $Y^{(i)}=\ip{X^{(i)},\param}+w^{(i)}$, where the $w^{(i)}$ are noise and the parameter $\param\in\R^{p}$ is shared across different samples. 
We relax this model by allowing $\param$ to vary with each sample, i.e. 
\begin{equation}
Y^{(i)}
=\ip{X^{(i)},\param^{(i)}}+w^{(i)}.
\end{equation}
\noindent
Clearly, without additional constraints, this model is overparametrized---there is a $(p-1)$-dimensional subspace of solutions to the equation $Y^{(i)}=\ip{X^{(i)},\param^{(i)}}$ in $\param^{(i)}$ for each $i$. 
Thus, the key is to choose a solution $\param^{(i)}$ that simultaneously leads to good generalization and accurate inferences about the $i$th sample. 
We propose two strategies for this: (a) a low-rank latent representation of the parameters $\param^{(i)}$ and (b) a novel regularization scheme.

\subsection{Low-rank representation}
\label{sec:methods:lowrank}
We constrain the matrix of personalized parameters $\parammat = [\param^{(1)}\,|\,\cdots\,|\,\param^{(n)}]\in\R^{p\times n}$ to be low-rank, 
i.e. $\param^{(i)} = Q^{T}Z^{(i)}$ for some loadings $Z^{(i)} \in \mathbb{R}^{q}$ and some dictionary $Q \in \mathbb{R}^{q \times p}$. 
Letting $Z\in\R^{q\times n}$ denote the matrix of loadings, we have a low-rank representation of $\parammat=Q^{T}Z$. 
The choice of $q$ is determined by the user's desired latent dimensionality; for $q\ll p$, using only $\Theta\big(q(n+p)\big)$ instead of the $\Theta(np)$ of a full-rank solution can greatly improve computational and statistical efficiency. 
In addition, the low-rank formulation enables us to use $\ell_2$ distance in $Z$ in Eq.~\eqref{eq:distmatchreg} to restrict Euclidean distances between the $\param^{(i)}$:
After normalizing the columns of $Q$, we have
\begin{equation}
\label{eq:dist:Z}
\norm{\param^{(i)} - \param^{(j)}} \leq \sqrt{p} \norm{Z^{(i)}-Z^{(j)}}.
\end{equation}
This illustrates that closeness in the loadings $Z^{(i)}$ implies closeness in parameters $\param^{(i)}$. 
This fact will be exploited to regularize $\param^{(i)}$ (Section~\ref{sec:methods:dist}).

This use of a dictionary $Q$ is common in multi-task learning \citep{maurer2013} based on the assumption that tasks inherently use shared atomic representations. 
Here, we make the analogous assumption that samples arise from combinations of shared processes, so sample-specific models based on a shared dictionary efficiently characterize sample heterogeneity. 
Sparsity in $\param$ can be realized by sparsity in $Z,Q$; for instance, effect sizes which are consistently zero across all samples can be created by zero vectors in the columns of $Q$. 
The low-rank formulation also implicitly constrains the number of personalized sparsity patterns; this can be adjusted by changing the latent dimensionality $q$.

\subsection{Distance-matching}
\label{sec:methods:dist}

Existing approaches \citep{liu2016personalized,xu2015formula,yamada2016,hallac2015network} assume that there is a known weighted network $(\lambda_{ij})_{i,j=1}^{n}$ over samples such that $\norm{\param^{(i)} - \param^{(j)}}\approx\lambda_{ij}$. 
In other words, we have prior knowledge of which parameters should be similar. 
We avoid this strong assumption by instead assuming that we have additional covariates $U^{(i)}\in\R^{k}$ for which \emph{there exists} some way to measure similarity that corresponds to similarity in the parameter space, however, we don't have advance knowledge of this. 
More specifically, we regularize the parameters $\param^{(i)}$ by requiring that similarity in $\param$ corresponds to similarity in $U$, i.e. $\norm{\param^{(i)} - \param^{(j)}}\approx \lrnm(U^{(i)}, U^{(j)})$, where $\lrnm$ is an \emph{unknown}, latent metric on the covariates $U$. 
In applications, the $U^{(i)}$ represent exogenous variables that we do not wish to directly model; for example, in our motivating example of an electoral analysis, this may include demographic information about the localities. 

To promote similar structures in parameters as in covariates, we adapt a distance-matching regularization (DMR) scheme~\citep{lengerich_personalization} to penalize the squared difference in implied distances. 
The covariate distances are modeled as a weighted sum:
\begin{equation}
    \label{eq:distU}
    \lrnm_{\phi}(u,v) =
    \sum_{\ell=1}^{k} \phi_{\ell}\fixm_{\ell}(u_\ell, v_\ell),
    \quad 
    \phi_{\ell}\ge 0,
\end{equation}
where each $\fixm_{\ell}$ ($\ell=1,\ldots,k$) is a metric for a covariate. 
The positive vector $\phi$ represents a linear transformation of these ``simple'' distances into more useful latent distance functions. 
By using a linear parametrization for $\lrnm_{\phi}$, we can interpret the learned effects by inspecting the weights assigned to each covariate.

By Eq.~\eqref{eq:dist:Z}, in order for $\norm{\param^{(i)} - \param^{(j)}}\approx \lrnm_{\phi}(U^{(i)}, U^{(j)})$, it suffices to require $\norm{Z^{(i)} - Z^{(j)}}\approx \lrnm_{\phi}(U^{(i)}, U^{(j)})$. 
With this in mind, define the following \emph{distance-matching regularizer}:
\begin{equation}
\label{eq:distmatchreg}
\distreg_{\gamma}^{(i)}(Z, \phi) = \frac{\gamma}{2}\sum_{j \in B_{r}(i)}\big(\lrnm_{\phi}(U^{(i)}, U^{(j)}) - \norm{Z^{(i)} - Z^{(j)}}^2\big)^2,
\end{equation}
\noindent
where $B_{r}(i) = \{j : \norm{Z^{(i)} - Z^{(j)}}^2 < r \}$. 
This regularizer promotes imitating the structure of covariate values in the regression parameters. 
By using $Z$ instead of $\parammat$ in the regularizer, calculation of distances is much more efficient when $q\ll p$.

\subsection{Personalized Regression}
\label{sec:methods:pr}

Let $\ell(x,y,\param)$ be a loss function, e.g. least-squares or logistic loss.
For each sample $i$ of the training data, define a regularized, sample-specific loss by 
\begin{equation}
\mathcal{L}^{(i)}(Z, Q, \phi)
    = \ell(X^{(i)}, Y^{(i)}, Q^{T}Z^{(i)}) + \reg_{\lambda}(Q^{T}Z^{(i)}) + \distreg_{\gamma}^{(i)}(Z, \phi),
\end{equation}
where $\reg_{\lambda}$ is a regularizer such as the $\ell_{1}$ penalty and $\distreg_{\gamma}^{(i)}$ is the distance-matching regularizer defined in Eq.~\eqref{eq:distmatchreg}. 
We learn $\parammat$ and $\phi$ by minimizing the following composite objective:
\begin{equation}
\label{eq:complete:obj}
\mathcal{L}(Z, Q, \phi)
= \sum_{i=1}^n \mathcal{L}^{(i)}(Z, Q, \phi) + \upsilon \norm{\phi - 1}^2_2,
\end{equation}
\noindent
where the second term regularizes the distance function $\lrnm_{\phi}$ with strength set by $\upsilon$, and we recall that $\parammat=Q^TZ$. 
The hyperparameter $\gamma$ trades off sensitivity to prediction of the response variable against sensitivity to covariate structure.

\paragraph{Optimization.}
We minimize the composite objective $\mathcal{L}(Z, Q, \phi)$ with subgradient descent combined with a specific initialization and learning rate schedule. 
An outline of the algorithm can be found in Alg.~\ref{alg:opt} below. 
In detail, we initialize $\parammat$ by setting $\param^{(i)} \sim N(\pop, \epsilon I)$ for a population model $\pop$ such as the Lasso or elastic net and then initialize $Z$ and $Q$ by factorizing $\parammat$ with PCA. 
$\epsilon$ is a very small value used only to enable factorization by the PCA algorithm.
Each personalized estimator is endowed with a personalized learning rate $\alpha^{(i)}_t = \alpha_t/\norm{\wh{\param}^{(i)}_t - \wh{\param}^{(\textup{pop})}}_{\infty}$, which scales the global learning rate $\alpha_t$ according to how far the estimator has traveled. 
In addition to working well in practice, this scheme guarantees that the center of mass of the personalized regression coefficients does not deviate too far from the intialization $\pop$, even though the coefficients $\wh{\param}^{(i)}$ remain unconstrained. 
This property is discussed in more detail in Section~\ref{sec:analysis}.

\begin{algorithm}[htb]
\caption{Personalized Estimation}
\begin{algorithmic}[1]
\Require $\wh{\theta}^{pop}$, $\lambda$, $\gamma$, $\upsilon, \alpha, c$
\State $\theta^{(1)},\ldots, \theta^{(n)} \gets \wh{\theta}^{pop}$
\State $\Omega \gets [\theta^{(1)} | \ldots |\, \theta^{(n)}]$
\State $Z,Q \gets$ PCA($\Omega$)
\State $\phi \gets \mathbf{1}$
\State $\alpha \gets \alpha_0$
\Do 
	\State $\widetilde{Z}, \widetilde{Q}, \widetilde{\phi} \gets Z,Q,\phi$
	\State $\phi \gets \phi - \alpha \frac{\partial }{\partial \phi} \mathcal{L}(\widetilde{Z},\widetilde{Q},\widetilde{\phi}; \lambda, \gamma, \upsilon)$
	\State $Z^{(i)} \gets Z^{(i)} - \frac{\alpha}{\infnorm{\param^{(i)} - \pop}}\big [ \frac{\partial}{\partial Z^{(i)}} \sum_{i=1}^n \distreg_{\gamma}^{(i)}(\widetilde{Z}, \widetilde{\phi}) + $ \par
        \hskip\algorithmicindent $ \widetilde{Q}\big(\partial\ell(X^{(i)}, Y^{(i)}, \param^{(i)})  + \partial\reg_{\lambda}(\param^{(i)})\big)\big] \quad \forall~i\in [1,\ldots,n]$
	\State $Q \gets Q - \alpha\big[\frac{\partial}{\partial Q} \sum_{i=1}^n \distreg_{\gamma}^{(i)}(\widetilde{Z}, \widetilde{\phi}) + \sum_{i=1}^n \widetilde{Z}^{(i)}\big(\partial\ell(X^{(i)}, Y^{(i)}, \param^{(i)})^T + \partial\reg_{\lambda}(\param^{(i)})^T\big) \big]$
	\State $\alpha \gets \alpha c$
	\State $\param^{(i)} \gets Q^TZ^{(i)}  \quad \forall~i\in [1,\ldots,n]$
	\State $\Omega \gets [\theta^{(1)} | \ldots | \theta^{(n)}]$
\doWhile{not converged} \\
\Return $\Omega, Z, Q, \phi$
\end{algorithmic}
\label{alg:opt}
\end{algorithm}

\paragraph{Prediction.}
Given a test point $(X,U)$, we form a sample-specific model by averaging the model parameters of the $k_n$ nearest training points, according to the learned distance metric $\lrnm_{\phi}$:
\begin{subequations}
\begin{align}
    \param = \frac{1}{k_n}\sum_{j=1}^{k_n}\param^{(\eta(\lrnm_{\phi}, U)[j])}, \quad\quad 
    \eta(\lrnm_{\phi}, U) = \argsort_{1\leq i\leq n}\lrnm_{\phi}(U, U^{(i)}),
\end{align}
\end{subequations}
where $\argsort$ orders the indices $\{1,\ldots,n\}$ in descending order of covariate distance. 
Increasing $k_n$ drives the test models toward the population model to control overfitting. In our experiments, we use $k_n=3$.

We have intentionally avoided using $X$ to select $\param$ so that interpretation of $\param$ is not confounded by $X$. 
In some cases, however, the sample predictors can provide additional insight to sample distances~(e.g. \citep{wang2014similarity}); we leave it to future work to examine how to augment estimations of sample distances by including distances between predictors.

\paragraph{Scalability.}
Na\"ively, the distance-matching regularizer has $O(n^2)$ pairwise distances to calculate, however this calculation can be made efficient as follows.
First, the terms involving $\fixm_{\ell}(U^{(i)}_{\ell}, U^{(j)}_{\ell})$ remain unchanged during optimization, so that their computation can be amortized.
This allows the use of feature-wise distance metrics which are computationally intensive (e.g. the output of a deep learning model for image covariates). 
Furthermore, these values are never optimized, so the distance metrics $\fixm_{\ell}$ need not be differentiable. 
This allows for a wide variety of distance metrics, such as the discrete metric for unordered categorical covariates.
Second, we streamline the calculation of nearest neighbors in two ways: 1) Storing $Z$ in a spatial data structure and 2) Shrinking the hyperparameter $r$ used in \eqref{eq:distmatchreg}. 
With these performance improvements, we are able to fit models to datasets with over 10,000 samples and 1000s of predictors on a Macbook Pro with 16GB RAM in under an hour.

\subsection{Analysis}
\label{sec:analysis}

Initializing sample-specific models around a population estimate is convenient because the sample-specific estimates do not diverge from the population estimate unless they have strong reason to do so. 
Here, we analyze linear regression minimized by squared loss (e.g., $f(X^{(i)}, Y^{(i)}, \param^{(i)}) = (Y^{(i)} - X^{(i)}\param^{(i)})^2$), though the properties extend to any predictive loss function with a Lipschitz-continuous subgradient. 
\begin{theorem}
\label{thm:com}
Let us consider personalized linear regression with $\reg_{\lambda}(x) = \lambda\norm{x}_1$ (i.e. $\ell_1$ regularization). 
Let $X$ be normalized such that $\max_{i}\infnorm{X^{(i)}} \leq 1$, $\norm{X^{(i)}}_1 = 1$. 
Define $\bary_{t} := \frac{1}{n}\sum_{i=1}^n\wh{\param}_t^{(i)}$, where $\wh{\param}_t^{(i)}$ is the current value of $\wh{\param}^{(i)}$ after $t$ iterations. 
Let the learning rate follow a multiplicative decay such that $\alpha_t = \alpha_0 c^t$, where $\alpha_0$ is the initial learning rate and $c$ is a constant decay factor. 
Then at iteration $\tau$,
\begin{align}
\infnorm{\bary_{\tau} - \pop} \in \mathcal{O}(\lambda).
\end{align}
\end{theorem}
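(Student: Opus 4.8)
The plan is to track the trajectory of the center of mass $\bary_t$ by a telescoping argument and to play three structural facts against one another: the antisymmetry of the distance-matching gradient, the stationarity (KKT) condition of the population estimator $\pop$, and the self-limiting nature of the personalized learning rate $\alpha^{(i)}_t = \alpha_t/\infnorm{\bit - \pop}$. Since every personalized estimate is initialized at $\pop$ (up to the $\epsilon$-perturbation used only to seed the PCA factorization, which is mean-zero and negligible), we have $\bary_0 = \pop$, so it suffices to bound the accumulated drift
\[ \infnorm{\bary_{\tau} - \pop} \le \sum_{t=0}^{\tau-1}\infnorm{\bary_{t+1}-\bary_t}. \]
Writing out one subgradient step of Alg.~\ref{alg:opt} at the level of the induced update on $\param^{(i)}=Q^{T}Z^{(i)}$ and averaging over $i$, the increment $\bary_{t+1}-\bary_t$ splits into three pieces, coming from the distance-matching regularizer $\distreg_{\gamma}^{(i)}$, the least-squares loss $\partial\ell$, and the $\ell_{1}$ subgradient $\lambda\,\partial\norm{\bit}_{1}$, each scaled by the per-sample rate $\alpha^{(i)}_t$.

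I would then dispatch the three pieces in turn. First, the distance-matching piece vanishes in the center of mass: its gradient in $Z^{(i)}$ is a sum of terms $c_{ij}(Z^{(i)}-Z^{(j)})$ with $c_{ij}$ symmetric in $(i,j)$ (it depends only on $\lrnm_{\phi}(U^{(i)},U^{(j)})$ and $\norm{Z^{(i)}-Z^{(j)}}^{2}$) and neighborhoods $B_{r}$ symmetric, so summing over $i$ cancels pairwise, and this cancellation is carried into $\param$-space by the shared dictionary $Q$. Second, for the loss piece I would add and subtract the gradient at $\pop$. The residual identity $Y^{(i)}-\ip{X^{(i)},\bit} = (Y^{(i)}-\ip{X^{(i)},\pop}) - \ip{X^{(i)},\bit-\pop}$ together with the normalization $\norm{X^{(i)}}_{1}=1$, $\infnorm{X^{(i)}}\le 1$ gives the Lipschitz bound $\infnorm{\partial\ell(\bit)-\partial\ell(\pop)} \le 2\infnorm{\bit-\pop}$; dividing by the learning-rate denominator $\infnorm{\bit-\pop}$ cancels this factor exactly, so the deviation-from-$\pop$ contribution is self-limited and of order $\alpha_t$. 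The remaining average $\tfrac1n\sum_i\partial\ell(\pop)$ is precisely the stationarity residual of the population Lasso, hence bounded in $\infnorm{}$ by $\lambda$.

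Third, the $\ell_{1}$ subgradient satisfies $\infnorm{\partial\norm{\bit}_{1}}\le 1$ pointwise, so it contributes at most $\lambda$ (times the rate) per step. Finally I would sum over $t$: because $\alpha_t = \alpha_0 c^{t}$ decays geometrically, $\sum_{t\ge 0}\alpha_t = \alpha_0/(1-c)=\mathcal{O}(1)$, so every per-step bound proportional to $\alpha_t\lambda$ accumulates to $\mathcal{O}(\lambda)$, yielding $\infnorm{\bary_{\tau}-\pop}\in\mathcal{O}(\lambda)$.

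\textbf{Main obstacle.} The hard part is the coupling between the \emph{non-uniform} personalized rates $\alpha^{(i)}_t$ and the population KKT condition, which is naturally stated for the \emph{uniform} average $\tfrac1n\sum_i\partial\ell(\pop) = -\lambda\,\partial\norm{\pop}_{1}$: with sample-specific weights this convenient cancellation no longer applies verbatim, and the weight heterogeneity must either be controlled (e.g. by bounding the spread of the $\infnorm{\bit-\pop}$, which are comparable near initialization) or absorbed into the constant hidden by $\mathcal{O}(\cdot)$. I also expect the self-limited deviation term to require care, since its natural bound is of order $\alpha_t$ rather than $\alpha_t\lambda$, so one must argue it is dominated by the population terms or folded into the leading order. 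A secondary technical point is justifying that the update on $(Z,Q)$ induces the claimed effective update on $\param^{(i)}=Q^{T}Z^{(i)}$---in particular that the antisymmetric cancellation survives left-multiplication by $Q$ and that $Q$ is (approximately) stationary over the horizon considered.
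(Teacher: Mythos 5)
Your proposal follows essentially the same route as the paper: the same telescoping of $\bary_{t+1}-\bary_{t}$, the same antisymmetric-cancellation lemma for the distance-matching gradient, and the same per-step bound of order $\alpha_t(\lambda+1)$ obtained by combining the Lipschitz bounds, the population stationarity identity $\sum_{i}\ell^{(i)}(\pop)=-n\reg'(\pop)$, and the cancellation of $\infnorm{\bit-\pop}$ by the personalized learning-rate denominator, followed by geometric summation. The ``obstacles'' you flag---that the non-uniform weights $\alpha^{(i)}_t$ break the exact KKT cancellation stated for the uniform average, and that the loss-Lipschitz term contributes $\mathcal{O}(\alpha_t)$ rather than $\mathcal{O}(\alpha_t\lambda)$ so the final bound is really $\alpha_0(\lambda+1)/(1-c)$---are genuine, but the paper's own proof silently incurs both of them as well, so you have not missed any ingredient the paper supplies.
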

That is, the center of mass of the personalized regression coefficients does not deviate too far from the initialization $\pop$, even though the coefficients $\wh{\param}^{(i)}$ remain unconstrained. 
In addition, the distance-matching regularizer does not move the center of mass and the update to the center of mass does not grow with the number of samples. 
Proofs of these claims are included in Appendix~\ref{sec:sup:analysis}.

\section{Experiments}
\label{sec:exp}
We compare personalized regression (hereafter, PR) to four baselines: 1) Population linear or logistic regression, 2) A mixture regression (MR) model, 3) Varying coefficients (VC), 4) Deep neural networks (DNN). 
First, we evaluate each method's ability to recover the true parameters from simulated data. 
Then we present three real data case studies, each progressively more challenging than the previous: 1) Stock prediction using financial data, 2) Cancer diagnosis from mass spectrometry data, and 3) Electoral prediction using historical election data. 
The results are summarized in Table~\ref{tab:predictive} for easy reference. 

We believe the out-of-sample prediction results provide strong evidence that any harmful overfitting of PR is outweighed by the benefit of personalized estimation. 
This agrees with famous results such as \citep{sollich1996learning}, where it is showed that optimal ensembles of linear models consist of overfitted atoms; see especially Eq. 12 and Fig. 2 therein.

\subsection{Baselines}
\label{sec:exp:baselines}
For each experiment, we use several baseline models to benchmark performance:
\begin{itemize}
\item \emph{Population model.} First, we use elastic net regularization~\citep{zou2005regularization} as a generalizable population estimator. 
\item \emph{Mixture of regressions.} To estimate a small collection of models, we use a standard mixture model optimized by expectation-maximization. 
Since this model does not share information between mixture components, the number of components must be much smaller than the number of samples. 
\item \emph{Varying coefficient model.} To estimate sample-specific models, we use an $\ell_1$-regularized linear varying-coefficients model~\citep{hastie1993varying}.
\item \emph{Deep neural network.} Finally, to compare against models with large representational capacity, we include a neural network. 
This neural network contains 5 hidden layers, with layer sizes and nonlinearities treated as hyperparameters optimized for cross-validation loss by grid search.
The final version contains 250 hidden nodes in each layer with sigmoid nonlinearities.
\end{itemize}
For the tasks with continuous outcomes, these are linear regression models; for classification tasks, these are logistic regression models. 

\subsection{Choice of hyperparameters}
While the personalized regression approach estimates a large number of parameters, there are relatively few hyperparameters. 
Hyperparameters to be selected are: $\lambda$ the strength of the traditional regression regularizer, $\gamma$ the strength of the distance-matching regularizer, $r$ the diameter of the neighborhoods considered by the distance-matching regularizer, $\upsilon$ the strength of regularizer on $\phi$, and $q$ the latent dimensionality. 
$\lambda$ should be set equivalent to the $\lambda$ used in the population estimator. 
$\gamma$ requires some tuning and should be set such that the distance-matching regularizer contributes the a same order of magnitude on the total loss as does the predictive loss. 
$r$ should be set to reflect the user's desired neighborhood of personalization; larger $r$ produces personalized estimates which reflect covariate distances even for very different samples, smaller $r$ improves computation speed but decreases the size of the neighborhoods of personalization. 
Finally, $\upsilon$ regularizes $\phi$ and should be set to reflect the user's prior knowledge about the influence of each covariate on personalization.

For our experiments, we use the following hyperparameter values with PR:
\begin{itemize}
\item \emph{Simulation.} $\lambda=\num{1e-1}$, $\gamma=\num{1e5}$, $\upsilon=\num{1e-2}$, $q=2$
\item \emph{Finance.} $\lambda=\num{1e0}$, $\gamma=\num{1e8}$, $\upsilon=\num{1e-2}$, $q=50$
\item \emph{Cancer.} $\lambda=\num{1e0}$, $\gamma=\num{1e6}$, $\upsilon=\num{1e-2}$, $q=50$
\item \emph{Election.} $\lambda=\num{1e-2}$, $\gamma=\num{1e3}$, $\upsilon=\num{1e-2}$, $q=2$
\end{itemize}
For all experiments, we dynamically set $r$ such that each point has on average 10 neighbors, and use the learning rate schedule of $\alpha_0=\num{1e-4}$, $c=1-\num{1e-4}$. For each baseline described in Section~\ref{sec:exp:baselines}, hyperparameter values were selected by cross-validation.

\begin{table*}[t]
\parbox{0.45\textwidth}{
    \centering
    \begin{tabular}{c|c|c|c|c}
    $p$ & Model & $||\hat{\Omega}-\Omega||_2$  &  $R^2$ & MSE \\
    \toprule
    \multirow{5}{*}{2} & Pop. & 9.97 & 0.87 & 0.13 \\
    & MR  & 9.86 & 0.88 & 0.12 \\
    & VC  & 14.55 & 0.76 & 0.22 \\
    & DNN & 30.42 & 0.75 & 0.24 \\
    & PR  & \bf{7.82} & \bf{0.89} & \bf{0.09} \\
    \midrule
    \multirow{5}{*}{10} & Pop. & 15.19 & 0.79 & 0.73 \\
    & MR & 14.81 & 0.80 & 0.70 \\
    & VC & 23.86 & 0.69 & 1.09 \\
    & DNN & 67.49 & 0.80 & 0.85 \\
    & PR & \bf{14.52} & \bf{0.82} & \bf{0.65} \\
    \midrule
    \multirow{5}{*}{25} & Pop. & 25.86 & 0.85 & 1.26 \\
     & MR & 25.75 & 0.86 & 1.20 \\
     & VC & 38.77 & 0.66 & 3.05 \\
     & DNN & 103.72 & 0.68 & 2.78 \\
     & PR & \bf{24.53} & \bf{0.87} & \bf{1.10}
    \end{tabular}
    \caption{Simulations with $n=500$. \label{tab:sim_n}}
}
\hfill
\parbox{0.45\textwidth}{
    \centering
    \begin{tabular}{c|c|c|c|c}
    $n$ & Model & $||\hat{\Omega}-\Omega||_2$  &  $R^2$ & MSE \\
    \toprule
    \multirow{5}{*}{100} & Pop. & 6.36 & 0.90 & 0.23\\
    & MR & 6.48 & 0.90 & 0.23 \\
    & VC & 10.75  & 0.78 & 0.50 \\
    & DNN & 22.30 & 0.39 & 0.75 \\
    & PR & \bf{6.03} & \bf{0.91} & \bf{0.21} \\
    \midrule
    \multirow{5}{*}{500} & Pop. & 11.83 & 0.84 & 0.29 \\
    & MR & 11.78 & 0.84 & 0.30 \\
    & VC & 19.06 & 0.74 & 0.49 \\
    & DNN & 47.33 & 0.81 & 0.37 \\
    & PR & \bf{10.30} & \bf{0.86} & \bf{0.26} \\
    \midrule
    \multirow{5}{*}{2500} & Pop. & 33.03 & 0.87 & 0.26 \\
    & MR  & 31.75 & 0.88 & 0.26 \\
    & VC  & 33.71 & 0.87 & 0.27 \\
    & DNN & 102.88 & 0.88 & 0.29 \\
    & PR  & \bf{26.11} & \bf{0.90} & \bf{0.21} \\
    \end{tabular}
    \caption{Simulations with $p=5$. \label{tab:sim_p}}
    }
\end{table*}

\subsection{Simulation Study}
\label{sec:exp:simulations}
We first investigate the capacity of personalized regression to recover true effect sizes in simulation studies. 
For these experiments, we generate data according to $X\sim\text{Unif}(-1,1)^p$, $U\sim\text{Unif}(0,1)^K$, $a\sim \text{Unif}(0,1)^p$, $b\sim\text{Unif}(0,1)^p$, $c\sim\text{Cat}(K)^p$, $\theta_j = \mathcal{I}_{\{U_{c_j} > a_j\}} + b_j\sin{U_{c_j}}$, $Y^{(i)} = X^{(i)}\theta^{(i)} + N(0, 0.01)$. 
These experiments all use $K=5$ covariates. 
As shown in Fig.~\ref{fig:sim}, this produces regression parameters with a discontinuous distribution. 

The algorithms are given both $X$ and $U$ as input during training, and we use LIME \cite{ribeiro2016should} to generate local linear approximations to the DNN in order to estimate parameters $\param^{(i)}$ for each sample. 
In this setting, there exists a discontinuous function which could output exactly the sample-specific regression models from the covariates. 
In this sense, the neural network is ``correctly specified'' for this dataset, testing how well locally-linear models approximate the true parameters. 
To estimate personalized models for the simulated dataset, we initialize the personalized estimations with a varying-coefficient model, and personalize according to the distance metric $d_1(x, y) = |x-y|$.

\paragraph{Results.}
In Table~\ref{tab:sim_n}, we report results for varying $n$ and in Table~\ref{tab:sim_p}, we report results for varying $p$. 
The values reported are: (1) the recovery error of the true regression parameters in the training set, with (mean $\pm$ std) values calculated over 20 experiments with different values of $X,U,w$, (2) correlation coefficient of predictions on the test set, and (3) mean squared error of predictions on the test set.

As expected, the recovery error is much lower for PR, while the DNN shows competitive predictive error. 
The population estimator successfully recovers the mean effect sizes, but this central model is not accurate for any individual, resulting in poor performance both in recovering $\parammat$ and in prediction. 
Similarly, both MR and VC perform poorly. 
As expected, the deep learning model excels at predictive error, however, the local linear approximations do not accurately recover the sample-specific linear models. 
In contrast, PR exhibits both the flexibility and the structure to learn the true regression parameters while retaining predictive performance.

\subsection{Financial Prediction}
\label{sec:exp:finance}

A common task in financial trading is to predict the price of a security at some point in the future. 
This is a challenging task made more difficult by nonstationarity---the interpretation of an event changes over time, and different securities may respond to the same event differently. 

\paragraph{Data.}
We built a dataset of security prices over a 30-year time frame by joining stock and ETF trading histories\footnote{https://www.kaggle.com/borismarjanovic/price-volume-data-for-all-us-stocks-etfs/version/3} to a database of global news headlines from Bloomberg~\citep{ding2015deep} and Reddit\footnote{https://www.kaggle.com/aaron7sun/stocknews}.
We transform news headlines into continuous representations by tf-idf weighting averaging \cite{arora2016simple} of word embeddings under the GLoVE model~\cite{pennington2014glove} pre-trained on Wikipedia and Gigaword corpora\footnote{https://nlp.stanford.edu/projects/glove}.
After dimensionality reduction, this news dataset consists of a 50-dimensional vector for each date.
The predictors $X^{(i,t)}$ consist of the trading history of the 24 securities over the previous 2 weeks as well as global news headlines from the same time period. 
The covariates $U^{(i,t)}$ consist of the date and security characteristics (name, region, and industry). 
The target $Y^{(i,t)}$ is the price of this security 2 weeks after $t$. 
We split the dataset into training and test sets at the 80th percentile date, which is approximately the beginning of 2011.
To estimate personalized models for the financial dataset, we initialize the personalized estimators with the population model and personalize according to the $\ell_1$ distance for time and the discrete metric for the other covariates.

\begin{table*}[tb]
\centering
\begin{tabular}{l | cc | cc | cc}
Model &  \multicolumn{2}{c|}{Financial} & \multicolumn{2}{c|}{Cancer} & \multicolumn{2}{c}{Election}\\
& $R^2$ & MSE & AUROC & Acc & $R^2$ & MSE \\
\midrule
Pop.           & $0.01$ & $64144$  & $0.794$ & $0.962$ & $0.00$ & $0.019$ \\
MR          & $0.74$ & $16146$  & $0.876$ & $0.939$ & $-0.56$ & $0.031$ \\
VC         & $0.06$ & $60694$  & $0.430$ & $0.863$ & $0.00$ & $0.019$ \\
DNN       & $-0.02$ & $63028$ & $0.901$ & $0.955$ & $0.00$ & $0.019$ \\
PR        & $\bf{0.86}$ & $\bf{4822}$ & $\bf{0.923}$ & $\bf{0.975}$ & $\bf{0.45}$ & $\bf{0.011}$ \\
\end{tabular}
\caption{Predictive performance on test sets of real data experiments. 
For continuous response variables, we report correlation coefficient ($R^2$) and mean squared error (MSE) of the predictions. 
For classification tasks, we report area under the receiver operating characteristic curve (AUROC) and the accuracy (ACC). 
\label{tab:predictive}}
\end{table*}

\begin{figure*}[t]
\centering
\begin{subfigure}[b]{0.45\textwidth}
\centering
\includegraphics[width=\textwidth]{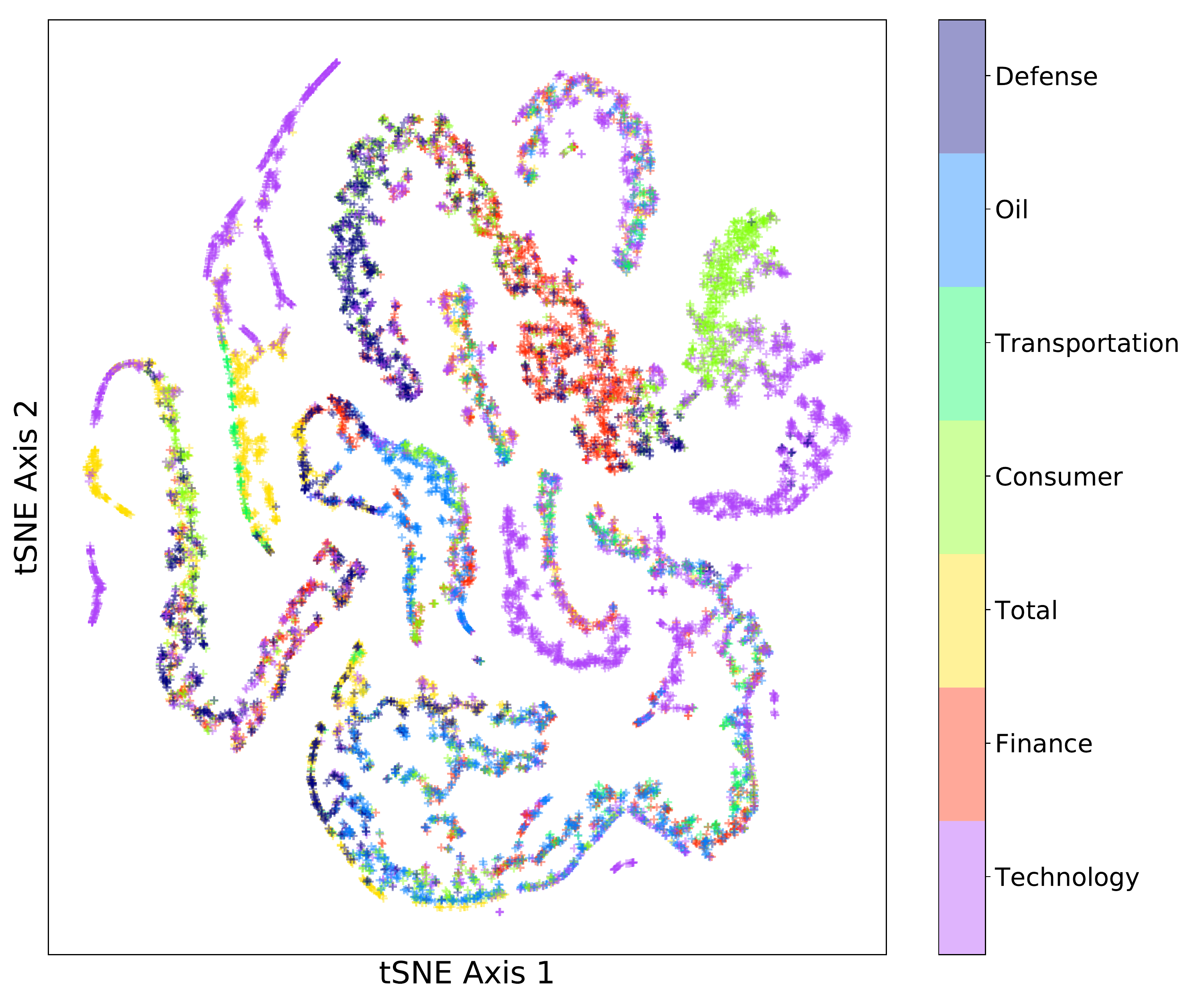}
\caption{Industry\label{fig:parameters_industry}}
\end{subfigure}
~
\begin{subfigure}[b]{0.45\textwidth}
\centering
\includegraphics[width=\textwidth]{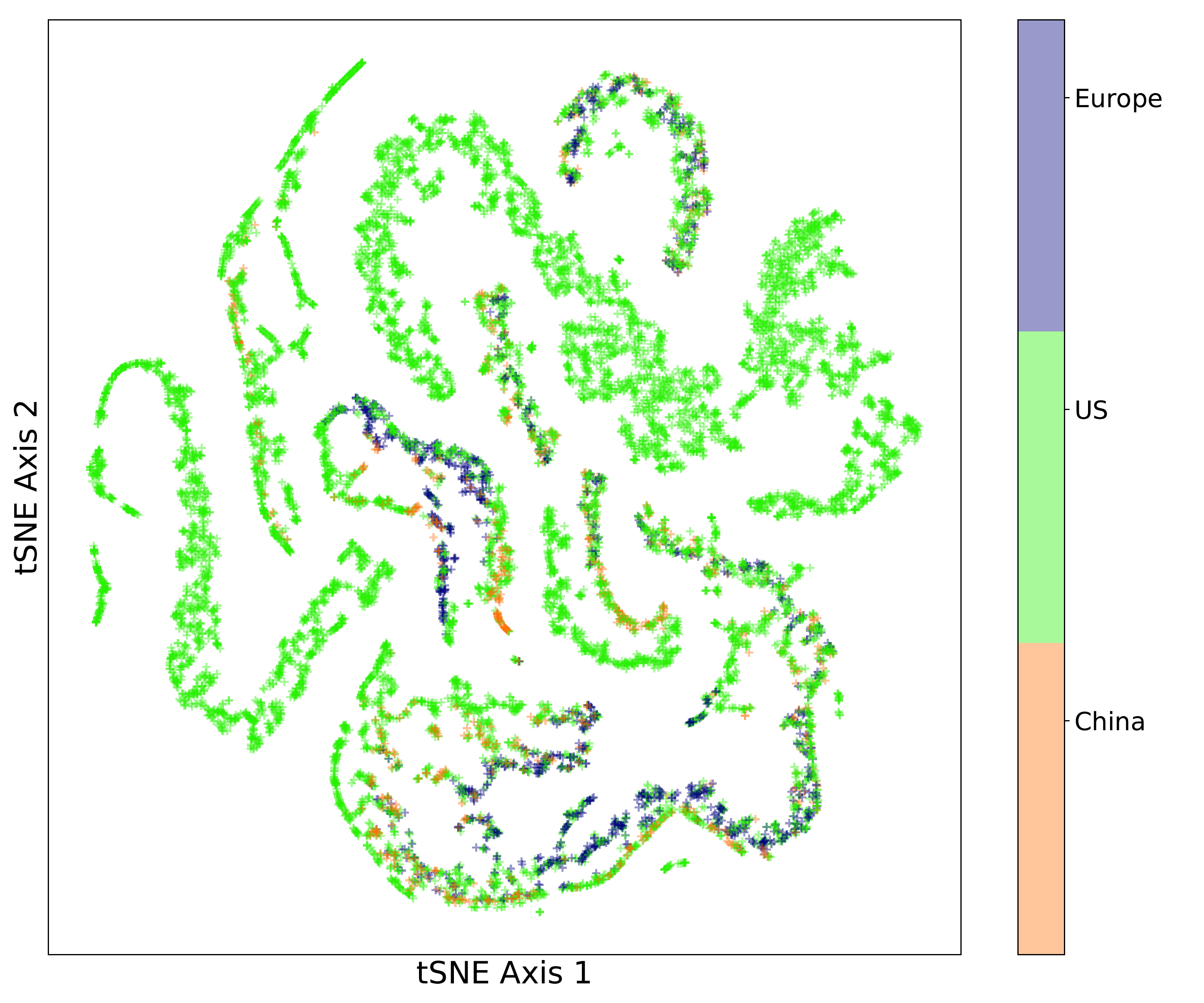}
\caption{Region\label{fig:parameters_region}}
\end{subfigure}
~
\begin{subfigure}[b]{0.45\textwidth}
\centering
\includegraphics[width=\textwidth]{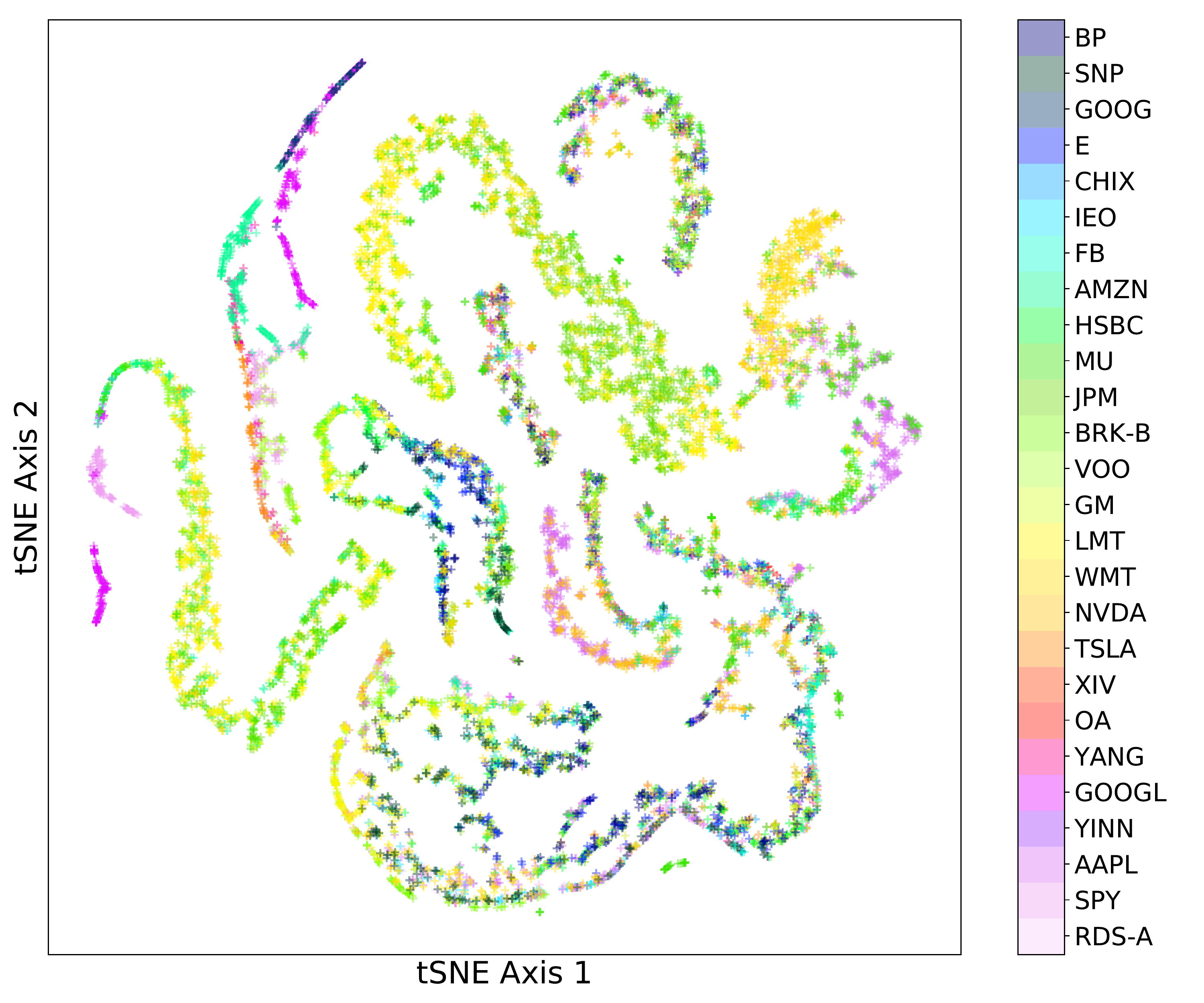}
\caption{Security\label{fig:parameters_security}}
\end{subfigure}
~
\begin{subfigure}[b]{0.45\textwidth}
\centering
\includegraphics[width=\textwidth]{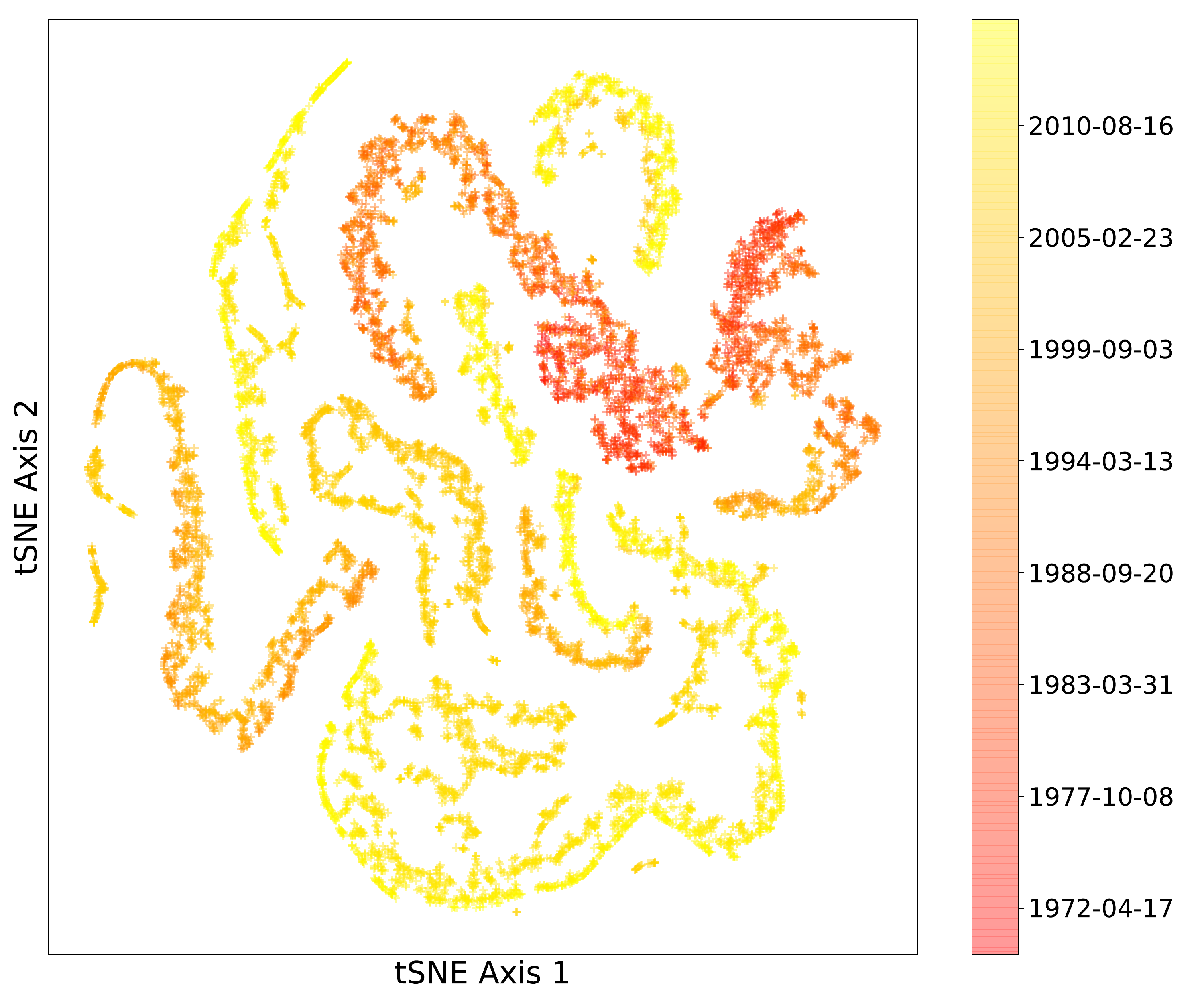}
\caption{Time\label{fig:parameters_time}}
\end{subfigure}
\caption{Personalized financial models using t-SNE \citep{van2014accelerating} embedding. Each point represents a regression model for one security at a single date, colored according to covariate value. \label{fig:finance_vis}}
\end{figure*}

\paragraph{Results.}
PR significantly outperforms baseline methods to predict price movements (Table~\ref{tab:predictive}). 
In contrast to standard models which average effects over long time periods and/or securities, PR summarizes gradual shifts in attention. 
Shown in Fig.~\ref{fig:finance_vis} are visualizations of the model parameters, colored by each of the covariates used for personalization.
The strongest clustering behavior is due to time (Fig.~\ref{fig:parameters_time}). 
For instance, models fit to samples in the era of U.S. ``stagflation" (1973-1975) are overlaid on models for samples in the early 1990s U.S. recession. 
In both of these cases, real equity prices declined against the background of high inflation rates. 
In contrast, the recessions marked by structural problems such as the Great Financial Crisis of 2008 are separated from the others. 
Within each time period, we also see that industries (Fig.~\ref{fig:parameters_industry}), regions (Fig.~\ref{fig:parameters_region}), and securities (Fig.~\ref{fig:parameters_security}) are strongly clustered.

\subsection{Cancer Analysis}
\label{sec:exp:cancer}
In cancer analysis, the challenges of sample heterogeneity are paramount and well-known. 
Increasing biomedical evidence suggests that patients do not fall into discrete clusters~\citep{Ma2018, buettner2015computational}, but rather each patient experiences a unique disease that should be approached from an individualized perspective~\citep{dressman2007integrated}. 
Here, we investigate the capacity of PR to distinguish malignant from benign skin lesions using a dataset of desorption electrospray ionization mass spectrometry imaging (DESI-MSI) of a common skin cancer, basal cell carcinoma (BCC). 

\paragraph{Data.}
This dataset, from \cite{Margulis6347}, contains 17,053 total samples from 17 patients. 
Each sample consists of 2,734 spectra intensities and is labeled with a binary outcome (0=benign, 1=malignant). 
Data from 9 patients are used to fit models, while data from 8 patients are held-out for evaluation. 
In this dataset, the only explicit covariate is the patient label. 
To produce covariates which are most useful for personalization, we augment the patient labels with 1500 of the predictive features compressed to 2 dimensions by t-SNE dimensionality reduction. 
These 1500 predictive features are excluded from the set of predictors for PR, while baseline methods use the entire set of features as predictors. 
We fit the personalized regression according to the distance function $d_1(x,y) = \mathcal{I}_{\{x\neq y\}}$, $d_2(x,y)=|x-y|$, $d_3(x,y)=|x-y|$, where the first function checks if the patients are the same and the final two calculate distance in the continuous covariates.

\begin{figure*}[t]
\centering
\begin{subfigure}[t]{0.45\textwidth}
    \centering
    \includegraphics[width=0.75\textwidth]{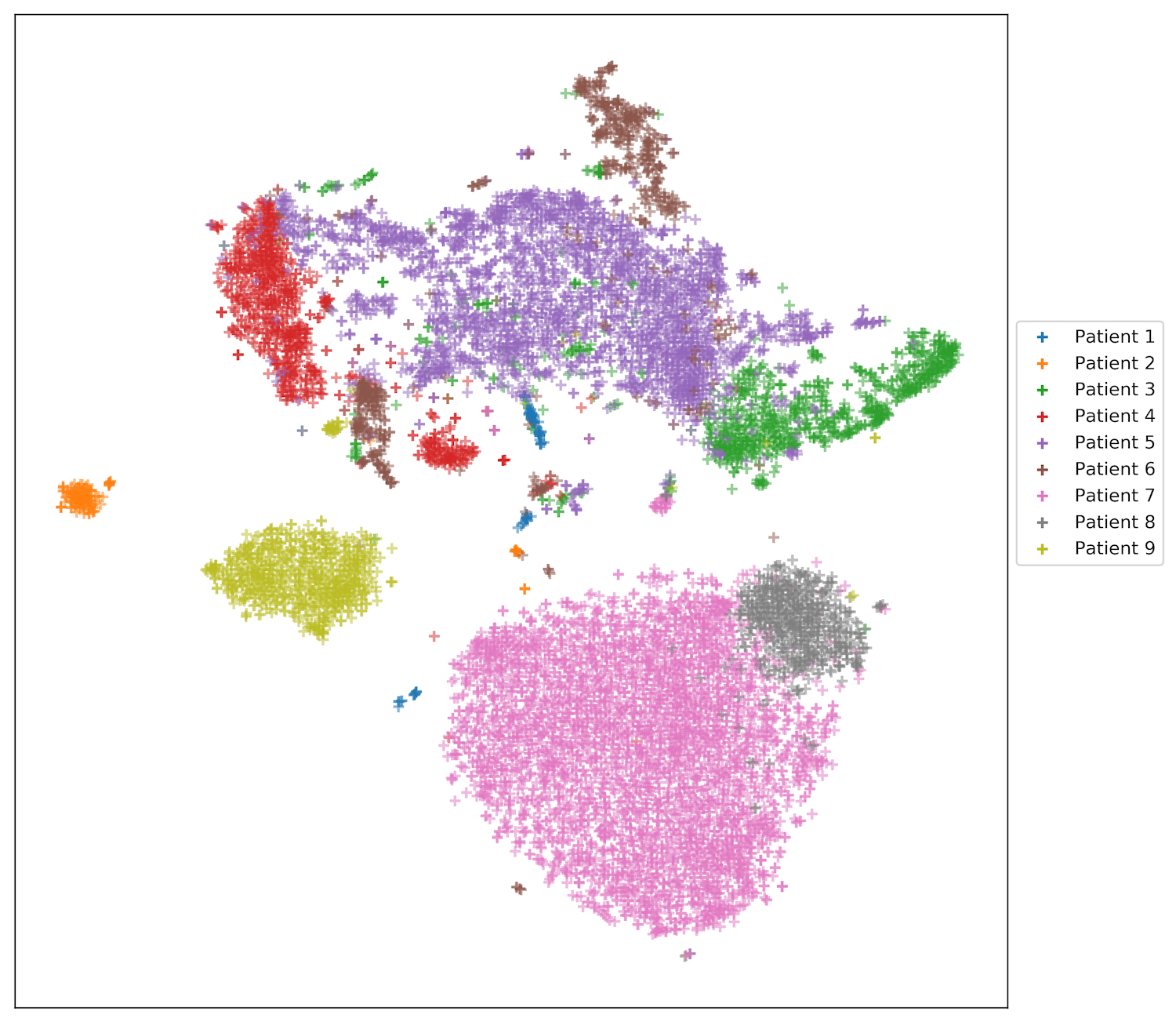}
    \caption{Personalized models for patients in the training set of the cancer dataset.
    Each point represents a model for a single sample, colored by the patient ID.
    There is strong clustering according to patient label, but also intra-patient heterogeneity (notably Patients 1,3,4, and 6). \label{fig:cancer}}
\end{subfigure}
~
\begin{subfigure}[t]{0.45\textwidth}
    \centering
    \includegraphics[width=0.9\textwidth]{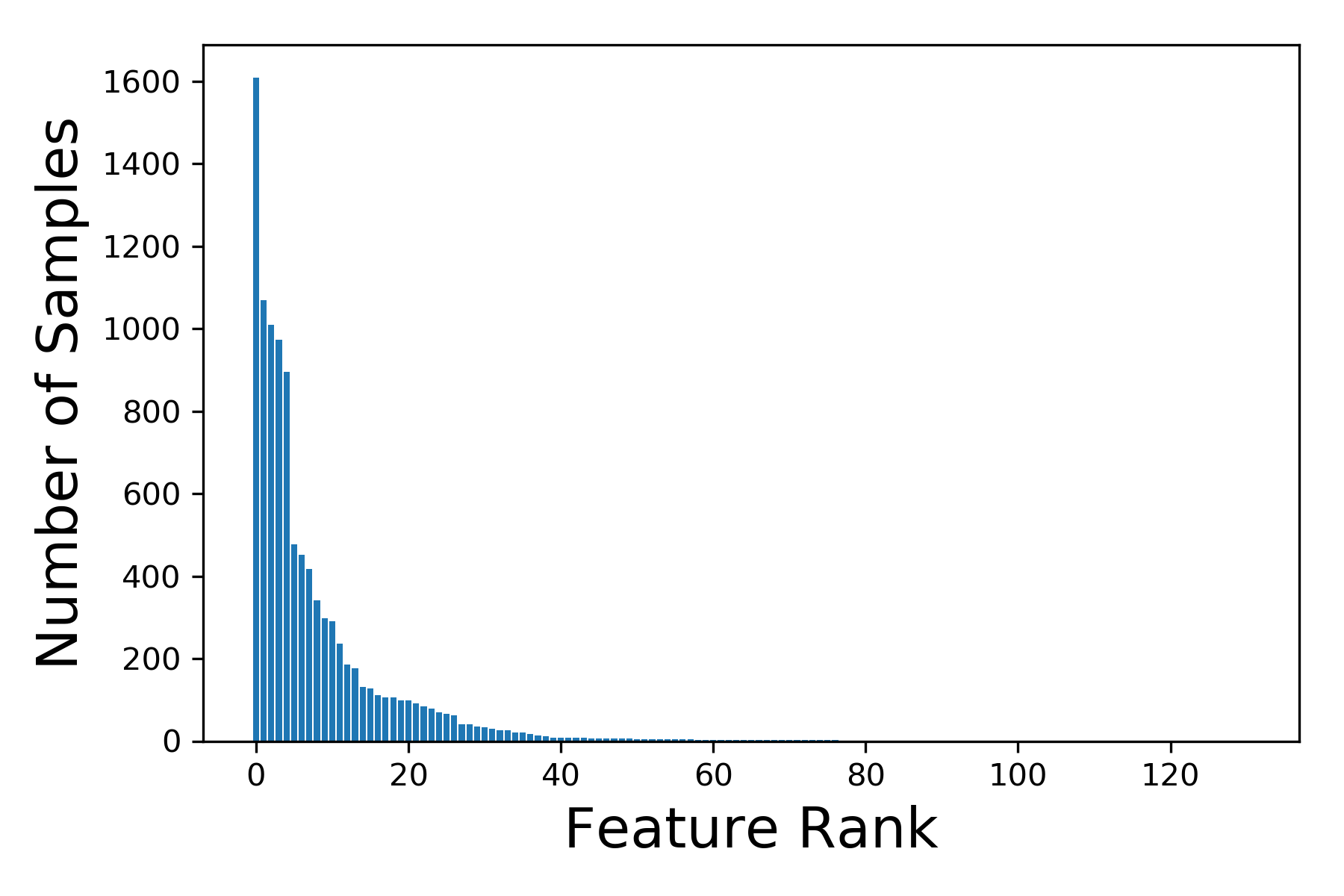}
    \caption{Histogram of the number of skin cancer samples for which each feature is the most predictive according to the magnitude of coefficients of the personalized models. \label{fig:cancer_hist}}
\end{subfigure}
\end{figure*}

\paragraph{Results.}
As shown in Table~\ref{tab:predictive}, PR produces the best predictions of tumor status amongst the methods evaluated. 
The substantial improvement over competing methods is likely due to the long tail of the distribution of characteristic features---we observe that the number of samples which assign the largest influence to each feature has a long tail (Fig.~\ref{fig:cancer_hist}). 
By summing the most important features for each instance, we can transform these sample-specific explanations into patient-specific explanations (Table~\ref{tab:cancer_variables}). 
These explanations depict a clustering of patients in which there are 8 distinct subtypes (visualized in Fig.~\ref{fig:cancer}). 
While we may hope that a mixture model could recover these patient clusters, actual mixture components are less accurate in prediction (Table~\ref{tab:predictive}), likely due to their independent estimation and reduced statistical power. 
Furthermore, this clustering by patient is incomplete---there is also significant heterogeneity in the models for each patient (Fig.~\ref{fig:cancer}).
This may point to the ``mosaic" view of tumors, under which single tumors are comprised of multiple cell lines~\citep{liu2011mosaic}.
This example underscores the benefits of treating sample heterogeneity as fundamental by designing algorithms to estimate sample-specific models.

\subsection{Presidential Election Analysis}
\label{sec:exp:election}
Our last experiment illustrates a practical use case for the example of modeling election outcomes discussed in Section~\ref{sec:intro}. The goals are twofold: 1) To predict county-level election results, and 2) To explore the use of distinct regression models as embeddings of samples in order to better understand voting preferences at the county (i.e. sample-specific) level. 
\paragraph{Data.}
The election predictors are taken from the 2012 U.S. presidential election and consists of discrete representations of each candidate based on candidate positions compiled by ProCon.\footnote{https://2012election.procon.org/view.source-summary-chart.php} 
Outcomes are the county-level vote proportions in the 2012 U.S. presidential election.\footnote{https://dataverse.harvard.edu/dataset.xhtml?persistentId=hdl:1902.1/21919} 
For the covariates $U$, we used county demographic information from the 2010 U.S. Census.\footnote{https://www.census.gov/data/datasets/2016/demo/popest/counties-detail.html}
As the outcome varies across samples but the predictors remain constant, the personalized regression models must encode sample heterogeneity by estimating different regression parameters for different samples, thus creating county representations (``embeddings") which combine both voting and demographic data.

\begin{figure*}[t]
\centering
\begin{subfigure}[t]{0.3\textwidth}
\centering
\includegraphics[width=\textwidth]{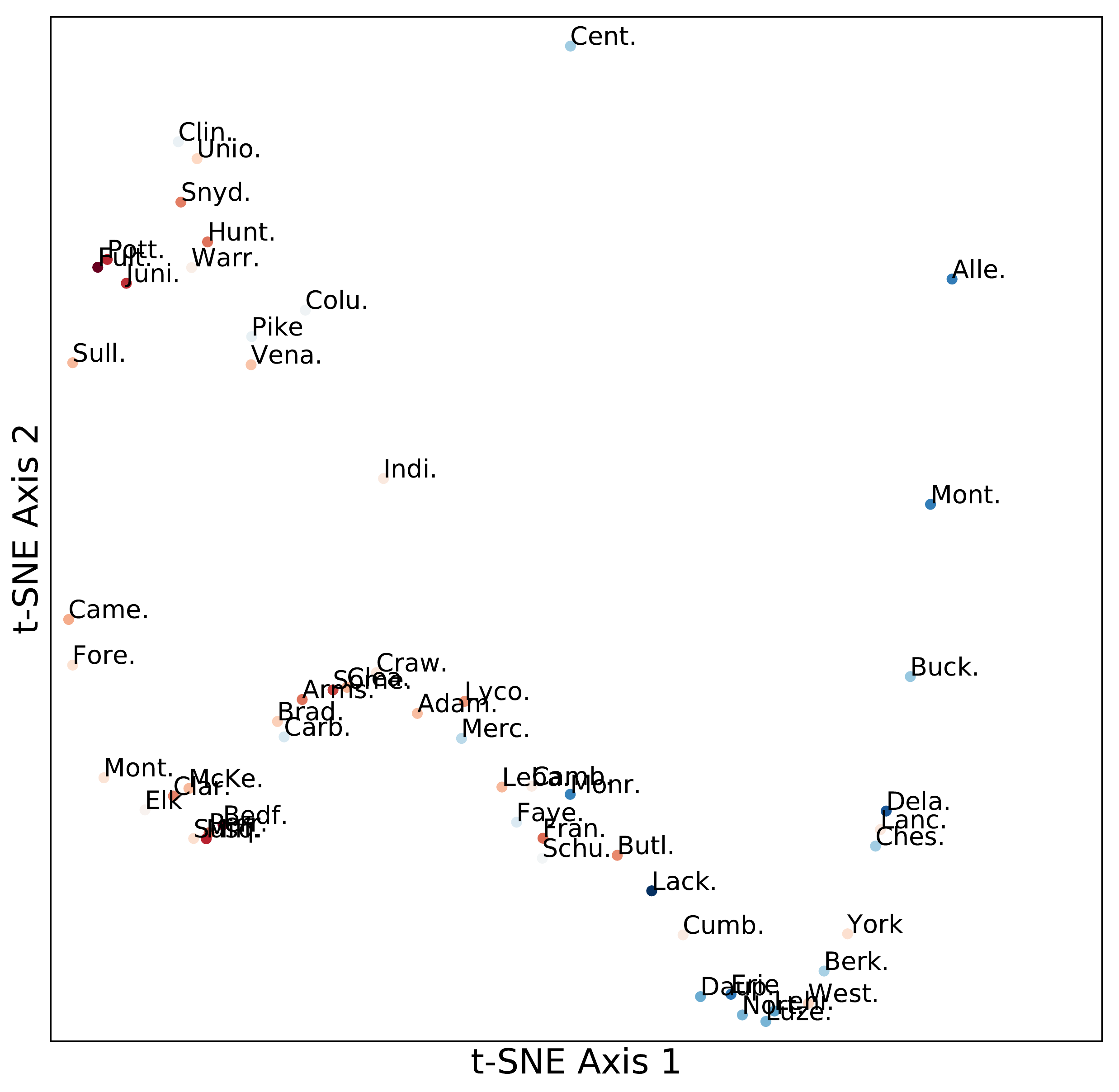}
\caption{Demographics, U. \label{fig:counties_u}}
\end{subfigure}
~
\begin{subfigure}[t]{0.3\textwidth}
\centering
\includegraphics[width=\textwidth]{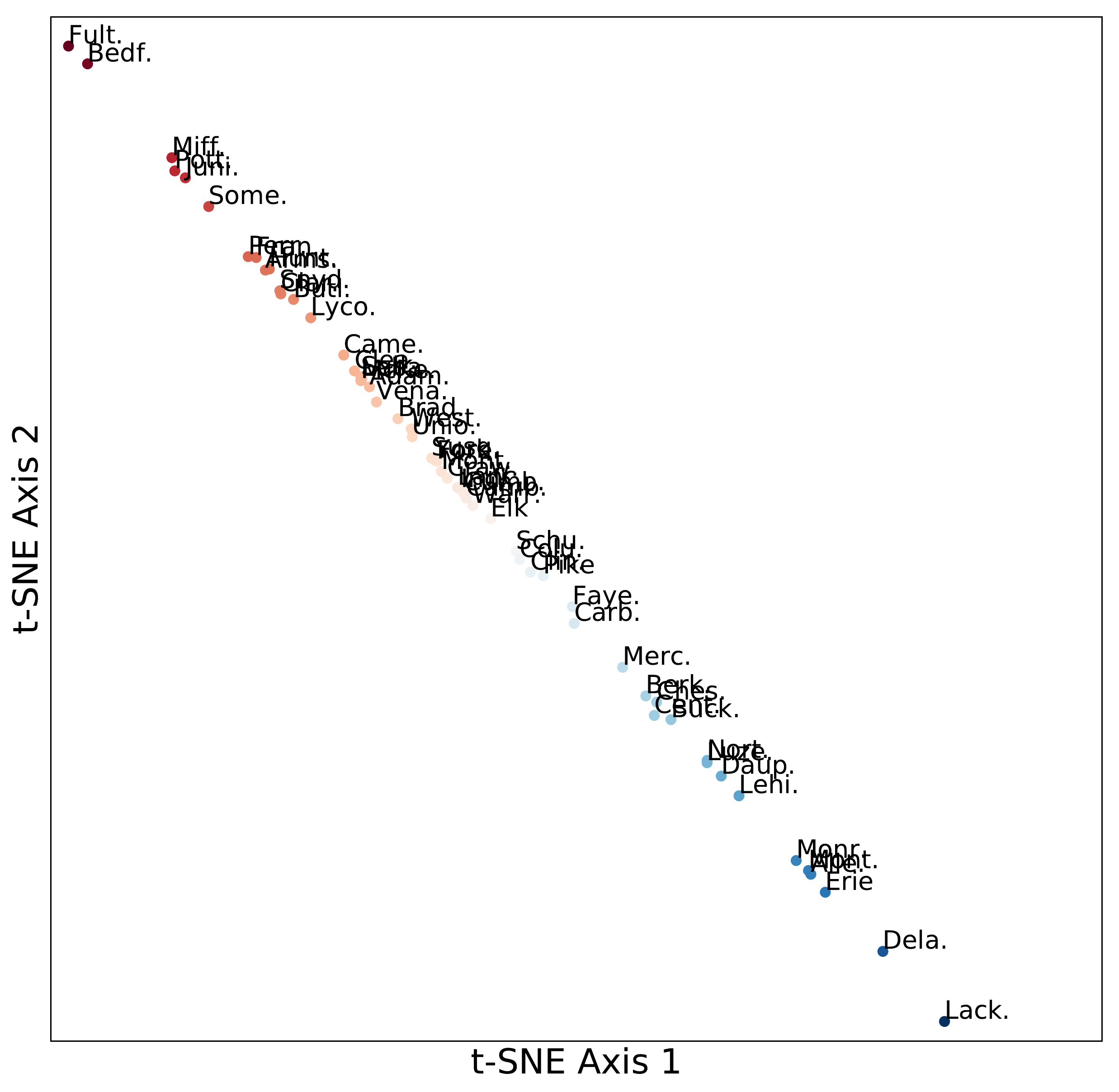}
\caption{Outcome, Y. \label{fig:counties_y}}
\end{subfigure}
~
\begin{subfigure}[t]{0.3\textwidth}
\centering
\includegraphics[width=\textwidth]{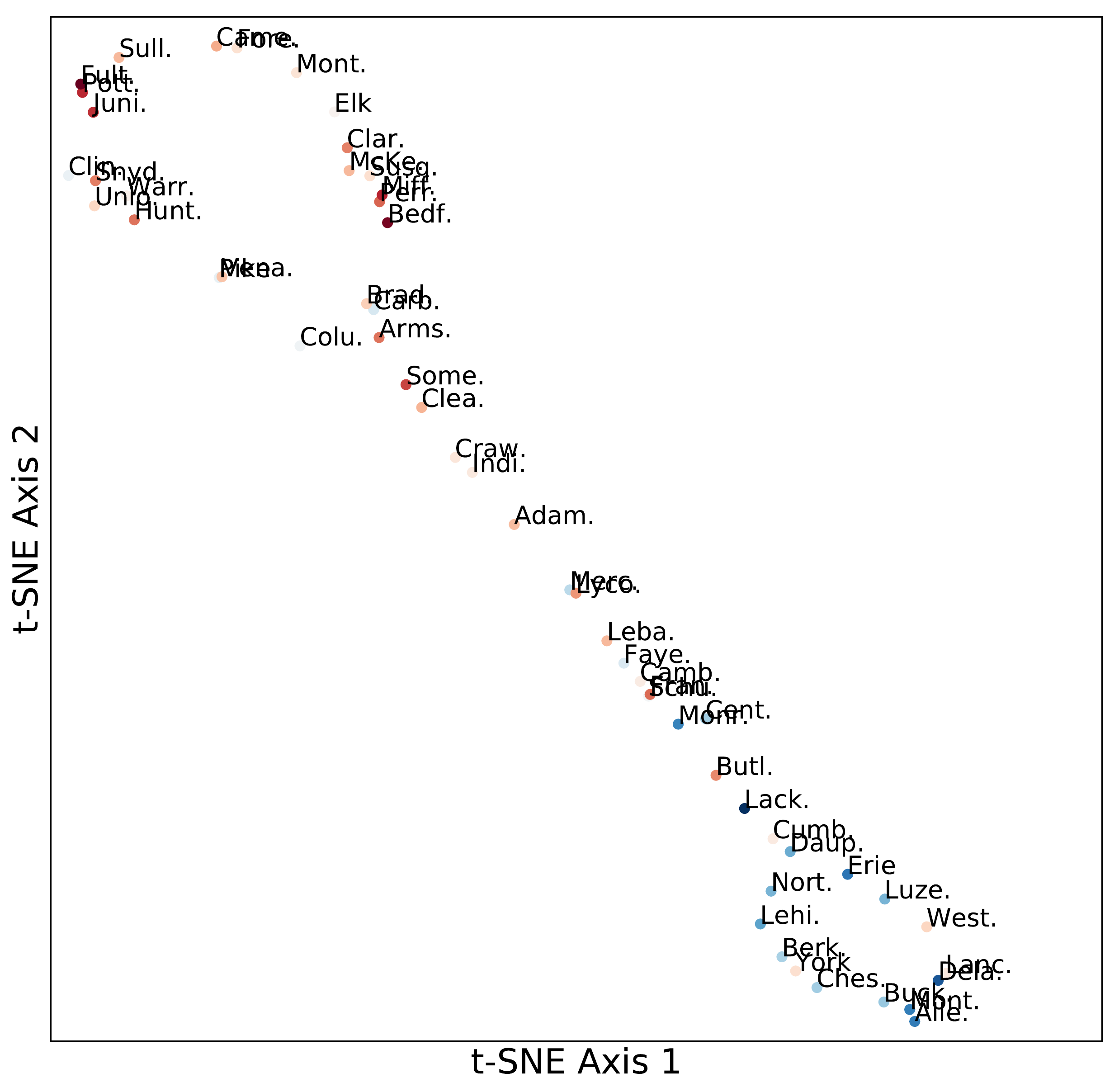}
\caption{Concatenated Embeddings. \label{fig:counties_cat}}
\end{subfigure}
~
\begin{subfigure}[t]{0.3\textwidth}
\centering
\includegraphics[width=\textwidth]{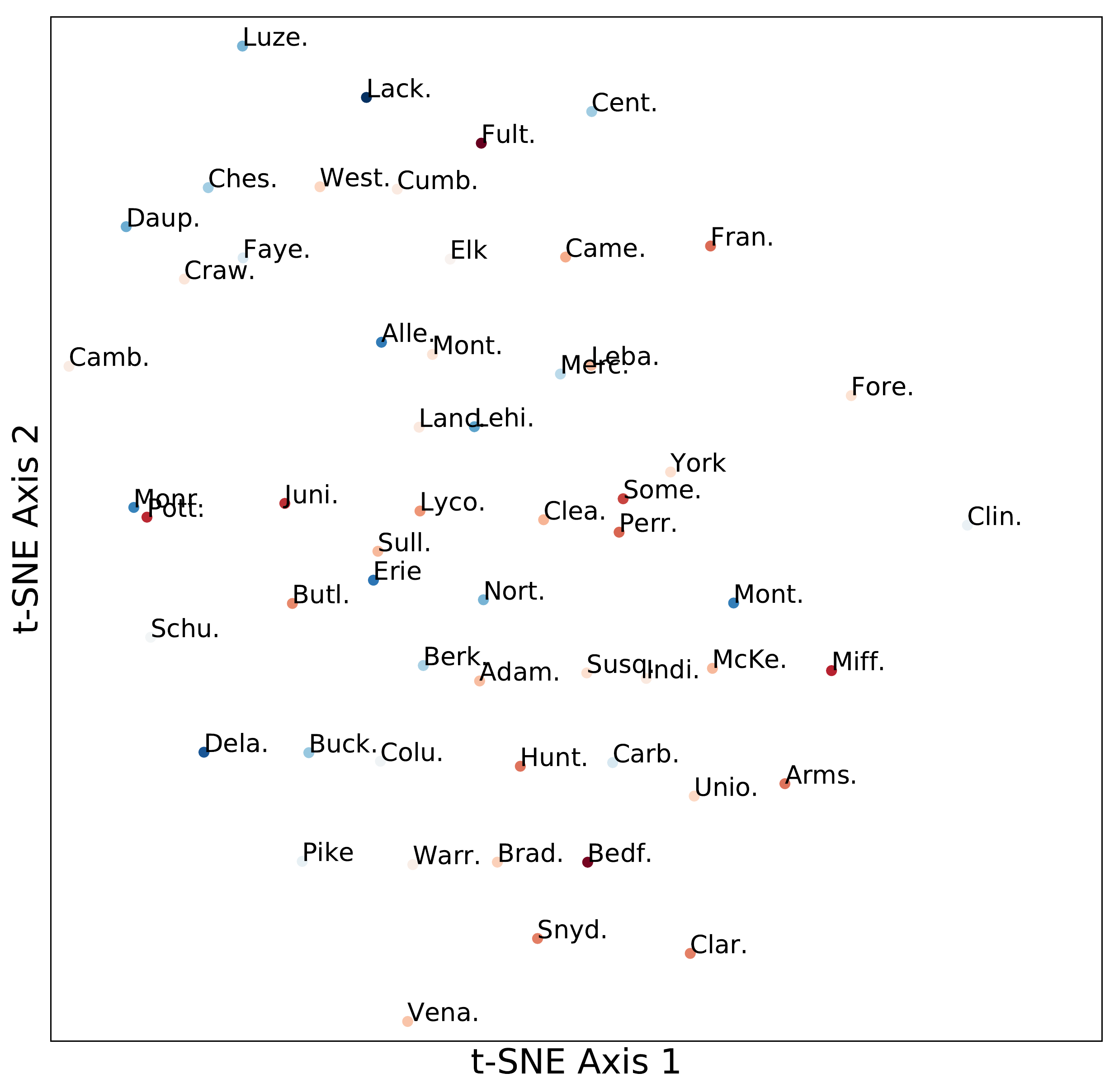}
\caption{VC Embeddings.\label{fig:counties_vc}}
\end{subfigure}
~
\begin{subfigure}[t]{0.3\textwidth}
\centering
\includegraphics[width=\textwidth]{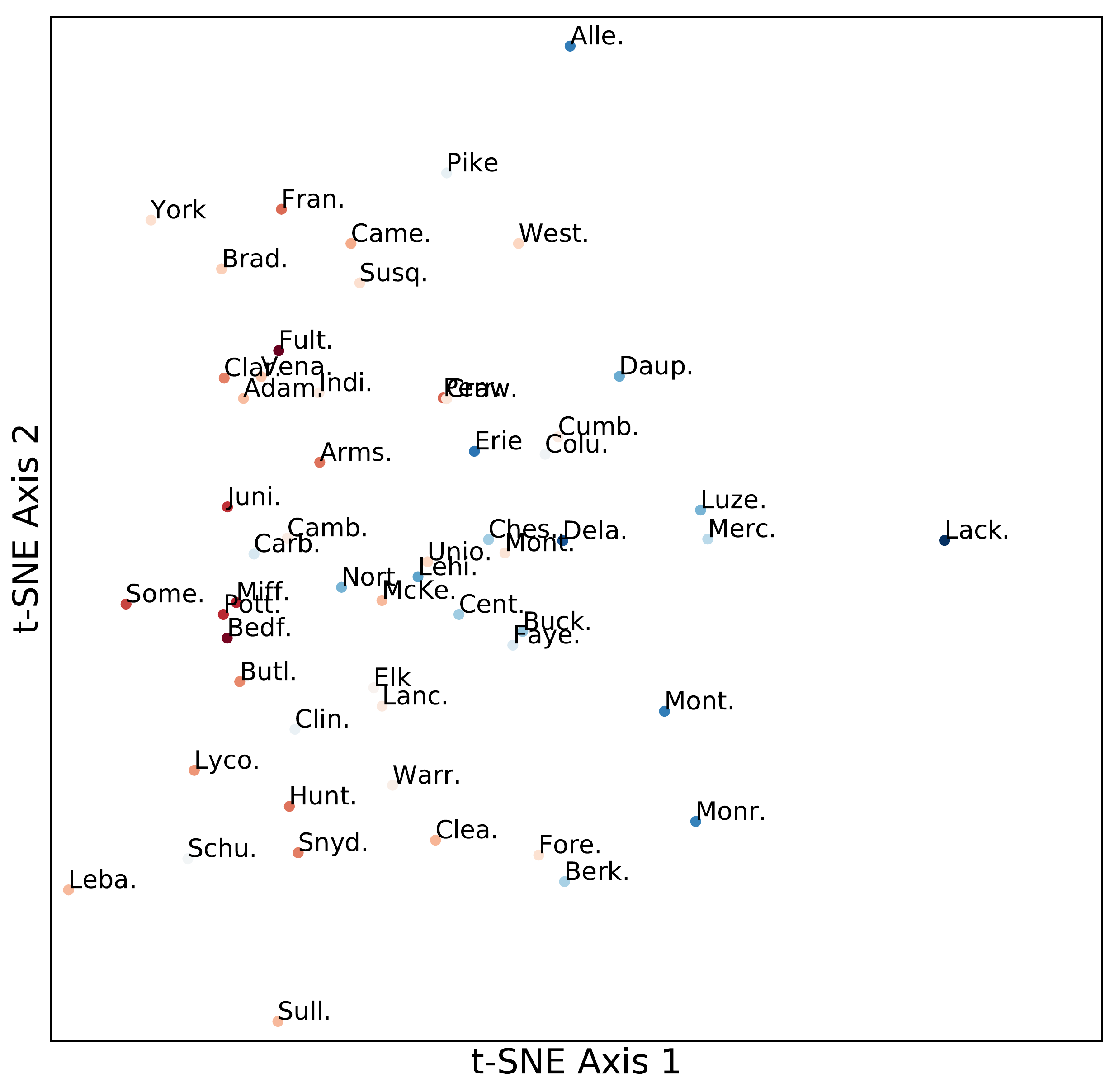}
\caption{Personalized Estimation, $\wh{Z}$\label{fig:counties_z}}
\end{subfigure}
\caption{Embeddings of Pennsylvania counties.
Each point represents the t-SNE embedding of a representation of a county, with color gradient corresponding to the 2012 election result (red for Republican candidate, blue for Democratic candidate).
(a) The county demographics (U) lie near a low-dimensional manifold that does not correspond to voter outcome. 
(b) The observed voting results lie near a one-dimensional manifold.
(c) Personalized regression produces sample embeddings ($\wh{Z}$) that trade off between demographic and voting information. \label{fig:counties}}
\end{figure*}

\paragraph{Results.}
The out-of-sample predictive error is significantly reduced by personalization (Table~\ref{tab:predictive}). 
Representations of the personalized models for Pennsylvania counties are shown in Fig.~\ref{fig:counties}, with a key to the abbrevations in Table~\ref{tab:county_abbrevs}. 
Generating county embeddings based solely on demographics produce embeddings which do not strongly correspond to voting patterns (Fig.~\ref{fig:counties_u}), while voting outcomes are near a one-dimensional manifold (Fig.~\ref{fig:counties_y}). 
In contrast, the personalized models produce a structure which interpolates between the two types of data (Fig.~\ref{fig:counties_z}).
These trends are not captured by the baseline methods, such as the varying-coefficients model (Fig.~\ref{fig:counties_vc}). 
In addition, concatenating the demographic and voting outcomes does not recover the same structure (Fig.~\ref{fig:counties_cat}). 

An interesting case is that of the Lackawanna and Allegheny counties. 
While these counties had similar voting results in the 2012 election, their embeddings are far apart due to the difference in demographics between their major metropolitan areas. 
This indicates that the county populations may be voting for different reasons despite similar demographics, a finding that is not discovered by jointly inspecting the demographic and voting data (Fig.~\ref{fig:counties_cat}). 
Thus, sample-specific models can be used to understand the complexities of election results.

\section{Discussion and Future Work}
We have presented a framework to estimate collections of models by matching structure in sample covariates to structure in regression parameters. 
We showed that this framework accurately recovers sample-specific parameters, enabling collections of simple models to surpass the predictive capacity of larger, uninterpretable models. 
Our framework also enables fine-grained analyses which can be used to understand sample heterogeneity, even within groups of similar samples. 
Beyond estimating sample-specific models, we also believe it would be possible to adapt these ideas to improve standard models. 
For instance, the distance-matching regularizer may be applied to augment standard mixture models. 
It would also be interesting to consider extensions of this framework to more structured models such as personalized probabilistic graphical models. 
Overall, the success of these personalized models underscores the importance of directly treating sample heterogeneity rather than building increasingly-complicated cohort-level models.

\subsubsection*{Acknowledgments}
We thank Maruan Al-Shedivat, Gregory Cooper, and Rich Caruana for insightful discussions. \\

\noindent 
This material is based upon work supported by NIH R01GM114311. Any opinions, findings and conclusions or recommendations expressed in this material are those of the author(s) and do not necessarily reflect the views of the National Institutes of Health.

\bibliographystyle{abbrvnat}

\clearpage
\appendix

\renewcommand{\thetable}{S\arabic{table}}
\renewcommand{\thefigure}{S\arabic{figure}}

\section{Analysis}
\label{sec:sup:analysis}

Initializing sample-specific models around a population estimate is convenient because the sample-specific estimations do not diverge from the population estimate unless they have strong reason to do so. 
Here, we analyze linear regression minimized by squared loss (e.g., $f(X^{(i)}, Y^{(i)}, \param^{(i)}) = (Y^{(i)} - X^{(i)}\param^{(i)})^2$), though the properties extend to any predictive loss function with a Lipschitz-continuous subgradient. 

We prove Theorem \ref{thm:com} by introducing two helpful lemmas. 
First, the distance-matching regularizer does not move the center of mass:
\begin{lemma}
\label{lem:dmr_bary}
For any $Z$, $\phi$
\begin{align*}
\sum_{i=1}^n\sum_{j \in B_{r}(i)}\frac{\partial}{\partial \param^{(i)}}\distreg_{\gamma}^{(j)}(Z, \phi) = \mathbf{0}.
\end{align*}
\end{lemma}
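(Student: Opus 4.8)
The plan is to exploit the pairwise, difference-based structure of the distance-matching regularizer and to cancel the gradient contributions in antisymmetric pairs. First I would expand the aggregate regularizer as
\[
\sum_{j=1}^n \distreg_{\gamma}^{(j)}(Z,\phi) = \frac{\gamma}{2}\sum_{j=1}^n\sum_{k\in B_r(j)} g_{jk}, \qquad g_{jk} := \big(\lrnm_{\phi}(U^{(j)},U^{(k)}) - \norm{Z^{(j)}-Z^{(k)}}^2\big)^2,
\]
and record two structural facts (holding $\phi$ fixed, since we differentiate only in the parameters): (i) $g_{jk}=g_{kj}$ is symmetric under swapping its two indices, and (ii) the neighborhood relation is symmetric, $k\in B_r(j)\iff j\in B_r(k)$, because the defining condition $\norm{Z^{(j)}-Z^{(k)}}^2<r$ is symmetric. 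Using (ii), for a fixed $i$ the term $\distreg_{\gamma}^{(j)}$ depends on $\param^{(i)}$ exactly when $j\in B_r(i)$ (including the index $j=i$, since $i\in B_r(i)$). Hence the constraint $j\in B_r(i)$ in the lemma's double sum picks out precisely the nonvanishing terms, and the claimed left-hand side equals the total gradient $\sum_{i=1}^n \tfrac{\partial}{\partial\param^{(i)}}\sum_{j=1}^n \distreg_{\gamma}^{(j)}(Z,\phi)$.

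The core step is the antisymmetry of each per-pair gradient. Each $g_{jk}$ depends on the loadings only through the difference $Z^{(j)}-Z^{(k)}$, so writing $g_{jk}=h(Z^{(j)}-Z^{(k)})$ and applying the chain rule gives $\partial g_{jk}/\partial Z^{(j)} = -\,\partial g_{jk}/\partial Z^{(k)}$. The update direction for $\param^{(i)}$ is obtained from the $Z^{(i)}$-gradient through the \emph{shared} reconstruction $\param^{(i)}=Q^{T}Z^{(i)}$, and because the same linear map $Q^{T}$ is applied to every sample it preserves this sign flip. Therefore, for a fixed pair $(j,k)$ the only two parameters it couples receive exactly opposite contributions, and summing over all $i$ yields $\tfrac{\partial}{\partial\param^{(j)}}g_{jk} + \tfrac{\partial}{\partial\param^{(k)}}g_{jk} = \mathbf 0$. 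Summing this identity over all pairs with $k\in B_r(j)$ gives the claim. The diagonal terms $j=k$ contribute nothing, since $\lrnm_{\phi}(U^{(j)},U^{(j)})=0$ (each $\fixm_{\ell}$ is a metric), so $g_{jj}=0$ identically and its gradient vanishes.

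The main subtlety I expect to handle carefully is the non-differentiability introduced by the neighborhood set $B_r(i)$, whose membership itself depends on $Z$. I would resolve this exactly as the algorithm does: treat the neighborhood memberships as fixed during a single gradient evaluation (they are recomputed between iterations), so that only the smooth summands $g_{jk}$ are differentiated and the antisymmetry argument applies verbatim; at the measure-zero boundary $\norm{Z^{(i)}-Z^{(j)}}^2=r$ one passes to a subgradient, which does not affect the pairwise cancellation. The remaining work is the routine re-indexing of the double sum and the bookkeeping needed to invoke the symmetry of $B_r$, which I would state but not belabor.
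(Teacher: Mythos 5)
Your argument is correct and is essentially the paper's own proof: both rest on the antisymmetry of the per-pair gradient (each summand depends on the loadings only through $Z^{(i)}-Z^{(j)}$, so swapping indices flips the sign) together with the symmetry of $\lrnm_{\phi}$ and of the neighborhood relation $B_r$, after which the double sum cancels in pairs and the diagonal terms vanish. Your extra care about treating the $B_r$ memberships as fixed during a gradient evaluation and about the $j=i$ terms is a welcome tightening of the bookkeeping, but it does not change the route.
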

\begin{proof}[Proof of Lemma \ref{lem:dmr_bary}]
Let $g(i, j) = \mathcal{I}_{\{j \in B_r(i)\}}\frac{\partial}{\partial \param^{(i)}}\Big(\lrnm_{\phi}(U^{(i)}, U^{(j)}) - \norm{Z^{(i)} - Z^{(j)}}^2 \Big)^2$. Then, for any symmetric $\lrnm_{\phi}$,
$\forall~i,j~\in~\{1,...,n\}$,
\begin{subequations}
\begin{align}
g(i, j) &= 2\mathcal{I}_{\{j \in B_r(i)\}} \Big(\lrnm_{\phi}(U^{(i)}, U^{(j)}) - \norm{Z^{(i)} - Z^{(j)}}^2 \Big)(-2)(Z^{(i)} - Z^{(j)})\\
&= -2 \mathcal{I}_{\{i \in B_r(j)\}} \Big(\lrnm_{\phi}(U^{(i)}, U^{(j)}) - \norm{Z^{(i)} - Z^{(j)}}^2 \Big)(-2)(Z^{(j)} - Z^{(i)})\\
&= -g(j, i)
\end{align}
. So
\begin{align}
\sum_{i=1}^n\sum_{j\in B_r(i)}\frac{\partial}{\partial \param^{(i)}}\distreg_{\gamma}^{(j)}(Z, \phi) &= \sum_{i=1}^n\sum_{j=1}^n g(i,j) \\
&= \sum_{i=1}^n g(i,i) = \mathbf{0} \qedhere
\end{align}
\end{subequations}
\end{proof}

This implies that the distance-matching regularizer has no effect on $\bar{\param}$. 
An intuitive explanation is to visualize each $\distreg_{\gamma}^{(i)}(Z, \phi)$ as a collection of springs connecting estimator $i$ to each of the other estimators. 
While the springs will have some control over the pairwise distances, they cannot move the center of mass of any pair of particles and thus cannot adjust the center of mass of the system.

Second, the update to the center of mass does not grow with the number of samples: 
\begin{lemma}
\label{lem:lin_iter}
At iteration $t$, the update to the center of mass is bounded by:
\begin{align}
\norm{\bar{\param}_{t+1} - \bar{\param}_t}_{\infty} \leq \alpha_t(\lambda + 1)
\end{align}
\end{lemma}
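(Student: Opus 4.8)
The plan is to write the center-of-mass increment as the average of the per-sample subgradient steps, then bound the three types of terms separately, using Lemma~\ref{lem:dmr_bary} to eliminate the distance-matching contribution and the normalization of $X$ to control the prediction loss. Concretely, since $\bit$ is updated by a (personalized) subgradient step, I would write
\begin{align*}
\bary_{t+1} - \bary_t = -\frac{1}{n}\sum_{i=1}^n \alpha^{(i)}_t\Big(\underbrace{\textstyle\sum_{j}\tfrac{\partial}{\partial\param^{(i)}}\distreg_\gamma^{(j)}}_{\text{DMR}} + \partial\ell(X^{(i)},Y^{(i)},\bit) + \lambda\, s^{(i)}\Big),
\end{align*}
where $\alpha^{(i)}_t = \alpha_t/\infnorm{\bit - \pop}$ is the personalized rate and $s^{(i)}\in\partial\norm{\bit}_1$ satisfies $\infnorm{s^{(i)}}\le 1$. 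The argument then treats the distance-matching, prediction-loss, and $\ell_1$-penalty groups one at a time and recombines them by the triangle inequality.

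First I would dispose of the distance-matching term: by Lemma~\ref{lem:dmr_bary} the summed DMR subgradients vanish, so they make no net contribution to $\bary_{t+1}-\bary_t$. This is the step that prevents the bound from accumulating a factor of $n$, since, in the spring analogy, the pairwise forces cancel in the center of mass. The $\ell_1$ term is routine: each sign subgradient has entries in $[-\lambda,\lambda]$, so $\infnorm{\lambda s^{(i)}}\le\lambda$, and averaging preserves this, contributing at most $\alpha_t\lambda$ to the update.

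The hard part will be the prediction-loss term. For squared loss, $\partial\ell(X^{(i)},Y^{(i)},\bit) = 2\big(\ip{X^{(i)},\bit}-Y^{(i)}\big)X^{(i)}$, whose $\infty$-norm is governed by $\infnorm{X^{(i)}}\le 1$ together with the residual $\ip{X^{(i)},\bit}-Y^{(i)}$, which is \emph{not} bounded a priori. The key observation I would exploit is that the personalized learning rate normalizes by exactly the right scale: by H\"older's inequality and $\norm{X^{(i)}}_1 = 1$, the variation of the residual away from the population model obeys $\abs{\ip{X^{(i)},\bit - \pop}}\le\infnorm{\bit-\pop}$, which is precisely the denominator of $\alpha^{(i)}_t$. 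Dividing the loss gradient by $\infnorm{\bit-\pop}$ therefore collapses the residual-dependent factor to order $\alpha_t$, so the loss contributes at most $\alpha_t\cdot 1$ per coordinate of the average; combined with the $\ell_1$ bound this yields $\infnorm{\bary_{t+1}-\bary_t}\le\alpha_t(\lambda+1)$, with no dependence on $n$. The main obstacle to making this fully rigorous is exactly this loss estimate: one must carefully account for the fixed population residual $\ip{X^{(i)},\pop}-Y^{(i)}$ and verify that, together with the personalized step size and the normalization of $X$, it reduces to the stated constant rather than scaling with the data or with $n$.
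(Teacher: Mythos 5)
Your decomposition, the use of Lemma~\ref{lem:dmr_bary} to cancel the distance-matching term, and the observation that the personalized learning rate $\alpha_t^{(i)} = \alpha_t/\infnorm{\bit-\pop}$ is what tames the deviation-dependent part of the loss gradient all match the paper's argument. But the obstacle you flag at the end --- the fixed population residual $\ip{X^{(i)},\pop}-Y^{(i)}$ --- is a genuine gap, and the paper closes it with an idea your proposal does not contain: the first-order optimality condition of the population estimator, $\sum_{i=1}^n \ell^{(i)}(\pop) = -n\reg'(\pop)$. The paper Lipschitz-expands both the loss subgradient and the penalty subgradient around $\pop$, so that each per-sample update splits into a ``constant'' part $\ell^{(i)}(\pop)+\reg'_{\lambda}(\pop)$, which sums to zero across samples by population optimality, and a ``deviation'' part bounded by $(k^{(i)}+k_{\reg})\infnorm{\bit-\pop}$, which the personalized rate collapses to $\alpha_t(1+\lambda)$ using $k^{(i)}\le 1$ and $k_{\reg}=\lambda$. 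Without that cancellation, the population-residual term is multiplied by $\alpha_t^{(i)}=\alpha_t/\infnorm{\bit-\pop}$ and blows up as $\bit\to\pop$ --- which is exactly the regime at initialization, since $\param^{(i)}\sim N(\pop,\epsilon I)$ --- so no bound of the form $\alpha_t(\lambda+1)$ can be extracted from your estimate.

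The same issue invalidates your treatment of the $\ell_1$ term. You bound $\infnorm{\lambda s^{(i)}}\le\lambda$ and conclude a contribution of at most $\alpha_t\lambda$, but the quantity actually entering the average is $\alpha_t^{(i)}\lambda s^{(i)}$, which exceeds $\alpha_t\lambda$ in norm whenever $\infnorm{\bit-\pop}<1$. The paper instead writes $\reg'_{\lambda}(\bit)=\reg'_{\lambda}(\pop)+\big(\reg'_{\lambda}(\bit)-\reg'_{\lambda}(\pop)\big)$, cancels $\reg'_{\lambda}(\pop)$ against $\sum_i\ell^{(i)}(\pop)$ via the same optimality condition, and bounds the remainder by $k_{\reg}\infnorm{\bit-\pop}=\lambda\infnorm{\bit-\pop}$, which the personalized rate again reduces to $\alpha_t\lambda$. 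So the single missing ingredient --- centering every subgradient at $\pop$ and invoking $\sum_i\ell^{(i)}(\pop)+n\reg'(\pop)=0$ --- is precisely what makes both of your problematic terms tractable; without it the proof does not go through.
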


\begin{proof}[Proof of Lemma \ref{lem:lin_iter}]
The update to the barycenter at iteration $t$ is:
\begin{subequations}
\begin{align}
	\bar{\param}_{t+1} - \bar{\param}_t &= \frac{1}{n} \big[\sum_{i=1}^n \alpha^{(i)}_{t}\frac{\partial}{\partial \param^{(i)}}\ell(X^{(i)}, Y^{(i)}, \param^{(i)}) + \alpha^{(i)}_{t}\reg'(\param^{(i)})  \label{eq:fs} \\
	&+ \sum_{i=1}^n\alpha^{(i)}_t\sum_{j\in B_r(i)}\frac{\partial}{\partial \param^{(i)}}\distreg_{\gamma}^{(j)}(Z, \phi)\big] \label{eq:gs} \\
	&= \frac{1}{n} \big[\sum_{i=1}^n \alpha^{(i)}_{t}\frac{\partial}{\partial \param^{(i)}}\ell(X^{(i)}, Y^{(i)}, \param^{(i)}) + \alpha^{(i)}_{t}\reg'(\param^{(i)}) \big] & \text{by Lemma \ref{lem:dmr_bary}} 
\end{align}

where $\reg'(\cdot)$ is the sub-gradient of $\reg_{\lambda}(\cdot)$ used for optimization. 
Let $\ell^{(i)}(\cdot) = \ell'(Y^{(i)}, X^{(i)}, \cdot)$ where
$\ell'$ is the subgradient of $\ell$ used for optimization, with $\ell^{(i)}$ $k^{(i)}$-Lipschitz continuous. Let $\reg'$ be $k_{\reg}$ Lipschitz-continuous. 
Then
\begin{align}
\infnorm{\bar{\param}_{t+1} - \bar{\param}_t} &= \frac{1}{n} \infnorm{\sum_{i=1}^n \alpha^{(i)}_{t}\big(\ell^{(i)}(\bit) + \reg'_{\lambda}(\bit)\big)}\\
&\leq \frac{1}{n} \infnorm{\sum_{i=1}^n \alpha^{(i)}_{t}\big(k^{(i)}|\bit-\pop| + \ell^{(i)}(\pop) + \reg'_{\lambda}(\pop) + \reg'_{\lambda}(\bit) - \reg'_{\lambda}(\pop)\big)} \\
&= \frac{1}{n} \infnorm{\sum_{i=1}^n \alpha^{(i)}_{t}\big( k^{(i)}|\bit-\pop| + \reg'_{\lambda}(\bit) - \reg'_{\lambda}(\pop) \big)} \label{eq:pop_simplify}\\
&\leq \frac{1}{n} \infnorm{\sum_{i=1}^n \alpha^{(i)}_{t}\big( k^{(i)}|\bit-\pop| + k_{\reg}|\bit - \pop| \big)} \\
&\leq (\max_{i}k^{(i)} + k_{\reg})\max_{i}\alpha^{(i)}_{t}\infnorm{\bit - \pop}
\end{align}
\end{subequations}
where \ref{eq:pop_simplify} holds because the population estimator $\pop$ was solved by a gradient descent algorithm resulting in $\sum_{i=1}^n \ell^{(i)}(\pop) = -n\reg'(\pop)$. 
For
$
\reg'(x) = \begin{cases}
-\lambda & x < 0 \\
\lambda & x > 0 \\
0 & x = 0
\end{cases}
$, as used in our experiments, $k_{\reg} = \lambda$. 
For linear regression, $k^{(i)} = \norm{X^{(i)}}_1 \max_j X^{(i)}_j$. 
For $X$ normalized so that $\norm{X^{(i)}}_1 = 1$ and $\infnorm{X^{(i)}}\leq 1$, $\max_{i}k^{(i)} \leq 1$. 
With $\alpha_{t}^{(i)} = \frac{\alpha_t}{\infnorm{\bit - \pop}}$, we have the upper bound on the update:
\begin{align*}
\infnorm{\bar{\param}_{t+1} - \bar{\param}_t} 
& \leq \alpha_t(\lambda + 1).
\qedhere
\end{align*}
\end{proof}

Now we are ready to prove Theorem \ref{thm:com}.
\begin{proof}[Proof of Theorem \ref{thm:com}]
For a multiplicative decay on the learning rate $\alpha_t \leq \alpha_0c^t$,
\begin{align*}
\infnorm{\bar{\param}_{\tau} - \pop} &\leq \sum_{t=1}^{\tau}\infnorm{\bar{\param}_{t} - \bar{\param}_{t-1}}  \\
& \leq \sum_{t=1}^{\tau}\alpha_t(\lambda + 1) & \text{by Lemma \ref{lem:lin_iter}} \\
& \leq \alpha_0(\lambda+1)\sum_{t=1}^{\tau}c^t \\
& \leq \alpha_0(\lambda+1)\frac{1 - c^{\tau}}{1-c}.
\qedhere
\end{align*}
\end{proof}

\section{Additional figures and discussion}

\subsection{Finance}
As described in the main text, we fit a variety of personalized models to a financial dataset of stock market price histories and world news headlines. 
To understand the relevance of each feature, we visualize the coefficients for each security in Fig.~\ref{fig:finance_params}.
The striped pattern is a result of the alternating arrangement of news and prices.
In all securities, the effects of the Great Financial Crisis in 2008 are clear.
Interestingly, other recessions do not seem to have similar lasting effects on parameter values, implying that these recessions had fewer structural effects than the Great Financial Crisis.

\begin{figure*}[htp]
\centering
\begin{subfigure}[b]{0.3\textwidth}
\centering
\includegraphics[width=\textwidth]{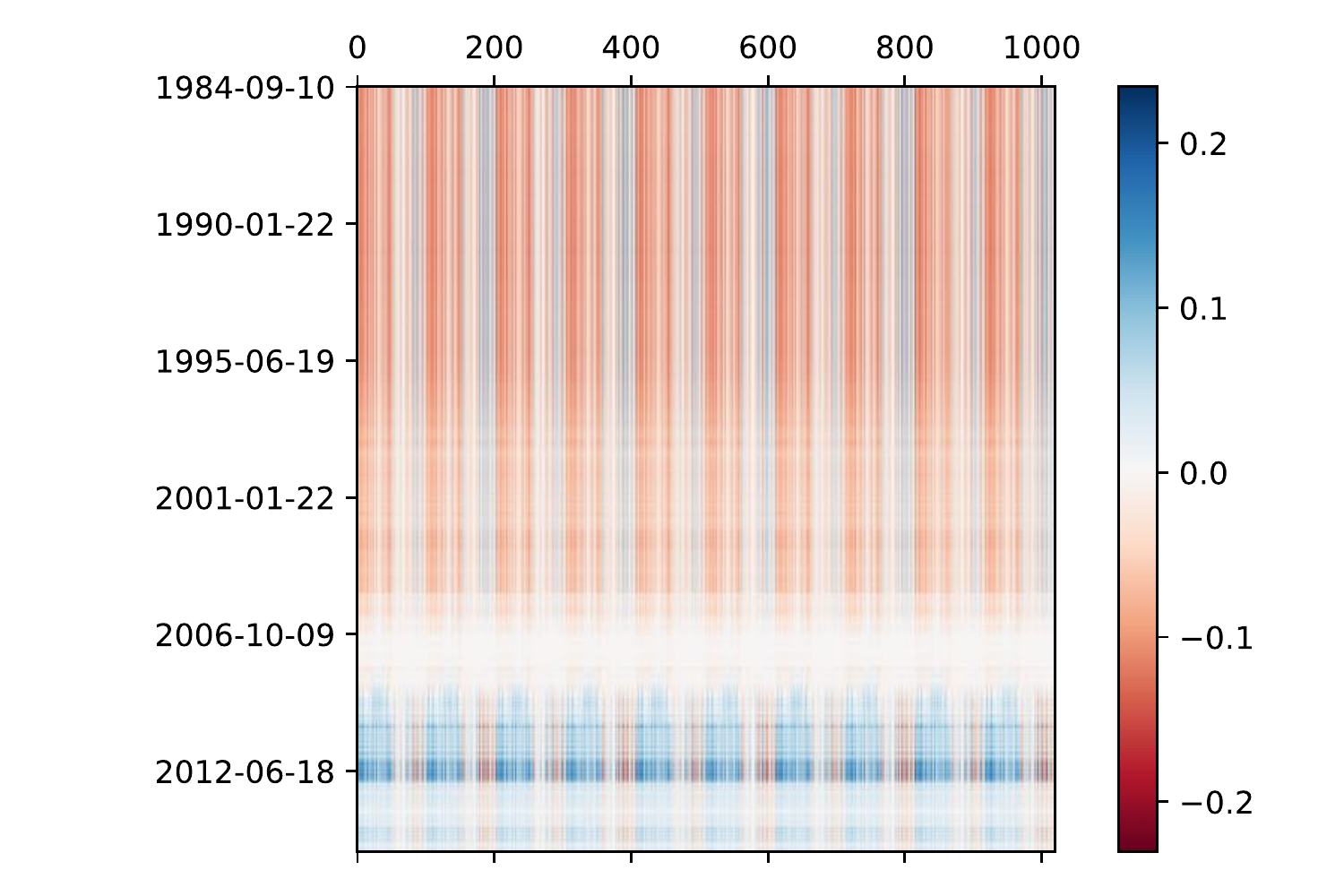}
\caption{AAPL}
\end{subfigure}
~
\begin{subfigure}[b]{0.3\textwidth}
\centering
\includegraphics[width=\textwidth]{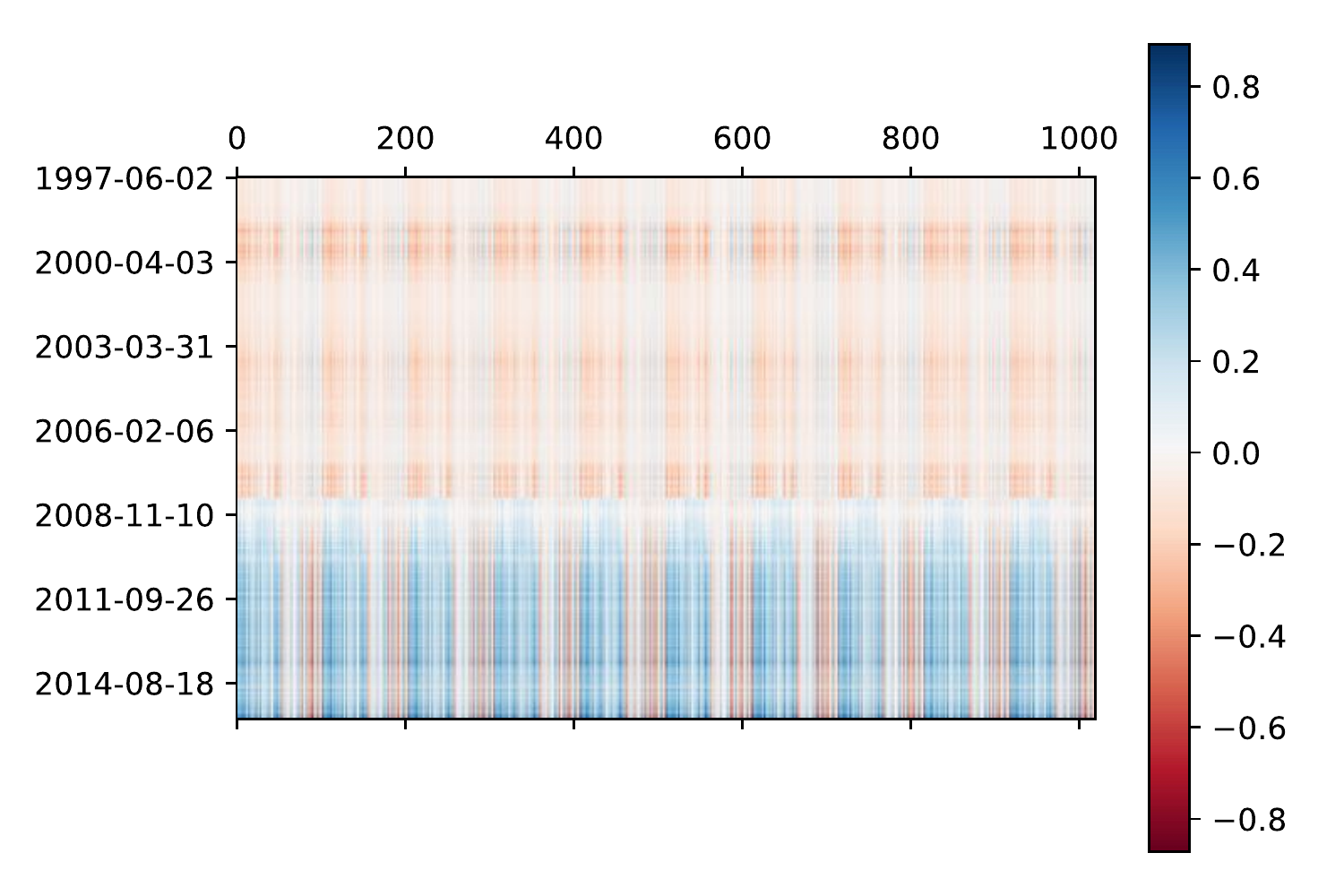}
\caption{AMZN}
\end{subfigure}
~
\begin{subfigure}[b]{0.3\textwidth}
\centering
\includegraphics[width=\textwidth]{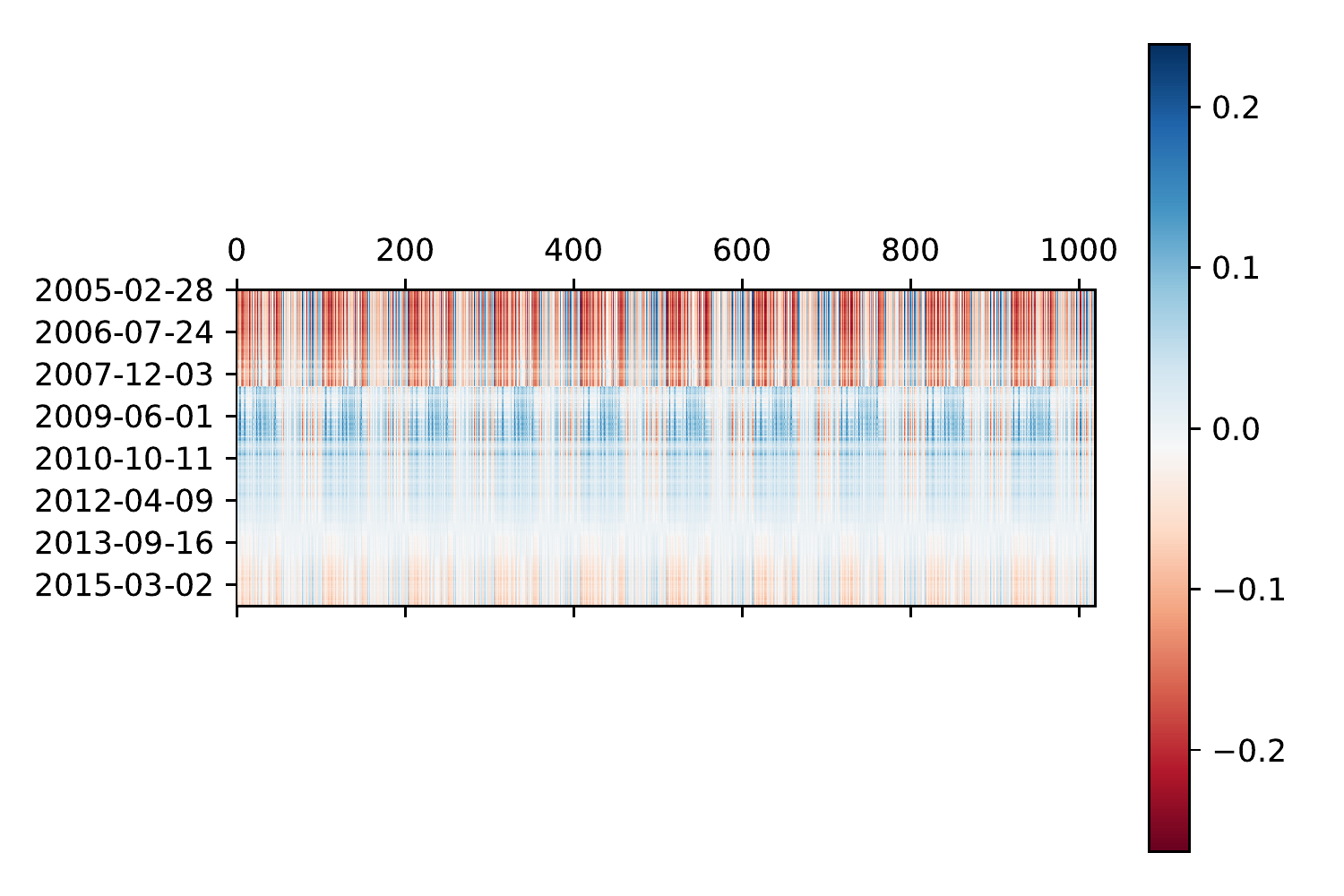}
\caption{BP}
\end{subfigure}
~
\begin{subfigure}[b]{0.3\textwidth}
\centering
\includegraphics[width=\textwidth]{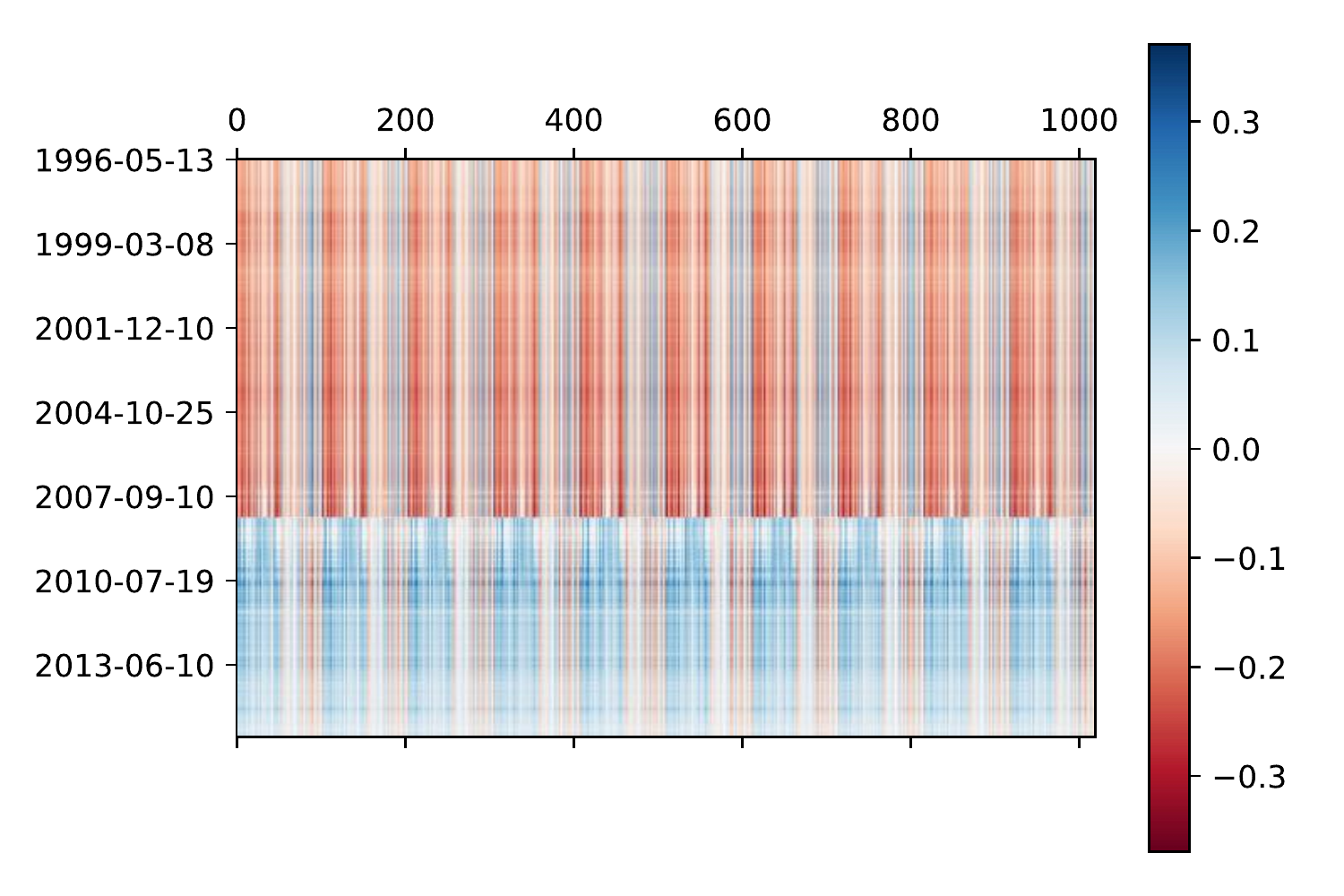}
\caption{BRK-B}
\end{subfigure}
~
\begin{subfigure}[b]{0.3\textwidth}
\centering
\includegraphics[width=\textwidth]{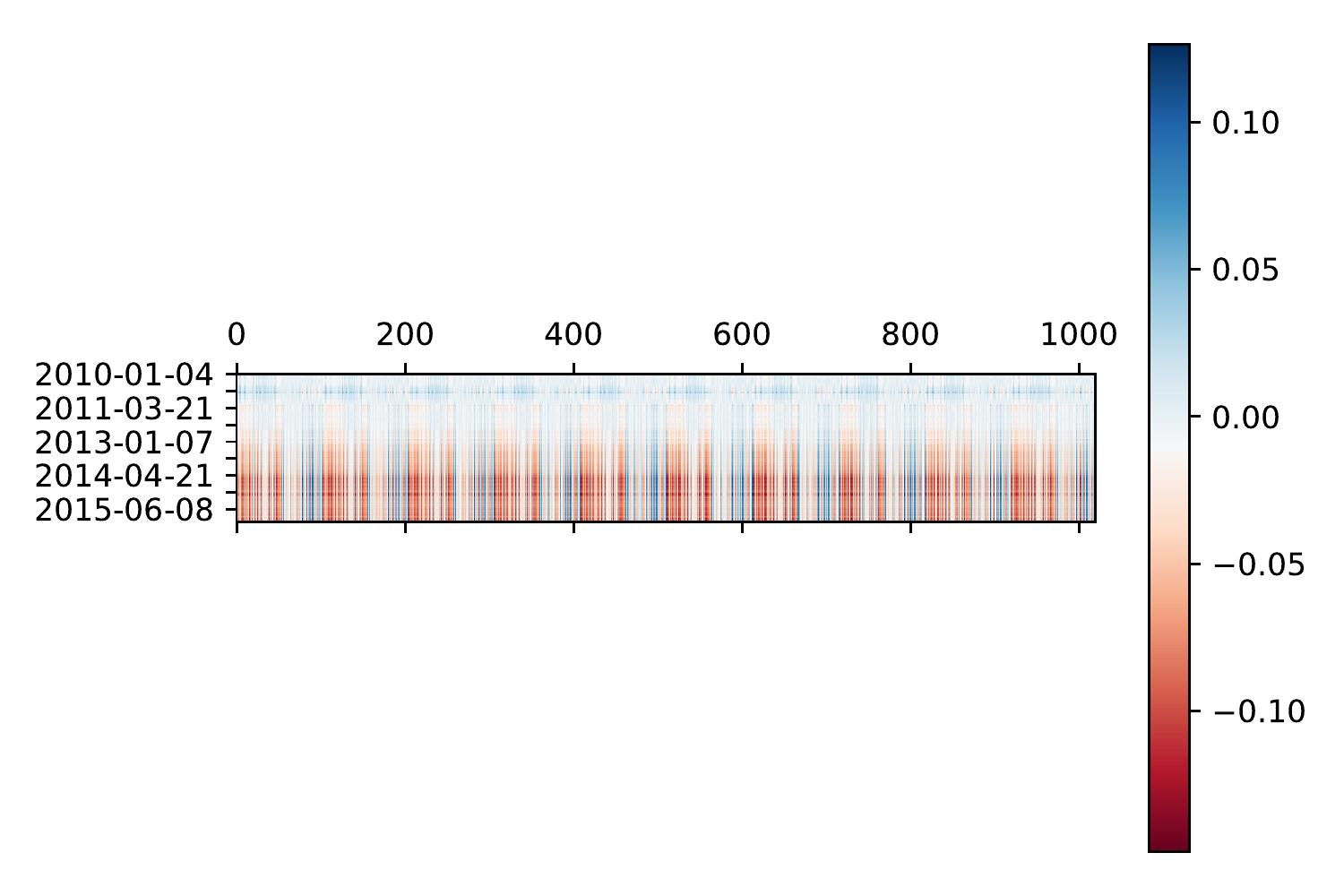}
\caption{CHIX}
\end{subfigure}
~
\begin{subfigure}[b]{0.3\textwidth}
\centering
\includegraphics[width=\textwidth]{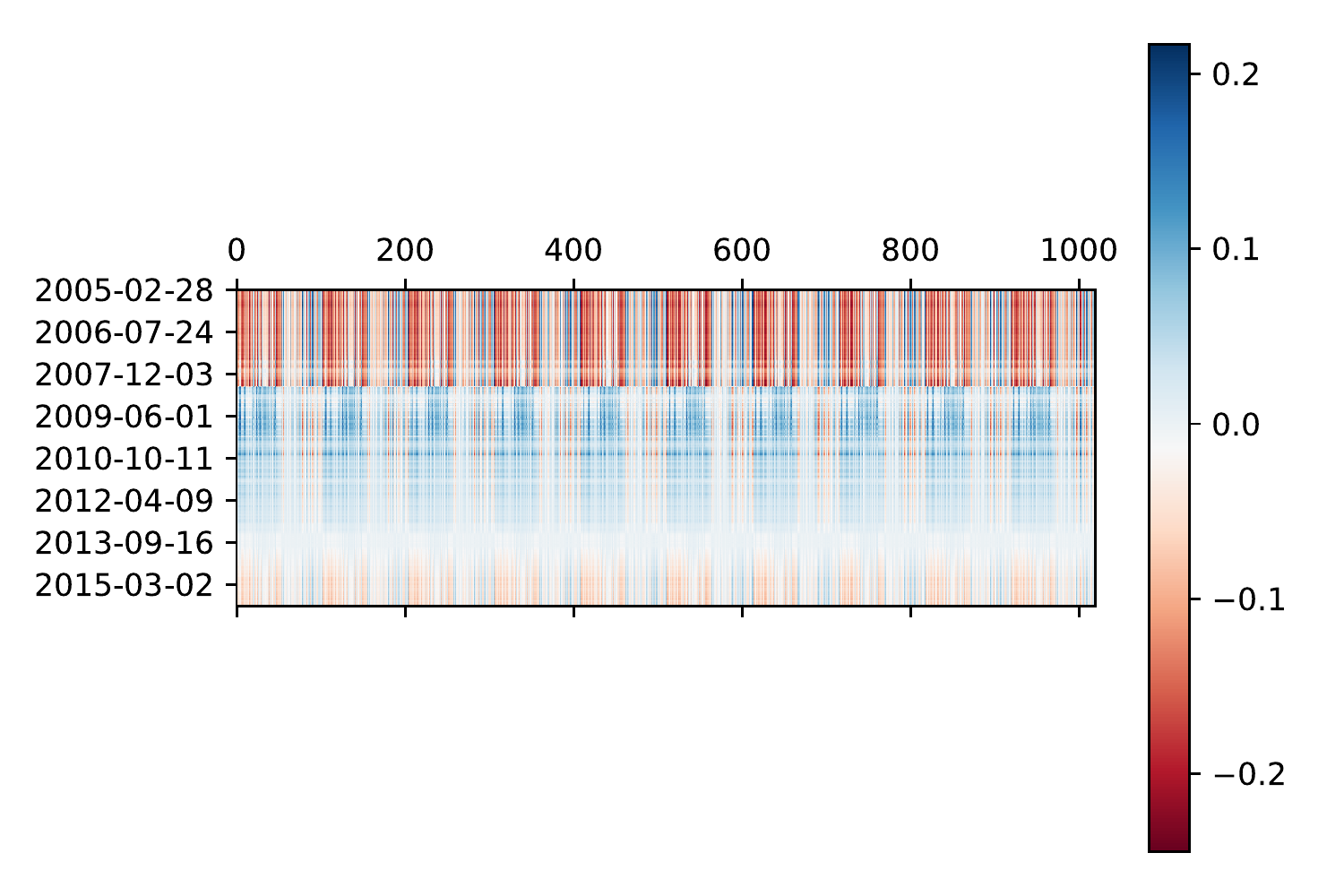}
\caption{E}
\end{subfigure}
~
\begin{subfigure}[b]{0.3\textwidth}
\centering
\includegraphics[width=\textwidth]{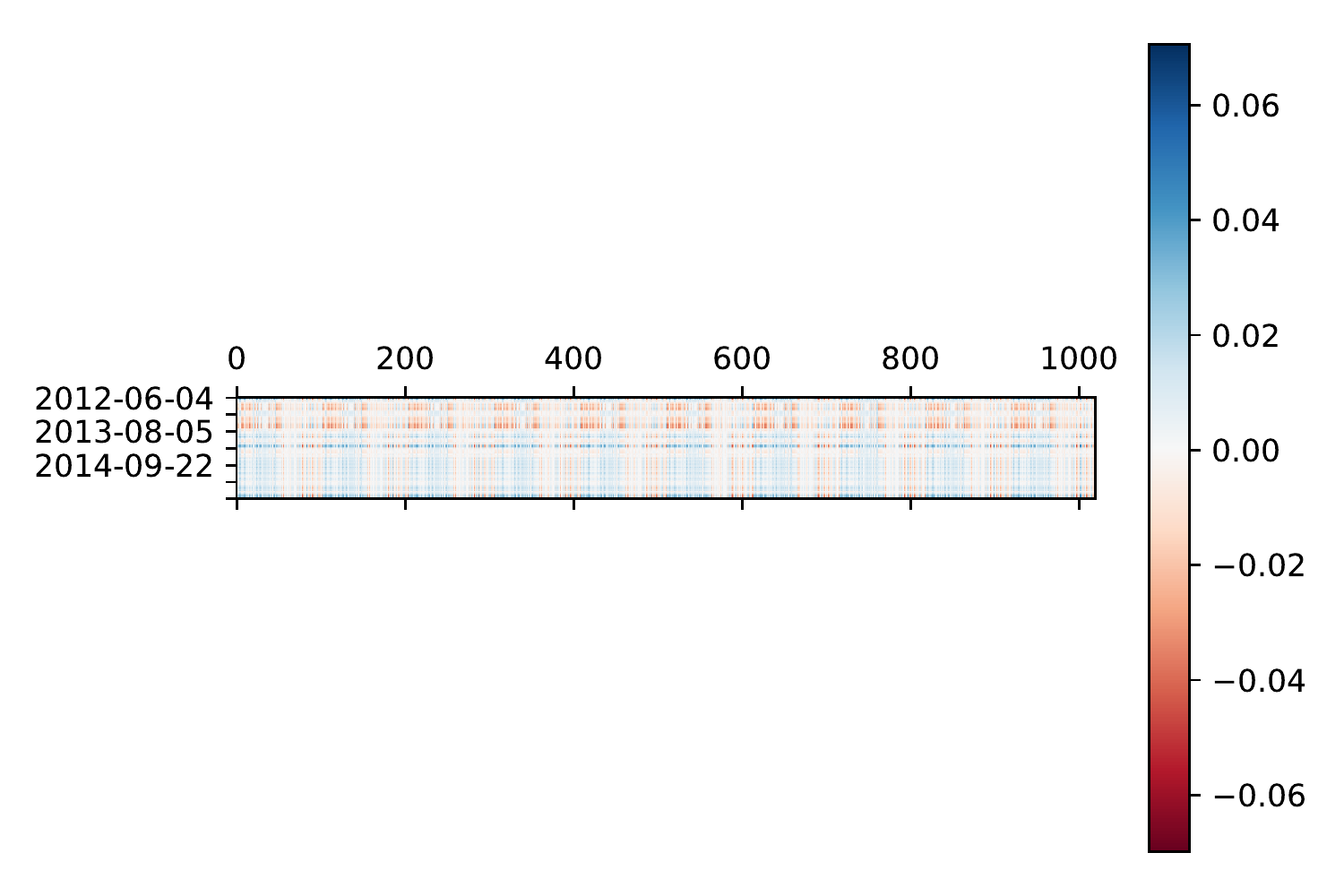}
\caption{FB}
\end{subfigure}
~
\begin{subfigure}[b]{0.3\textwidth}
\centering
\includegraphics[width=\textwidth]{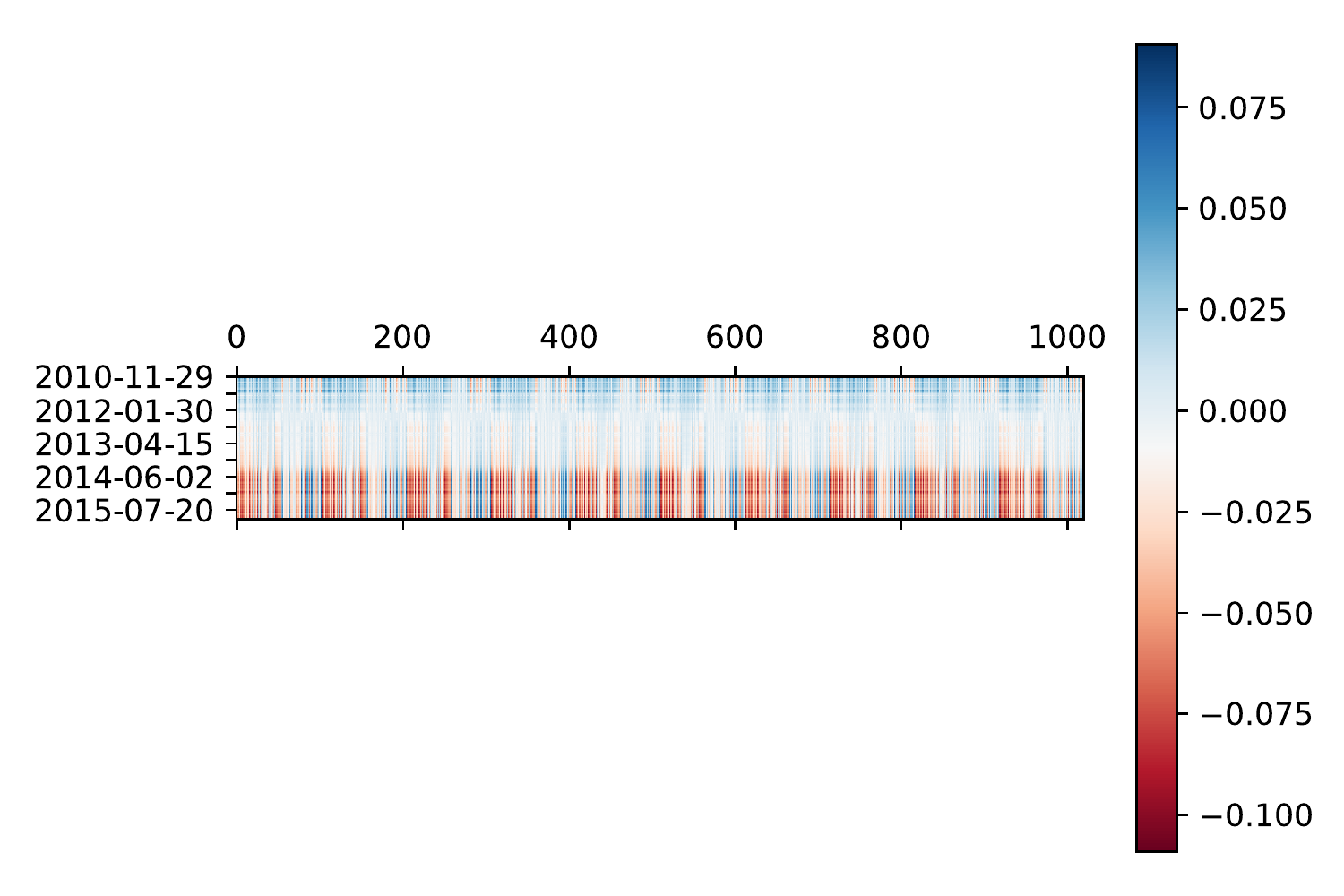}
\caption{GM}
\end{subfigure}
~
\begin{subfigure}[b]{0.3\textwidth}
\centering
\includegraphics[width=\textwidth]{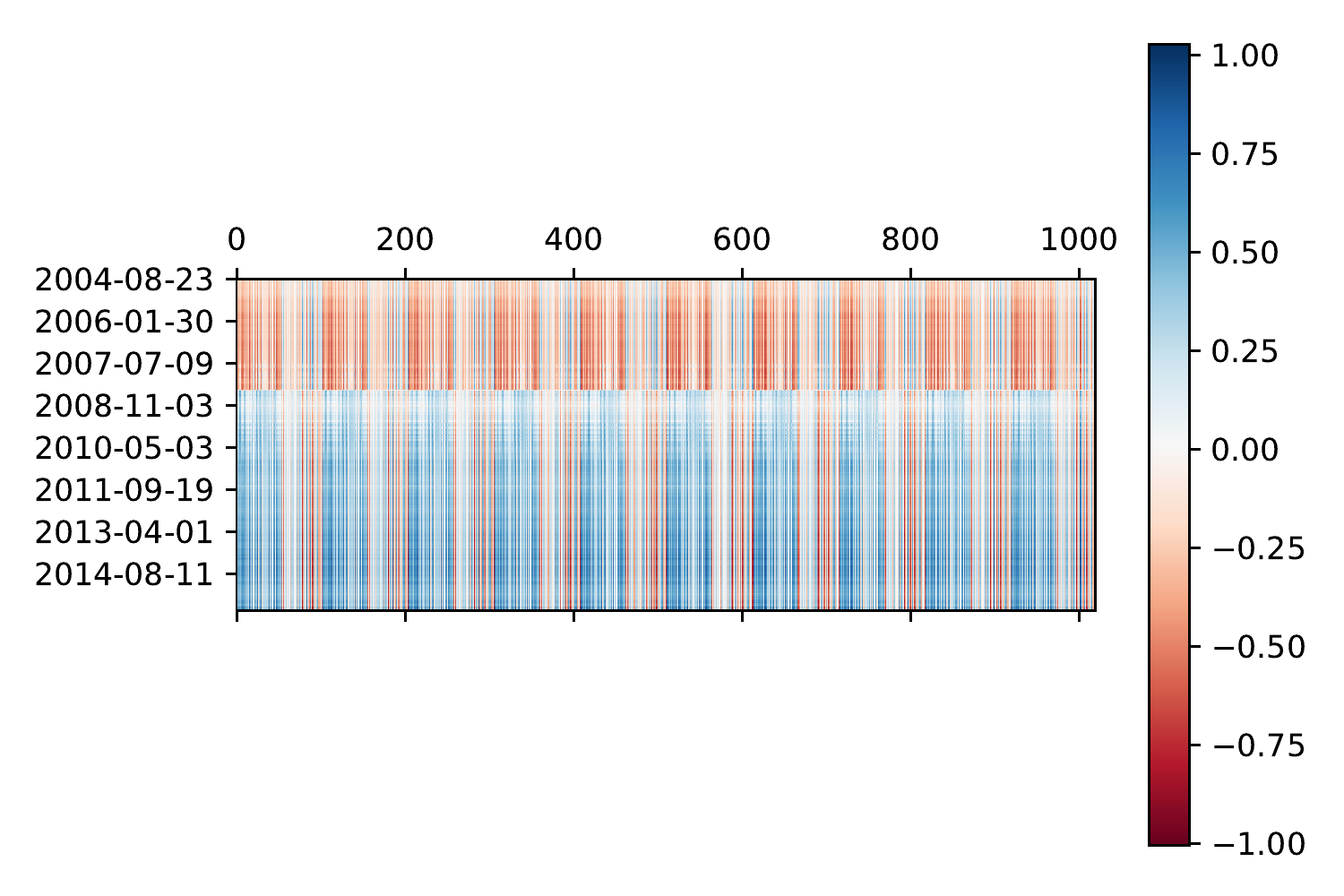}
\caption{GOOGL}
\end{subfigure}
~
\begin{subfigure}[b]{0.3\textwidth}
\centering
\includegraphics[width=\textwidth]{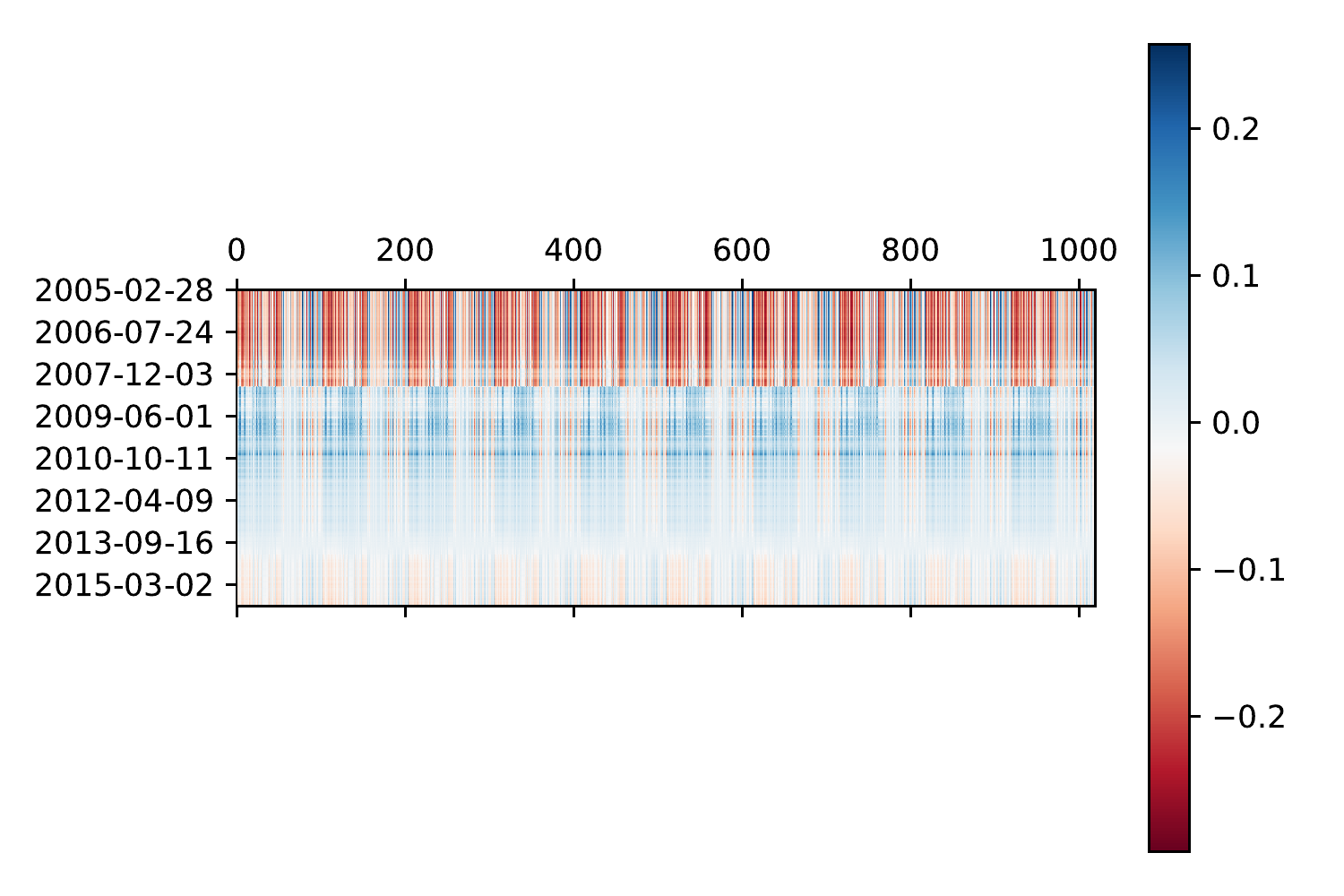}
\caption{HSBC}
\end{subfigure}
~
\begin{subfigure}[b]{0.3\textwidth}
\centering
\includegraphics[width=\textwidth]{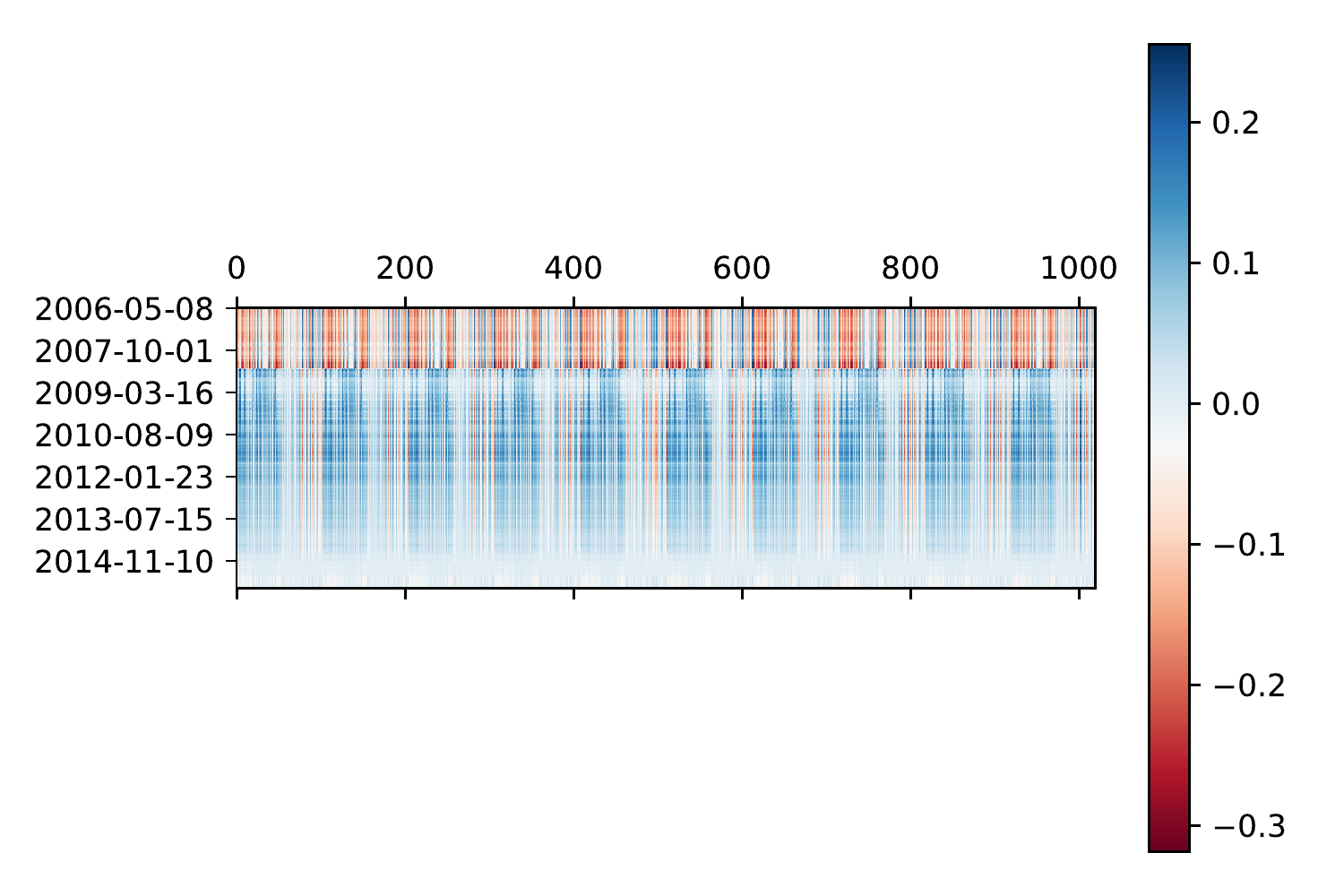}
\caption{IEO}
\end{subfigure}
~
\begin{subfigure}[b]{0.3\textwidth}
\centering
\includegraphics[width=\textwidth]{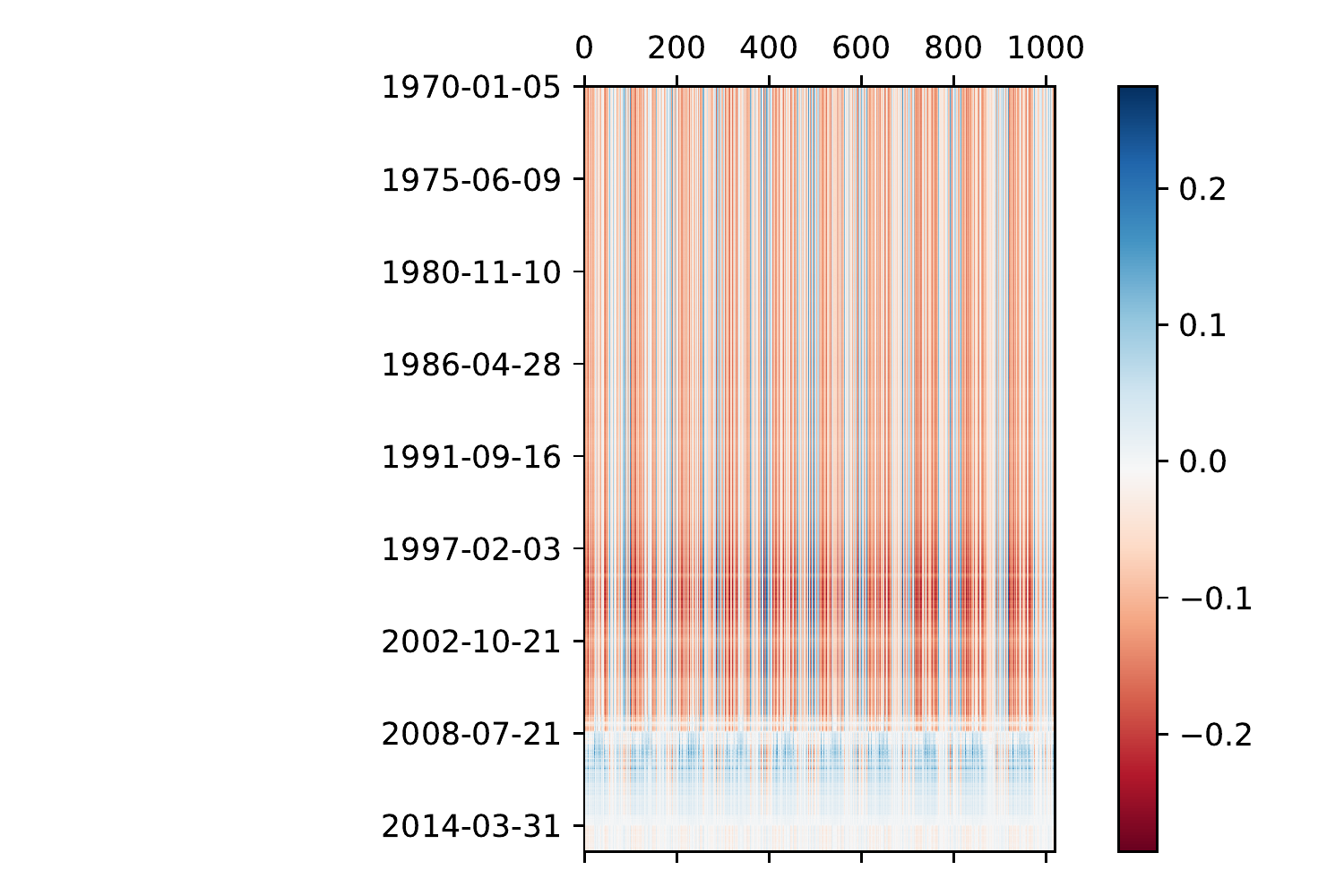}
\caption{JPM}
\end{subfigure}
~
\begin{subfigure}[b]{0.3\textwidth}
\centering
\includegraphics[width=\textwidth]{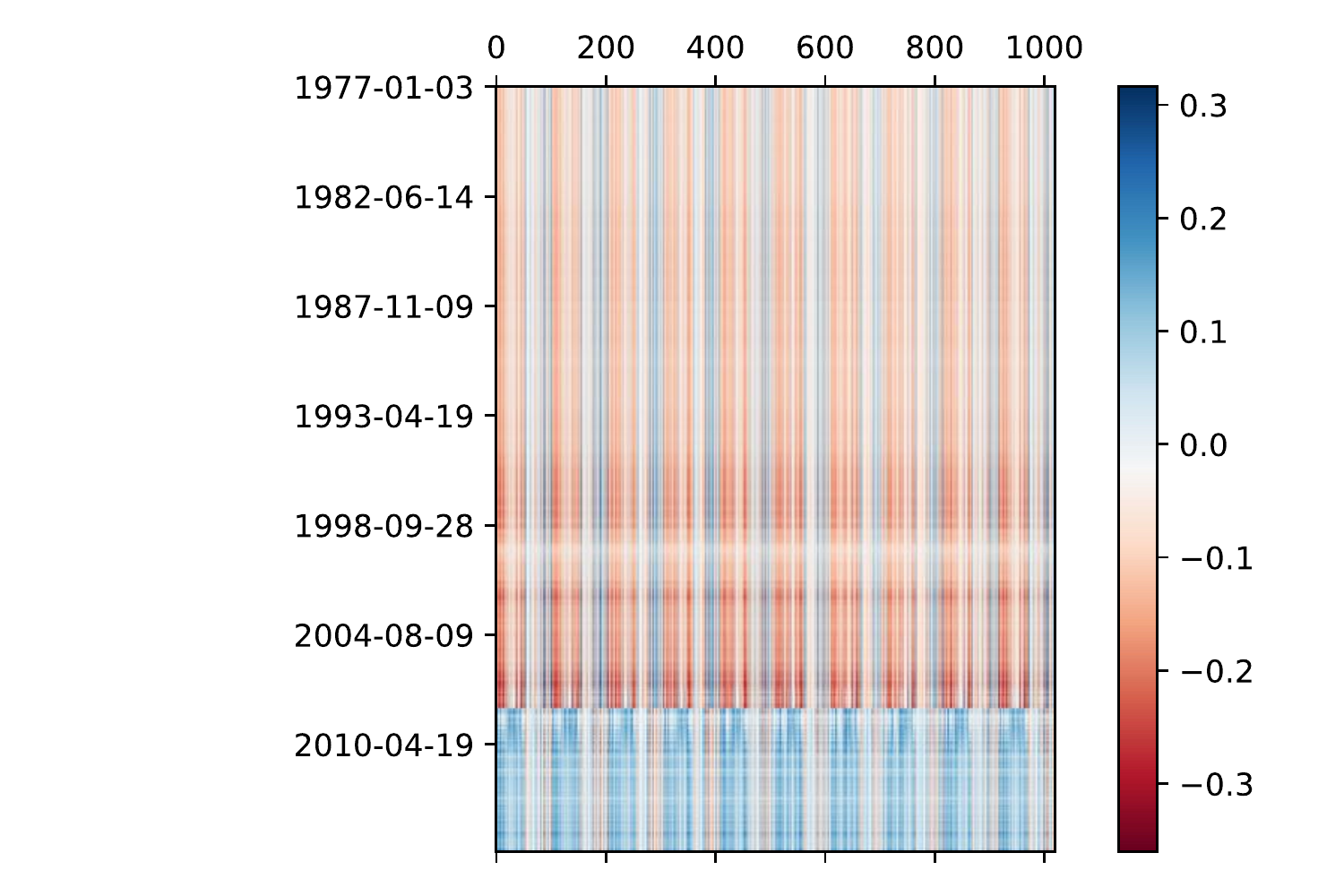}
\caption{LMT}
\end{subfigure}
~
\begin{subfigure}[b]{0.3\textwidth}
\centering
\includegraphics[width=\textwidth]{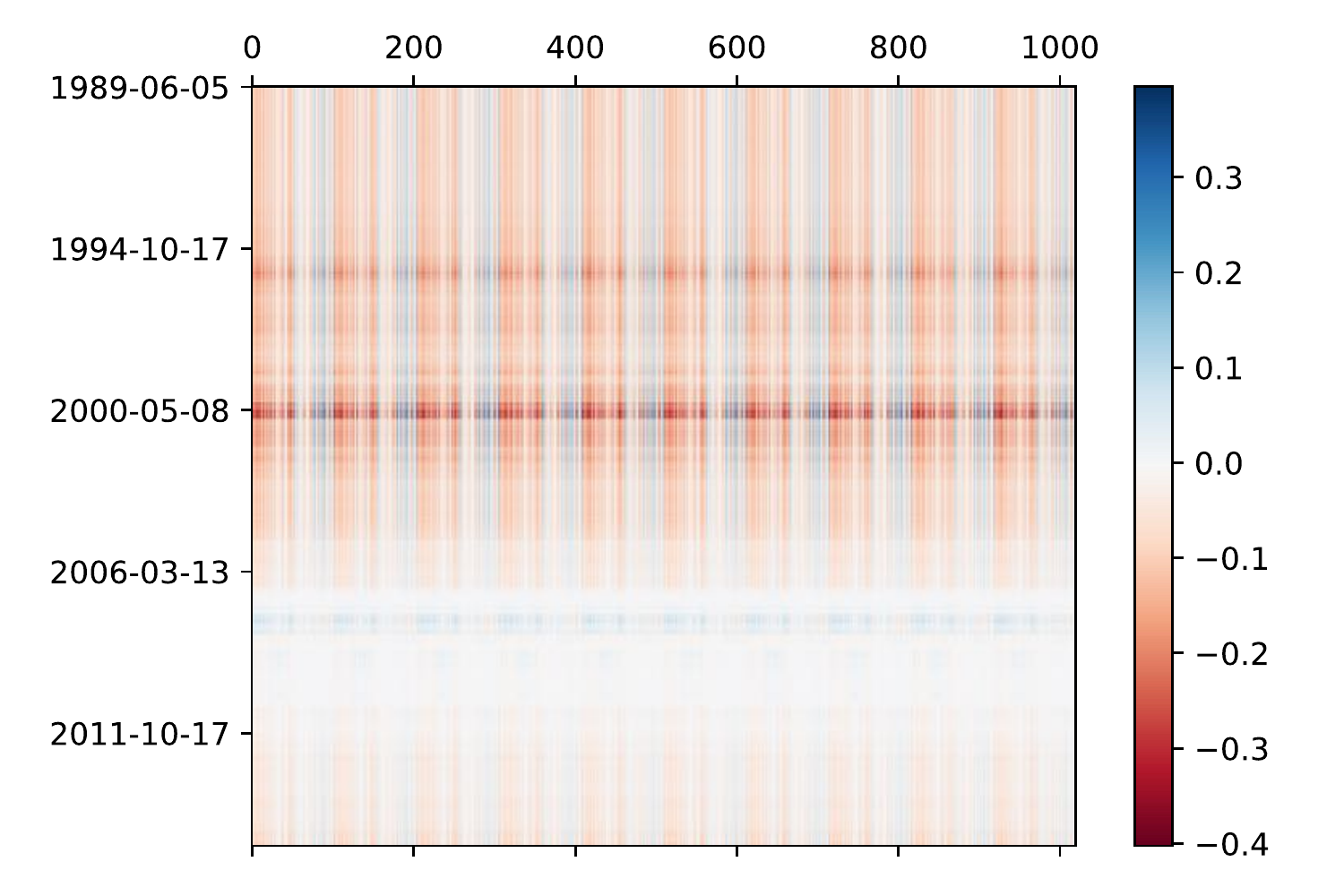}
\caption{MU}
\end{subfigure}
~
\begin{subfigure}[b]{0.3\textwidth}
\centering
\includegraphics[width=\textwidth]{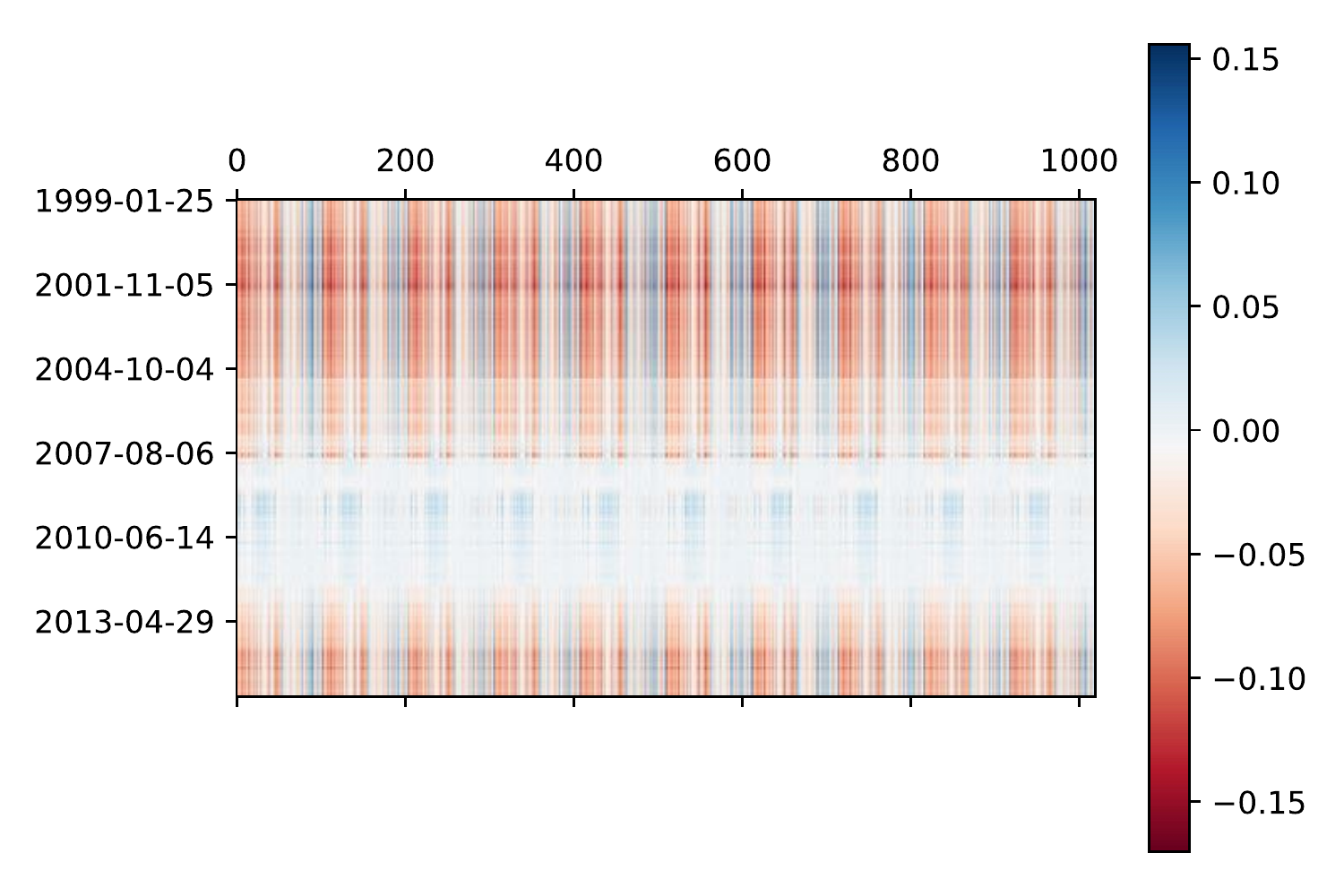}
\caption{NVDA}
\end{subfigure}
\end{figure*}

\begin{figure*}[htp]
\ContinuedFloat
\centering
\begin{subfigure}[b]{0.3\textwidth}
\centering
\includegraphics[width=\textwidth]{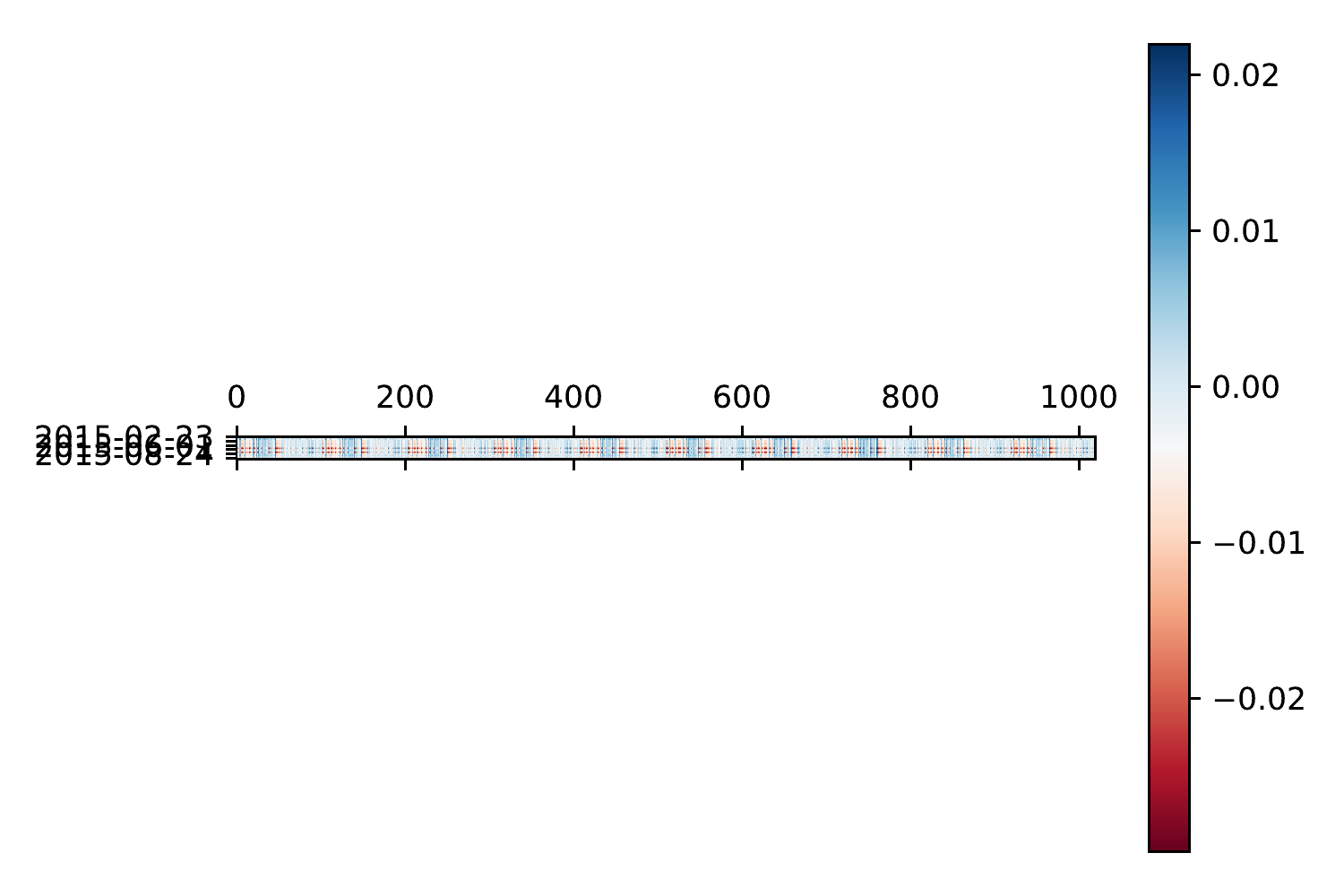}
\caption{OA}
\end{subfigure}
~\begin{subfigure}[b]{0.3\textwidth}
\centering
\includegraphics[width=\textwidth]{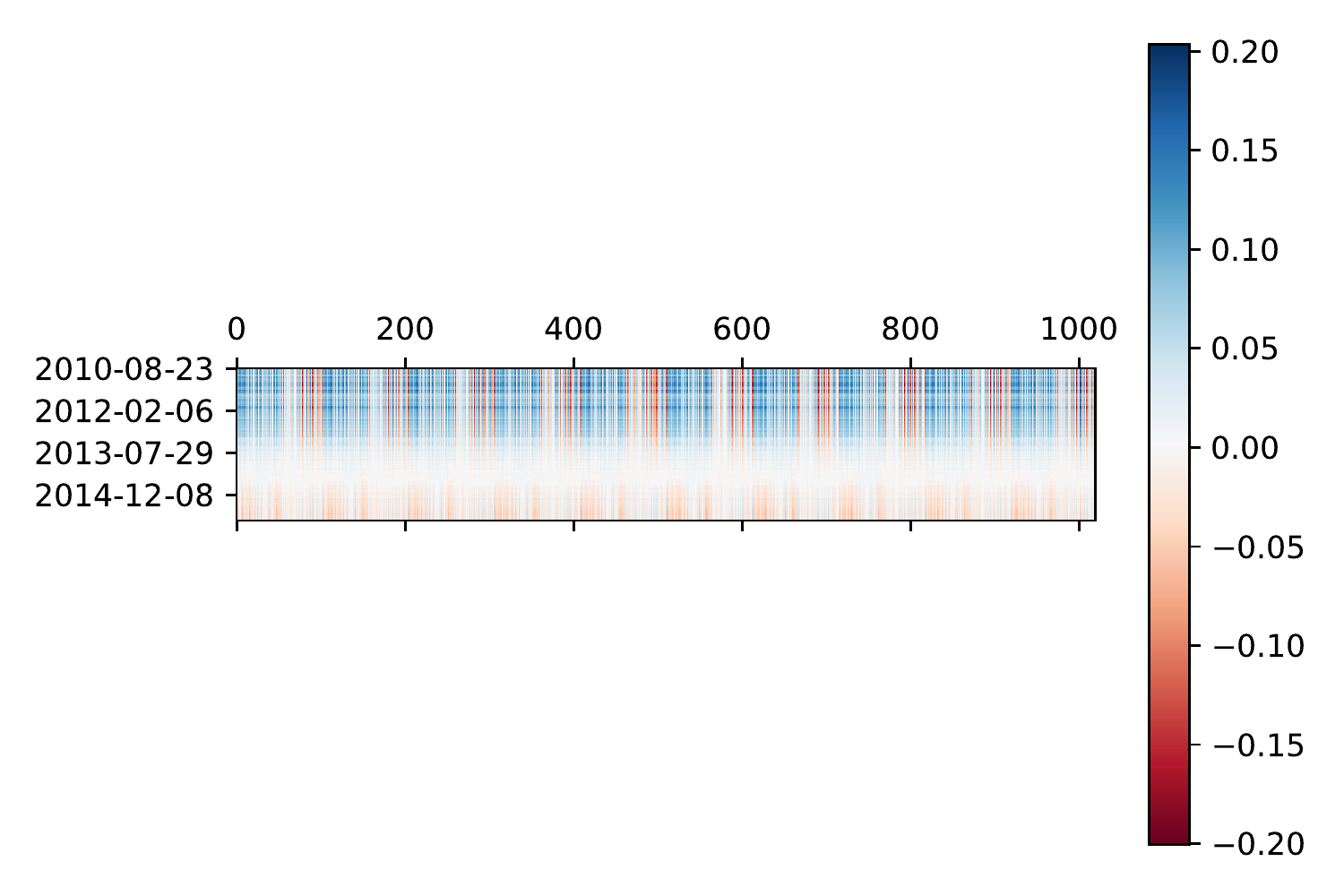}
\caption{RDS-A}
\end{subfigure}
~
\begin{subfigure}[b]{0.3\textwidth}
\centering
\includegraphics[width=\textwidth]{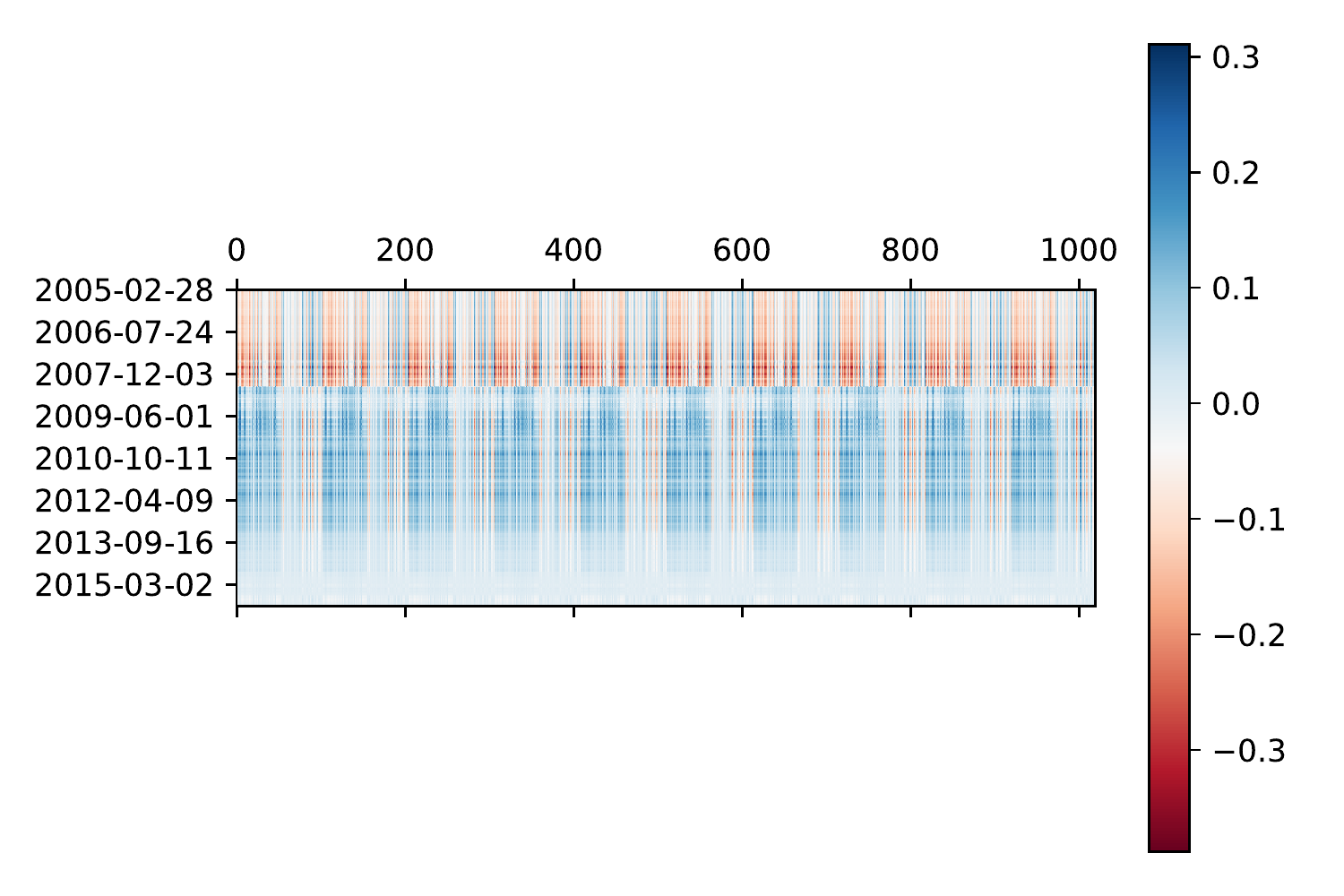}
\caption{SNP}
\end{subfigure}
~
\begin{subfigure}[b]{0.3\textwidth}
\centering
\includegraphics[width=\textwidth]{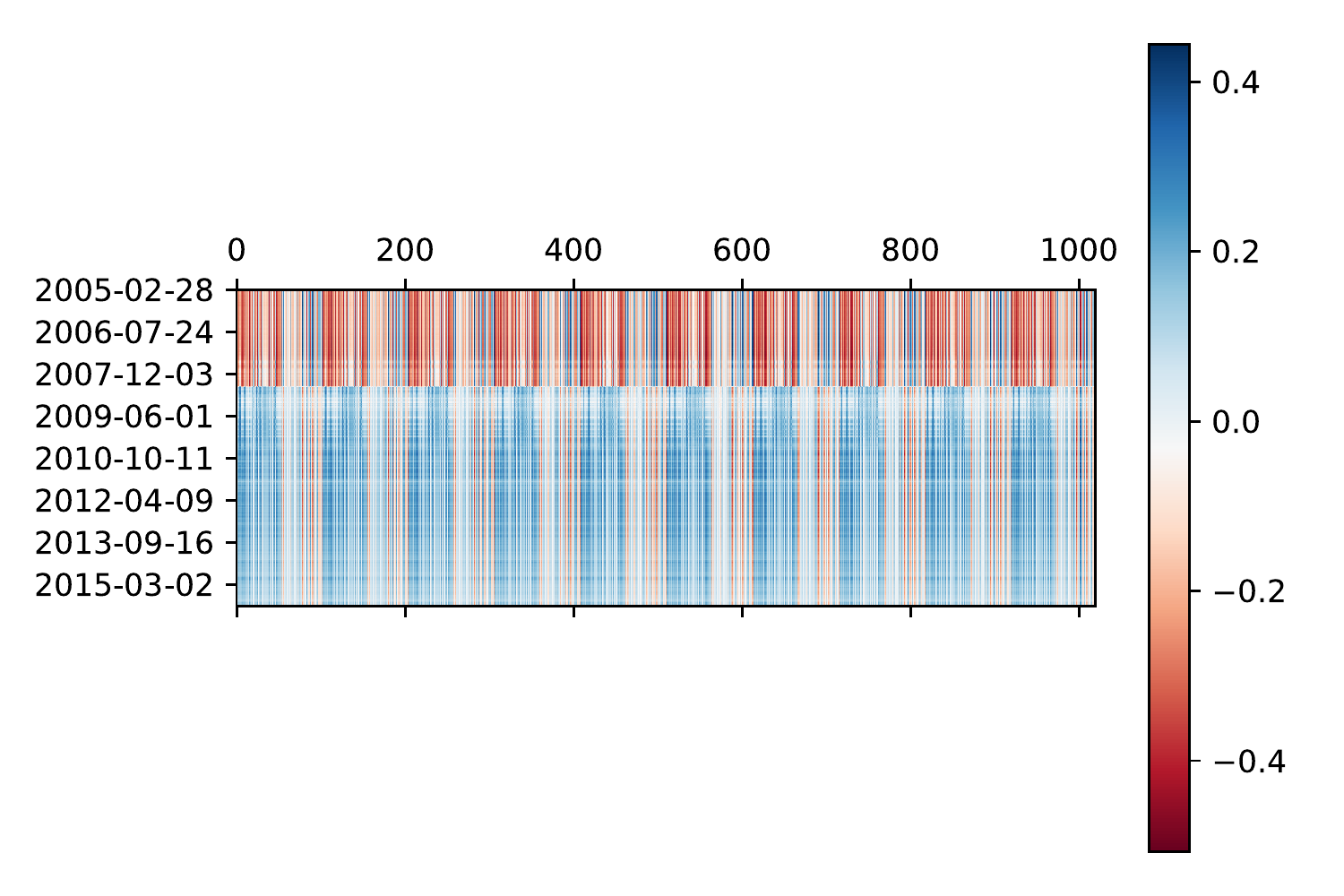}
\caption{SPY}
\end{subfigure}
~
\begin{subfigure}[b]{0.3\textwidth}
\centering
\includegraphics[width=\textwidth]{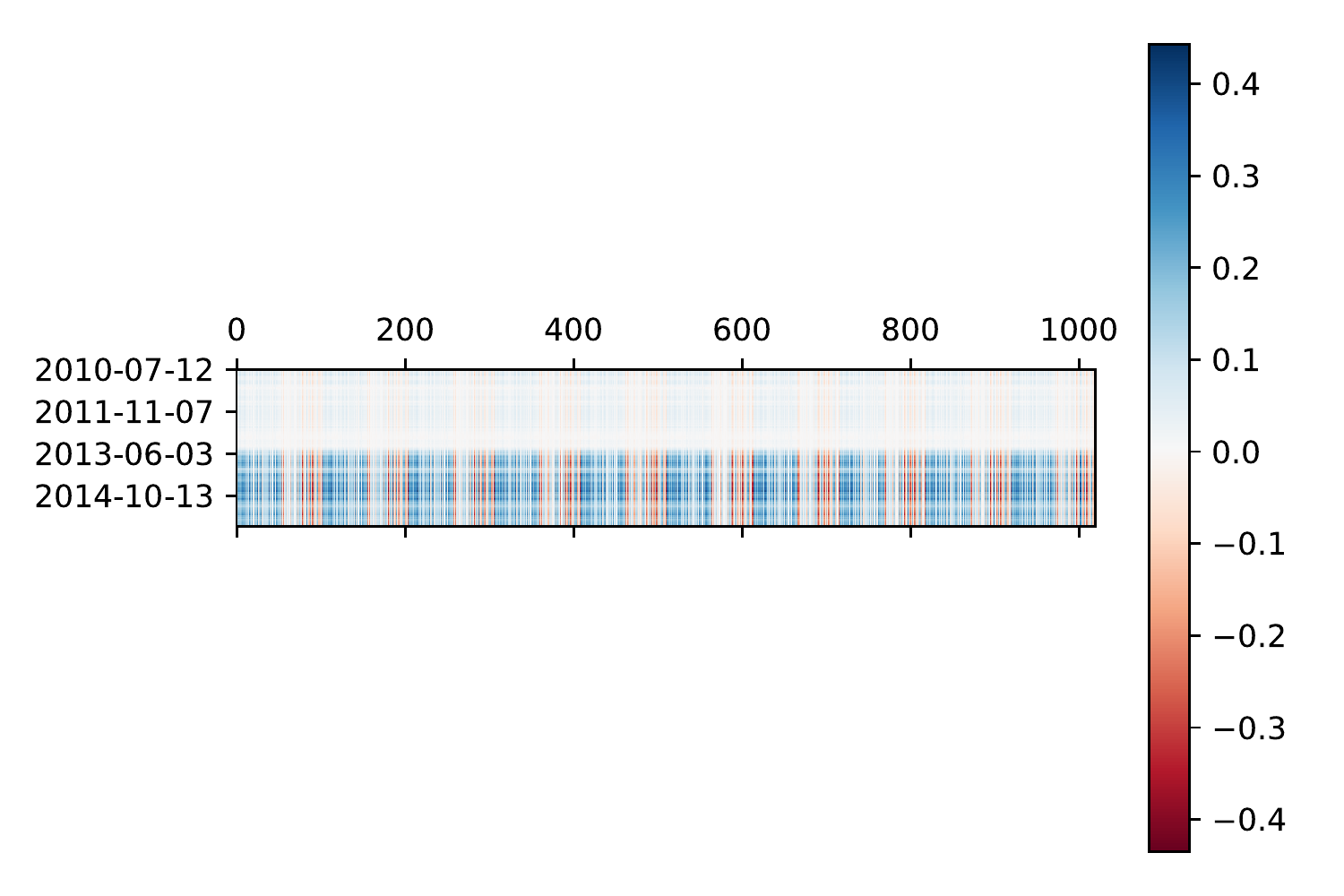}
\caption{TSLA}
\end{subfigure}
~
\begin{subfigure}[b]{0.3\textwidth}
\centering
\includegraphics[width=\textwidth]{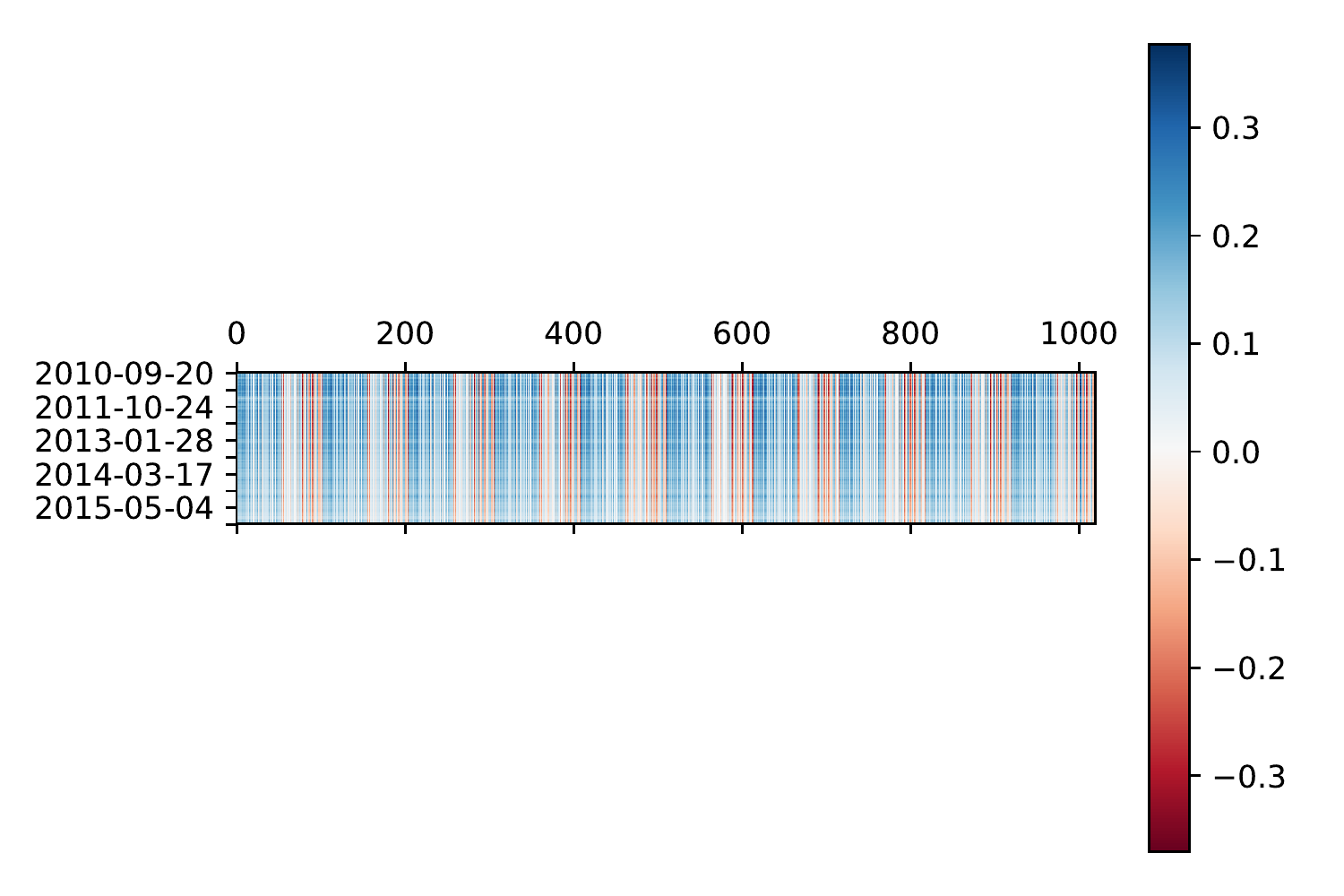}
\caption{VOO}
\end{subfigure}
~
\begin{subfigure}[b]{0.3\textwidth}
\centering
\includegraphics[width=\textwidth]{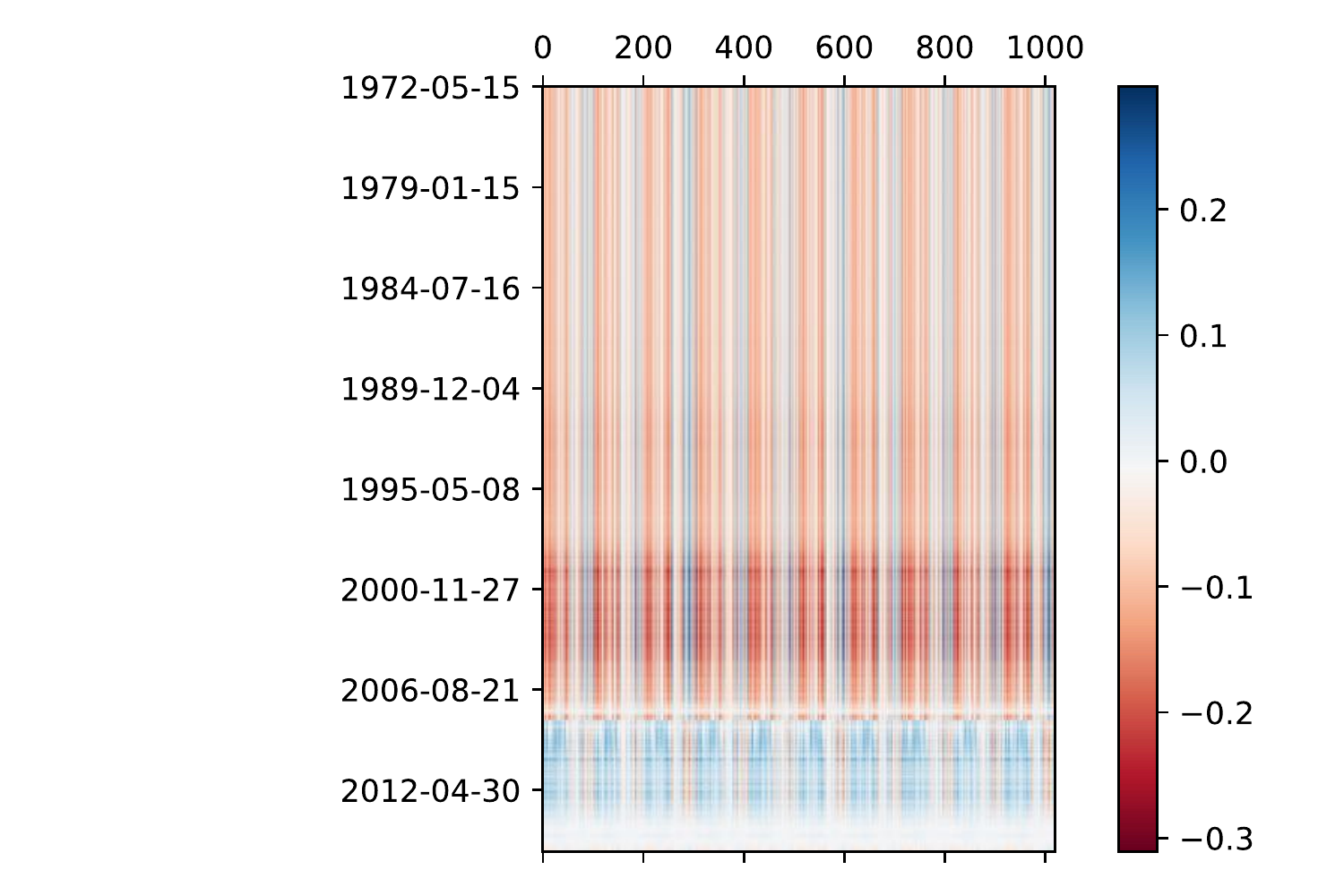}
\caption{WMT}
\end{subfigure}
~
\begin{subfigure}[b]{0.3\textwidth}
\centering
\includegraphics[width=\textwidth]{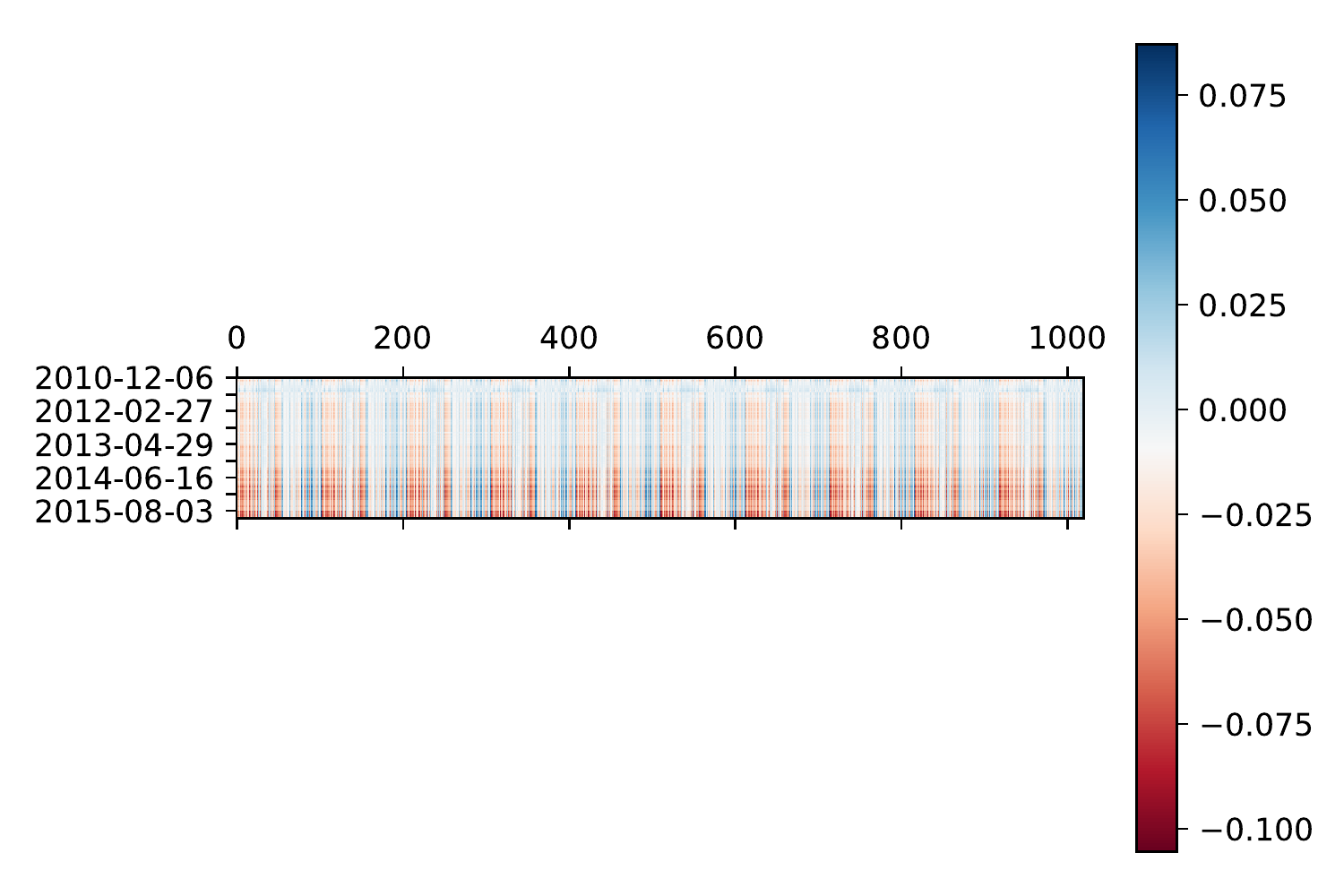}
\caption{XIV}
\end{subfigure}
~
\begin{subfigure}[b]{0.3\textwidth}
\centering
\includegraphics[width=\textwidth]{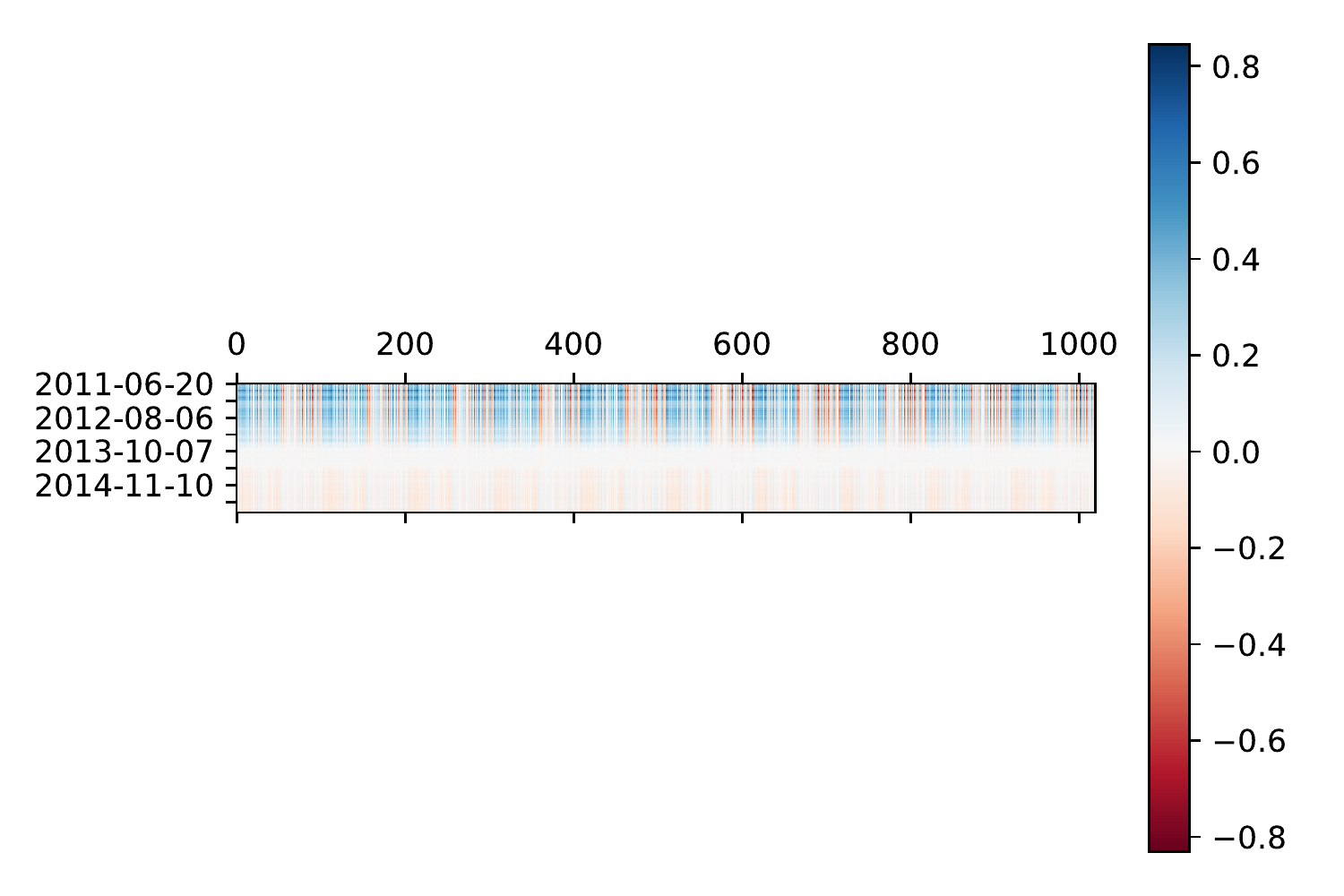}
\caption{YANG}
\end{subfigure}
\caption{Visualizations of the models fit to each security over time. The vertical axis indexes time, while the horizontal axis indexes features. Features are arranged according to each day of a two-week time span for each prediction, with alternating news and stock histories in each day. \label{fig:finance_params}}
\end{figure*}

\subsection{Cancer}
By aggregating feature importance across each patient, we determine which feature is the most predictive of the sample labels for each patient (Table~\ref{tab:cancer_variables}). We see that there are 9 different clusters, which are not accurately estimated by mixture models due to the sample heterogeneity within each mixture.

\begin{table}[htb]
\centering
\caption{Most predictive features, aggregated over each patient. \label{tab:cancer_variables}}
\begin{tabular}{l | r }
Patient ID & Molecular Weight \\
\midrule
0 & 163.627 \\
1 & 191.834 \\
2 & 566.833  \\
3 & 177.541 \\
4 & 234.167 \\
5 & 163.125  \\
6 & 191.834 \\
7 & 177.541 \\
8 & 113.083 \\
9 & 566.833 \\
10 & 163.125 \\
11 & 231.958 \\
12 & 234.666 \\
13 & 191.834 \\
14 & 163.627 \\
15 & 177.541 \\
16 & 163.627 \\
\bottomrule
\end{tabular}
\end{table}

\subsection{Election}
As described in the main text, we fit personalized models to a dataset of election results. 
The patterns describe in the main text for the 2012 presidential election are replicated in the election of 2008 (Fig.~\ref{supfig:counties_08}). 

\begin{figure*}[hbtp]
\centering
\begin{subfigure}[t]{0.4\textwidth}
\centering
\includegraphics[width=\textwidth]{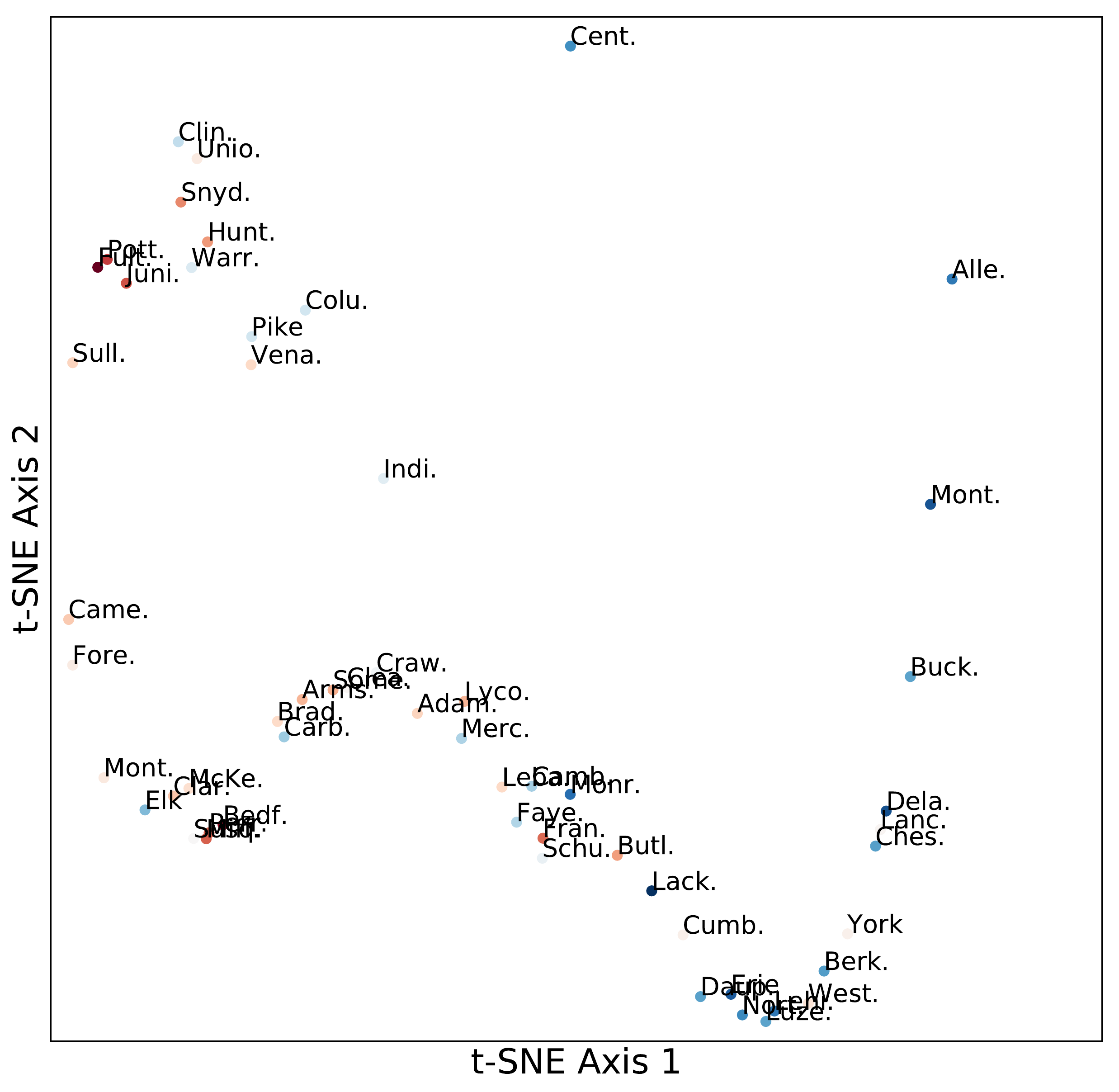}
\caption{Demographics, U\label{supfig:counties_u_08}}
\end{subfigure}
~
\begin{subfigure}[t]{0.4\textwidth}
\centering
\includegraphics[width=\textwidth]{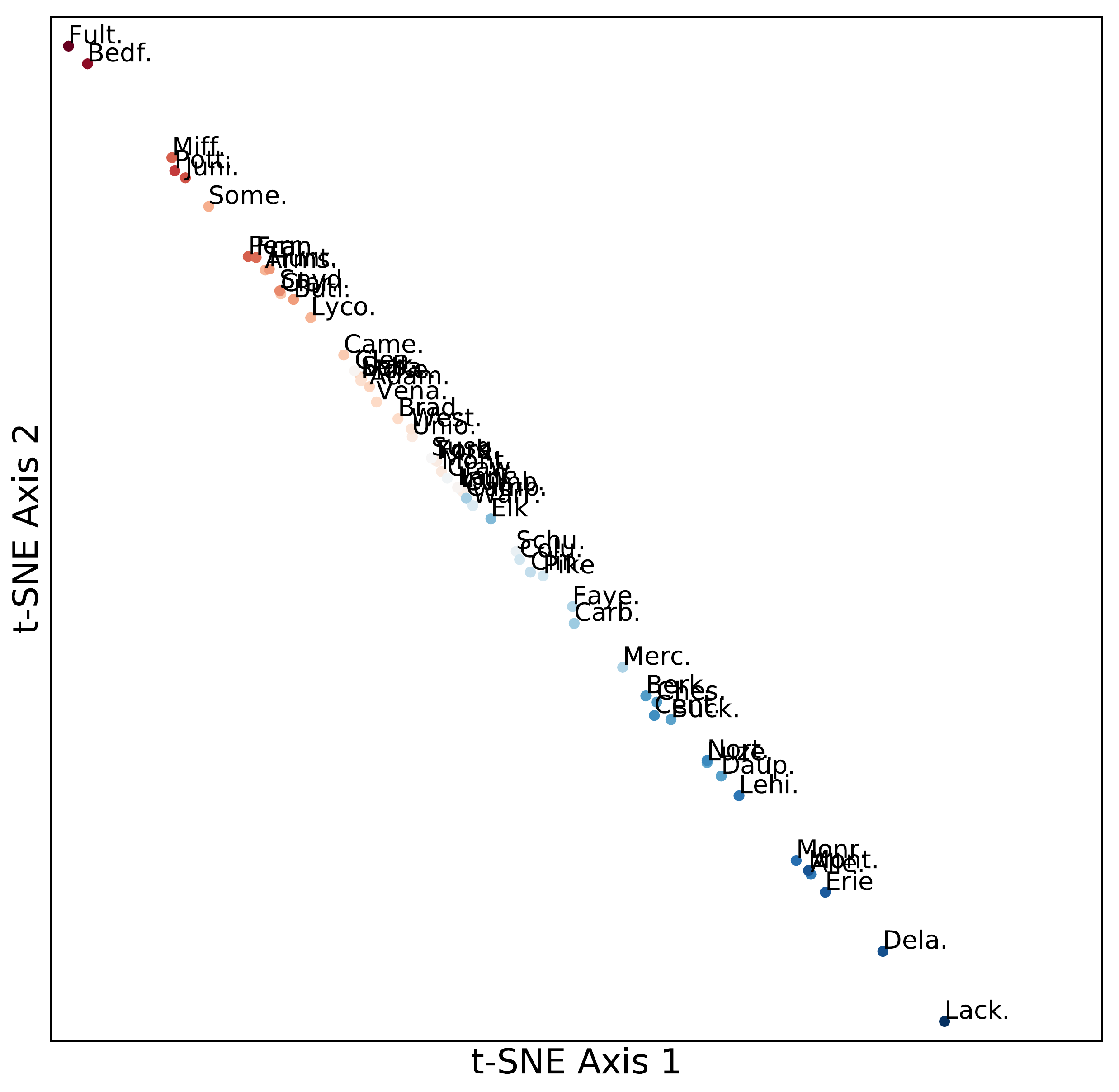}
\caption{Voting Outcome, Y\label{supfig:counties_y_08}}
\end{subfigure}
~
\begin{subfigure}[t]{0.4\textwidth}
\centering
\includegraphics[width=\textwidth]{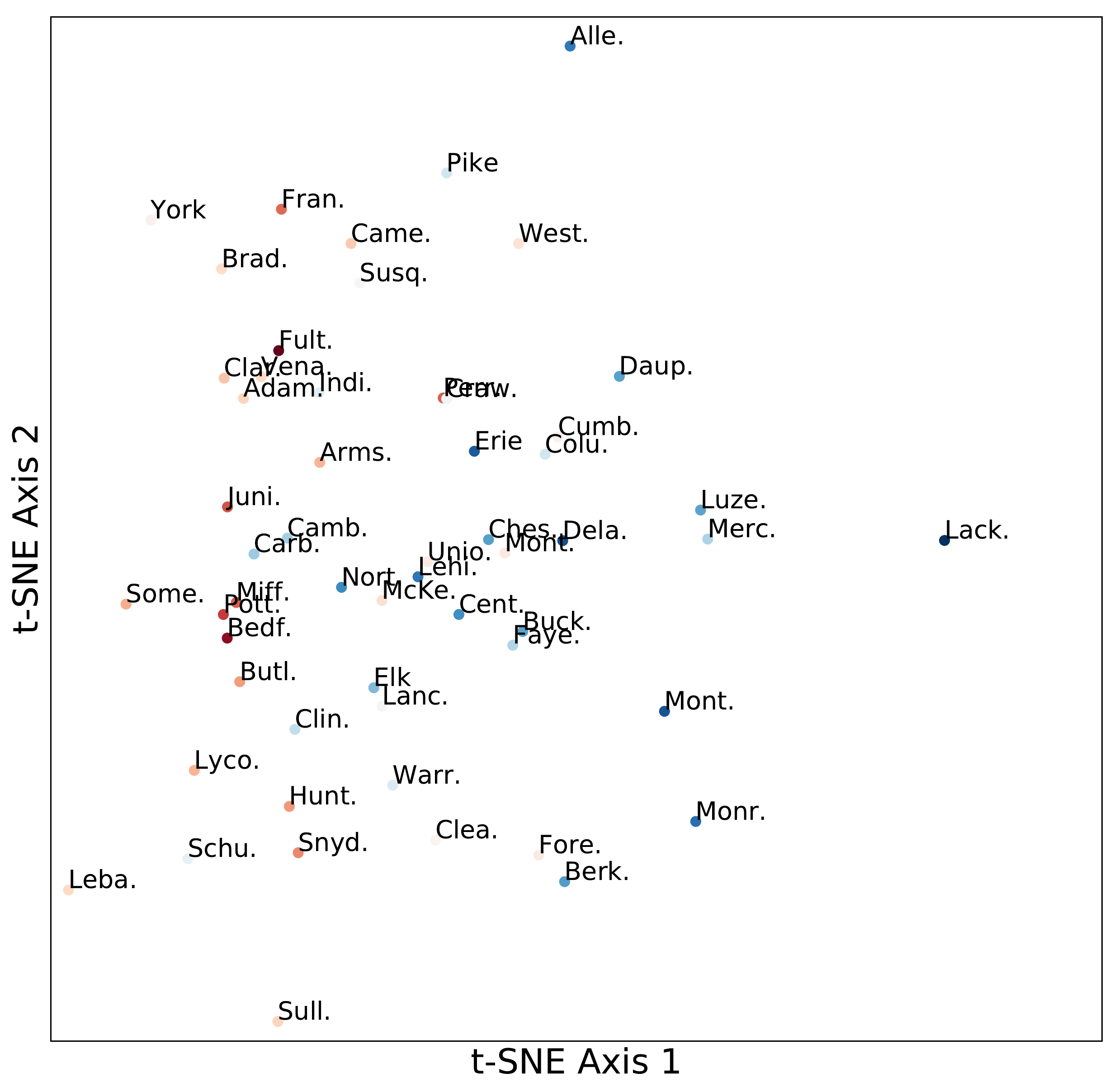}
\caption{Personalized Estimation, $\wh{Z}$\label{supfig:counties_z_08}}
\end{subfigure}
~
\begin{subfigure}[t]{0.4\textwidth}
\centering
\includegraphics[width=\textwidth]{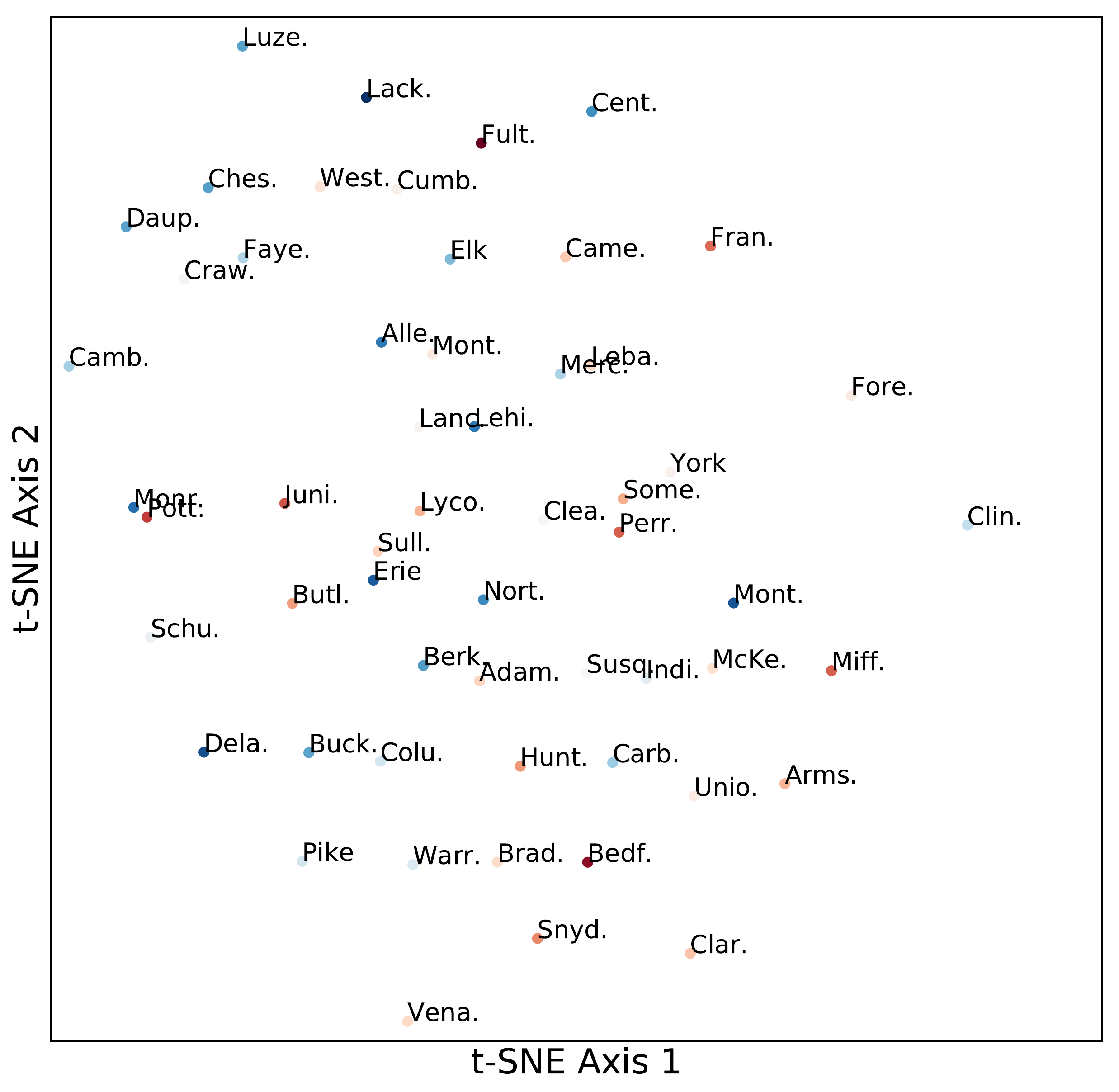}
\caption{Varying-Coefficients\label{supfig:counties_vc_08}}
\end{subfigure}
~
\begin{subfigure}[t]{0.4\textwidth}
\centering
\includegraphics[width=\textwidth]{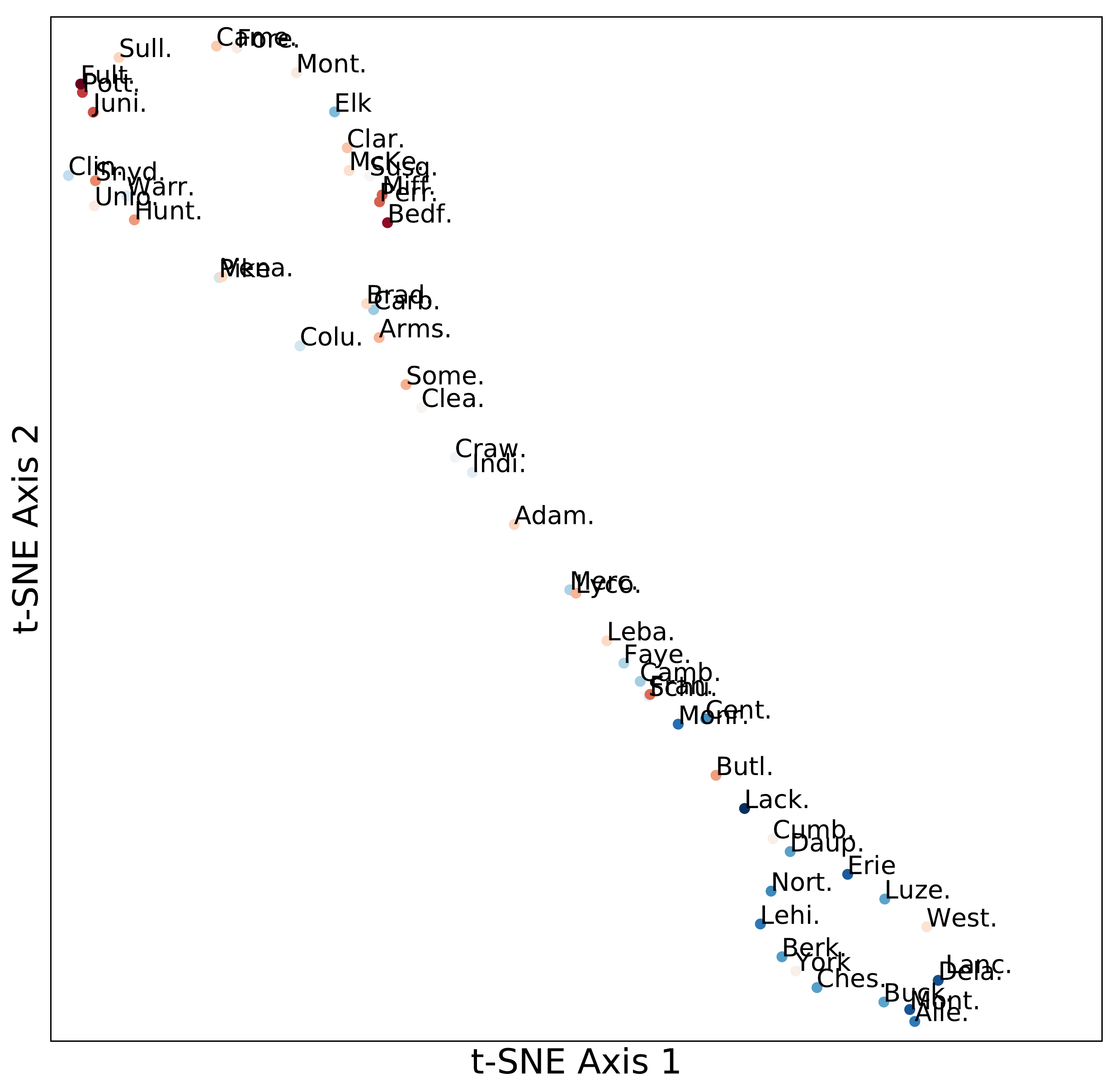}
\caption{Concatenated\label{supfig:counties_cat_08}}
\end{subfigure}
\caption{Embeddings of Pennsylvania counties.
Each point represents the t-SNE embedding of a representation of a county, with color gradient corresponding to the 2008 election result (red for Republican candidate, blue for Democratic candidate).
\label{supfig:counties_08}}
\end{figure*}

\begin{table*}[htb]
\centering
\scriptsize
\caption{Abbrevations of Counties \label{tab:county_abbrevs}}
\begin{tabular}{l|r}
Abbreviation & Full Name \\
\midrule
Alle. & Allegheny \\
Mont. & Montour \\
York & York \\
Faye. & Fayette \\
Adam. & Adams \\
Unio. & Union \\
Carb. & Carbon \\
Fore. & Forest \\
Perr. & Perry \\
Came. & Cameron \\
Pott. & Potter \\
Clin. & Clinton \\
Daup. & Dauphin \\
Merc. & Mercer \\
Fult. & Fulton \\
Cent. & Centre \\
Dela. & Delaware \\
Mont. & Montgomery \\
Warr. & Warren \\
Pike & Pike \\
Lehi. & Lehigh \\
Schu. & Schuylkill \\
Miff. & Mifflin \\
Susq. & Susquehanna \\
Juni. & Juniata \\
Bedf. & Bedford \\
Luze. & Luzerne \\
Brad. & Bradford \\
Lack. & Lackawanna \\
Some. & Somerset \\
Elk & Elk \\
Butl. & Butler \\
Erie & Erie \\
Lyco. & Lycoming \\
Sull. & Sullivan \\
Indi. & Indiana \\
Ches. & Chester \\
Monr. & Monroe \\
Nort. & Northampton \\
Craw. & Crawford \\
Arms. & Armstrong \\
Leba. & Lebanon \\
Cumb. & Cumberland \\
Camb. & Cambria \\
Hunt. & Huntingdon \\
West. & Westmoreland \\
Colu. & Columbia \\
Buck. & Bucks \\
Berk. & Berks \\
Clar. & Clarion \\
Vena. & Venango \\
Lanc. & Lancaster \\
Snyd. & Snyder \\
Fran. & Franklin \\
McKe. & McKean \\
Clea. & Clearfield \\
\bottomrule
\end{tabular}
\end{table*}

\end{document}